\documentclass{article}
\usepackage[margin=1in]{geometry}
\usepackage{mathtools}
\usepackage{amssymb}
\usepackage{amsthm}
\usepackage[round]{natbib}
\usepackage{bm,mathrsfs}
\usepackage{soul}
\usepackage{microtype}
\usepackage{xcolor}
\usepackage{graphicx}
\usepackage{enumitem,dsfont}
\usepackage[colorlinks=true,linkcolor=blue,citecolor=blue,urlcolor=blue]{hyperref}

\theoremstyle{plain}
\newtheorem{theorem}{Theorem}[section]
\newtheorem{proposition}{Proposition}[section]
\newtheorem{lemma}{Lemma}[section]
\newtheorem{corollary}{Corollary}[section]
\newtheorem{assumption}{Assumption}[section]

\theoremstyle{definition}

\theoremstyle{remark}
\newtheorem*{remark}{Remark}

\NewDocumentCommand{\eqs}{s m}{
  \IfBooleanTF{#1}{
    \begin{equation*}
    \begin{aligned}
    #2
    \end{aligned}
    \end{equation*}
  }{
    \begin{equation}
    \begin{aligned}
    #2
    \end{aligned}
    \end{equation}
  }
}

\DeclareMathOperator*{\argmin}{arg\,min}

\newcommand{\Cov}{\operatorname{Cov}}

\newcommand{\Var}{\operatorname{Var}}

\newcommand{\1}{\mathbf{1}}
\newcommand{\R}{\mathbb{R}}
\newcommand{\E}{\mathbb{E}}
\newcommand{\F}{\mathcal{F}}

\newcommand{\Prob}{\mathbb{P}}

\newcommand{\lin}{\mathrm{lin}}

\newcommand{\N}{\mathcal{N}}

\newcommand{\deq}{\stackrel{\mathrm{d}}{=}}

\newcommand{\pto}{\stackrel{\mathrm{p}}{\to}}
\newcommand{\longpto}{\xrightarrow{\mathrm{p}}}
\newcommand{\asto}{\stackrel{\mathrm{a.s.}}{\to}}

\newcommand{\Y}{\mathcal{Y}}

\newcommand{\calL}{\mathcal{L}}

\begin{document}

\title{Limit Theory for Online Expectile Estimation under Short and Long Range Dependence}
\author{
    Younghun Lee\textsuperscript{1}, Jake Zummo\textsuperscript{1}, and Wei Biao Wu\textsuperscript{1}\\[0.5em]
    \textit{\textsuperscript{1}Department of Statistics, University of Chicago}
}
\date{October 11, 2026}

\maketitle

\begin{abstract}
Expectiles are tail-sensitive distributional summaries with important applications in financial risk measurement and forecasting, yet statistical theory for their online estimation from dependent data remains limited. We study a one-pass, stochastic-gradient algorithm with constant learning rate $\gamma$ for estimating the marginal expectile of a stationary causal process. The recursion admits a unique stationary causal solution and geometrically forgets its initialization. Under short-range dependence, its last iterate is asymptotically Gaussian at the $\sqrt{\gamma}$ scale, with variance governed by the population curvature and the long-run variance of the expectile score. For long-memory linear processes the scale changes to $\gamma^{\beta-1/2}\ell(1/\gamma)$, with a Gaussian limit arising from a fractional Brownian motion integral. We further derive the joint long-memory limit for multiple expectile levels and characterize its limiting covariance structure. Various numerical experiments corroborate the predicted Gaussian approximations and regime-specific fluctuation scales.
\end{abstract}

\section{Introduction}

Expectiles are an asymmetrically weighted analogue of the mean \citep{Newey1987Asymmetric,ren2026asymptotics}.
The $\tau$-expectile of a univariate distribution minimizes the expected asymmetric squared loss centered at the candidate expectile: squared deviations above and below it receive weights $\tau$ and $1-\tau$, respectively.
Because this weighting gives greater influence to deviations on one side of the distribution, expectiles are used in risk measurement and tail forecasting \citep{Taylor2008Estimating,kuan2009assessing,Bellini2014Generalized,ren2022financial,giacometti2021tail}.
The asymptotic theory for batch (empirical) expectile estimators, an instance of $M$-estimation, includes weak convergence of expectile processes and inference for extreme expectiles, that is, expectiles at levels near $0$ or $1$ \citep{Holzmann2016Expectile,Zwingmann2020Weak,Daouia2020Tail}. 
More recently, \citet{ren2026asymptotics} developed asymptotic theory for batch expectile estimators under short- and long-range dependence.
In many dependent-data applications, observations arrive sequentially, motivating the problem of one-pass, online estimation of the expectile of a stationary process.

We study the small-step distribution of the unaveraged last iterate of constant-step stochastic gradient descent (SGD) for expectile estimation under both short- and long-memory dependence in the data. In the limits below, the learning rate $\gamma\downarrow0$ while the iteration $n=n(\gamma)\to\infty$ sufficiently fast to remove the initialization effect (that is, after burn-in).

We quantify observation dependence through the Bernoulli-shift framework of \citet{wu2005nonlinear}. Score partial sums have order $n^{1/2}$ under short memory and the larger order $n^{3/2-\beta}\ell(n)$ for the linear long-memory inputs considered here \citep{Ho1997Limit,Wu2003Empirical}. The central question of this paper is how these memory properties propagate through the constant-step expectile recursion and determine the limiting distribution of its last iterate.

\paragraph{Related work.}
\emph{Constant-step stochastic approximation.}
Asymptotic normality of stochastic approximation under diminishing learning rates dates to \citet{Chung1954On}, while the stationary-distribution viewpoint for constant learning rates originates with \citet{pflug1986stochastic}. Modern analyses characterize convergence to the invariant measure, bias, and concentration under smooth stochastic gradients \citep{dieuleveut2020bridging,Chen2022Stationary,Merad2025Convergence}, and \citet{Goldreich2025Asymptotic} place the constant-step regime within a broader theory of learning-rate schedules. Building on iterated random functions and geometric moment contraction \citep{Wu2000A,Wu2004Limit}, \citet{li2026stochastic} obtain coupling and quenched limit theory for averaged constant-step SGD. \citet{Zhang2026Sharp} derive moment and concentration bounds for decaying-step SGD. In both, the incoming observations are independent and the temporal dependence is generated by the recursion itself. \citet{wei2025gaussian} analyze the nonstationary last iterate directly under joint limits in the iteration and the learning rate. Their setting uses independent observations, whereas we treat serially dependent Bernoulli-shift data.

\emph{Dependent data.}
Distributional results for stochastic approximation under dependent sampling have focused on Markovian noise, linear recursions, or averaged estimators \citep{Li2023Online,Huo2026Bias,Samsonov2025Statistical}, with related analyses under $\phi$-mixing and more general dependence \citep{Liu2023Statistical,Shen2026SGD}. Martingale approximations and invariance principles in the Bernoulli-shift framework accommodate nonlinear short-memory inputs \citep{wu2005nonlinear,Wu2011Asymptotic,Wu2004Martingale,Wu2007Strong}, while long memory changes normalizations and limits for empirical processes and $M$-estimators \citep{Koul1997Asymptotic,Hsing2004On}.

\paragraph{Our contributions.}
We establish marginal limits for the last iterate of constant-step expectile SGD and, under long memory, joint limits across expectile levels.
\begin{enumerate}
    \item Under short memory, we establish limit theory for the last iterate of constant-step expectile SGD driven by nonlinear Bernoulli-shift observations. Its limiting variance is $\Omega_g/(2A)$: serial dependence replaces the marginal variance of the score by its long-run variance $\Omega_g$, while $A$ is the population curvature.
    \item Under long memory, we establish marginal limit theory for constant-step expectile SGD driven by a linear process. This limit identifies the change in fluctuation scale, from $\sqrt{\gamma}$ under short memory to $\gamma^{\beta-1/2}\ell(1/\gamma)$ under long memory.
    \item Under long memory, we characterize the joint covariance across expectile levels. It has full rank when the population curvatures are distinct, whereas the batch limit has rank one. Centered cross-level contrasts can retain first-order fluctuations under SGD that cancel in batch estimation.
\end{enumerate}

\paragraph{Notation.}
For a random variable $Z$ and $p\ge1$ we write $\|Z\|_p:=(\E|Z|^p)^{1/p}$, and $\1(A)$ for the indicator function of the event $A$. 
For a function $h$, we write $\|h\|_{\infty}:=\sup _{x \in \mathbb{R}}|h(x)|$.
Convergence in distribution is denoted $\Rightarrow$, convergence in probability $\pto$, and equality in distribution $\deq$. Symbols $o_P$ and $O_P$ are used in the usual sense. The relation $a\lesssim b$ means $a\le Cb$ for a constant $C$ that does not depend on the index, the learning rate, or the horizon, and $a\asymp b$ means $a\lesssim b\lesssim a$; such a constant is written $C$ and may change from one line to the next.

\section{Expectile SGD under Dependence}\label{sec:overview}

We consider a streaming-data setting in which observations
$X_1,X_2,\ldots$ from a stationary Bernoulli-shift process arrive sequentially. Our goal is to estimate the marginal $\tau$-expectile using the last iterate of a one-pass, constant-step stochastic-gradient recursion. Throughout the paper, $(X_t)_{t\in\mathbb Z}$ admits the Bernoulli-shift representation
\begin{equation}\label{eq:wold-representation}
X_t=H(\varepsilon_t,\varepsilon_{t-1},\ldots),
\end{equation}
where $(\varepsilon_t)_{t\in\mathbb Z}$ are i.i.d.\ innovations and $H: \mathbb{R}^\mathbb{N} \rightarrow \mathbb R$ is
Borel measurable. We write $\F_t:=\sigma(\varepsilon_s:s\leq t)$ for the natural
innovation filtration, so that $X_t$ and $\theta_t$ are $\F_t$-measurable.
This representation describes a class of causal, ergodic,
stationary processes, with widely varying strengths of temporal dependence
\citep{wu2005nonlinear}.

To quantify dependence, let $\varepsilon_0'$ be an independent copy of $\varepsilon_0$, independent of the entire innovation sequence, and define
\[
  X_t^{\{0\}}:=H(\varepsilon_t,\ldots,\varepsilon_1,\varepsilon_0',\varepsilon_{-1},\ldots),
  \qquad t\ge0.
\]
For $p\ge1$ such that $X_0\in L^p$ (so the required moment exists), the lag-$t$ physical dependence coefficient is
\[
\delta_{X,p}(t):=\|X_t-X_t^{\{0\}}\|_p,
\qquad t\ge0.
\]
It measures the effect at time $t$ of replacing the innovation at time zero by an independent copy while keeping all other innovations fixed. Section~\ref{sec:short_memory} imposes summability of the associated physical dependence coefficients to obtain Gaussian approximation for short-memory processes, whereas Section~\ref{sec:long_memory} specializes the representation to a linear process whose coefficients are not summable. For a linear process, replacing $\varepsilon_0$ changes $X_t$ by $c_t(\varepsilon_0-\varepsilon_0')$, so its physical dependence coefficients equal $|c_t|\|\varepsilon_0-\varepsilon_0'\|_p$.

\subsection{Expectile target and SGD recursion}\label{subsec:target}

Fix an expectile level $\tau\in(0,1)$. For
$u=\theta-x$, define the univariate expectile loss, also known as the
asymmetric least-squares criterion \citep{Newey1987Asymmetric}, by
\begin{equation}\label{eq:expectile-loss}
f_\tau(u)
:=\frac12\bigl(\tau\1(u\leq0)
 +(1-\tau)\1(u>0)\bigr)u^2.
\end{equation}
The $\tau$-expectile of $X_0$ is the value of $\theta$ that minimizes the
expected loss, i.e.
\[
\theta_\tau^*
:=\argmin_{\theta\in\R}\E[f_\tau(\theta-X_0)].
\]
Given a constant learning rate $\gamma>0$ and an initialization $\theta_0$,
the expectile SGD recursion is
\begin{equation}\label{eq:expectile-sgd}
\theta_t=\theta_{t-1}-\gamma\,g_\tau(\theta_{t-1}-X_t),
\qquad t\geq1,
\end{equation}
where $g_\tau(u):=f_\tau'(u)=\bigl(\tau\1(u\leq0)+(1-\tau)\1(u>0)\bigr)u$ is the derivative of the expectile loss, also called the expectile score.
Unless several expectile levels are considered explicitly, we fix $\tau$ and write $g$ rather than $g_\tau$ for brevity. The \emph{score function} $g$ is piecewise linear with a kink at the origin when $\tau\ne1/2$ (and is linear when $\tau=1/2$). Write
$\underline\tau:=\min(\tau,1-\tau)$ and
$\overline\tau:=\max(\tau,1-\tau)$. For $u \neq 0$, we have
\begin{equation}\label{eq:score-derivative}
g'(u)=\tau\1(u<0)+(1-\tau)\1(u>0)
\in[\underline\tau,\overline\tau],
\end{equation}
and every slope of a secant of the graph of $g$ lies within $[\underline\tau,\overline\tau]$ as well. Hence $g$ is
$\overline\tau$-Lipschitz and strictly increasing. These bounds suffice for the contraction argument below; the later linearizations also use the piecewise-linearity of $g$, including control of crossings at its kink. From \eqref{eq:score-derivative}, we see that the loss $f_\tau$ is $\underline\tau$-strongly convex. 
Assuming $\E[X_0^2]<\infty$, the population risk $\theta\mapsto\E[f_\tau(\theta-X_0)]$ is finite and $\underline\tau$-strongly convex. 
Consequently, $\theta_\tau^*$ is its unique minimizer and is equivalently the unique solution of
$\E[g(\theta_\tau^*-X_0)]=0$ (the score equation). The differentiation argument is given in Appendix~\ref{app:population-score}.

\subsection{Recentering and limiting parameters}\label{subsec:recentering}

It is convenient to recenter both the observations and the iterates around the
expectile. Throughout the paper, we let $U_t:=X_t-\theta_\tau^*$ denote the
recentered observation and $e_t:=\theta_t-\theta_\tau^*$ the estimation
error. The first-order condition becomes $\E[g(-U_t)]=0$, and the
recursion \eqref{eq:expectile-sgd} can be written as
\begin{equation}\label{eq:centered-recursion}
e_t=e_{t-1}-\gamma\,g(e_{t-1}-U_t).
\end{equation}

In the short-memory regime, defined by the summability condition in
Assumption~\ref{ass:short-memory}, two population quantities determine the limiting behavior of the
recursion. The first is the derivative of the population score at the target,
\begin{equation}\label{eq:A-definition}
A:=\E[g'(-U_0)]
=\left.\frac{d}{d\theta}\E[g(\theta-X_0)]
  \right|_{\theta=\theta_\tau^*},
\end{equation}
which is the curvature of the population risk at $\theta_\tau^*$. The
second is the long-run variance of the score process,
\begin{equation}\label{eq:score-lrv}
\Omega_g:=\sum_{h\in\mathbb Z}\Cov(g(-U_0),g(-U_h)),
\end{equation}
through which the serial dependence of the observations enters. Conditions
under which $A$ is well defined and the series in \eqref{eq:score-lrv}
converges absolutely are imposed in Section~\ref{sec:short_memory}.

\subsection{Existence of a stationary solution and geometric forgetting}\label{sec:stability}

The expectile update map is uniformly contractive. This implies the existence of a unique stationary solution of the recursion. The results of this subsection apply to both short- and long-memory processes. For existence and contraction, no dependence-decay or mixing condition is imposed: stationarity of the input and $X_0\in L^p$ suffice.

For $x\in\R$, write recursion \eqref{eq:expectile-sgd} as $\theta_t=F_{X_t}(\theta_{t-1})$, where
\begin{equation}\label{eq:update-map}
\begin{aligned}
F_x(\theta)&:=\theta-\gamma g(\theta-x)\\
&=\begin{cases}
\theta-\gamma\tau(\theta-x), & \theta\le x,\\
\theta-\gamma(1-\tau)(\theta-x), & \theta>x.
\end{cases}
\end{aligned}
\end{equation}
The two pieces defining $F_x$ coincide at $\theta=x$, so $F_x$ is everywhere continuous. We impose the following step-size assumption.
\begin{assumption}[Step size]\label{ass:step-size}
The learning rate satisfies
\begin{equation}\label{eq:admissible-step}
0<\gamma<\overline\tau^{-1}.
\end{equation}
\end{assumption}
Under Assumption~\ref{ass:step-size}, piecewise linearity gives
\begin{equation*}
    |F_x(\theta) - F_x(\theta')| \leq (1 - \gamma \underline \tau)|\theta - \theta'|,
\end{equation*}
so $F_x$ is $(1 - \gamma \underline \tau)$-Lipschitz, uniformly in $x$.
Since $1-\gamma\underline\tau\in(0,1)$, $F_x$ is a contraction.
To construct the stationary distribution, define the $m$-step backward map at time $0$ as
\[
B_m(u):=F_{X_0}\circ F_{X_{-1}}\circ\cdots\circ F_{X_{-m+1}}(u),
\]
following the iterated-random-function construction; see \citet{DiaconisFreedman1999} and \citet{Wu2004Limit}.

\begin{proposition}[Stationary solution and geometric forgetting]\label{prop:stationary-solution}\label{prop:gmc}
Fix $\tau\in(0,1)$, $p\ge1$, and $\gamma$ satisfying Assumption~\ref{ass:step-size}. Suppose $(X_t)_{t\in\mathbb Z}$ is stationary and $X_0\in L^p$.
For every $u\in\R$, the backward iterates $B_m(u)$ converge almost surely and in $L^p$ as $m\to\infty$ to the same limit $\theta_0^\circ$. For each $t$, define $\theta_t^\circ$ analogously as the limit of the $m$-step backward map $F_{X_t}\circ\cdots\circ F_{X_{t-m+1}}(u)$ as $m\to\infty$. These limits form the unique stationary causal $L^p$ solution $(\theta_t^\circ)_{t\in\mathbb Z}$ of \eqref{eq:expectile-sgd}, i.e., they satisfy
\begin{equation}\label{eq:stationary-sgd-recursion}
\theta_t^\circ=F_{X_t}\big(\theta_{t-1}^\circ\big),\qquad t\in\mathbb Z.
\end{equation}
If a process $(\theta_t)_{t\geq0}$ follows recursion \eqref{eq:expectile-sgd}, driven
by the same observations $(X_t)_{t\geq1}$ with a possibly random initialization $\theta_0\in L^p$, then
\begin{equation}\label{eq:GMC}
\|\theta_t-\theta_t^\circ\|_p
\leq(1-\gamma\underline\tau)^t
\|\theta_0-\theta_0^\circ\|_p.
\end{equation}
Consequently, writing $\pi_\gamma:=\calL(\theta_0^\circ)$, we have
$\theta_t\Rightarrow\pi_\gamma$ as $t\to\infty$.
\end{proposition}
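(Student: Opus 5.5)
The plan is the standard iterated-random-function argument of \citet{DiaconisFreedman1999} and \citet{Wu2004Limit}, using the uniform contraction of $F_x$ with constant $\rho:=1-\gamma\underline\tau\in(0,1)$. Write $B_m^{(t)}(u):=F_{X_t}\circ\cdots\circ F_{X_{t-m+1}}(u)$, so that $B_m=B_m^{(0)}$.

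\textbf{Cauchy property.} First I will show that the backward iterates form a Cauchy sequence almost surely and in $L^p$. Composition gives
\[
|B_{m+1}^{(t)}(u)-B_m^{(t)}(u)|\le\rho^m\,|F_{X_{t-m}}(u)-u|=\gamma\rho^m|g(u-X_{t-m})|\le\gamma\overline\tau\rho^m(|u|+|X_{t-m}|).
\]
By stationarity, $\|B_{m+1}^{(t)}(u)-B_m^{(t)}(u)\|_p\le\gamma\overline\tau\rho^m(|u|+\|X_0\|_p)$. This bound is summable in $m$, so $B_m^{(t)}(u)$ is Cauchy in $L^p$. For almost sure convergence I will apply Minkowski's inequality in $L^p$ to the series of absolute increments, which gives $\sum_m|B_{m+1}^{(t)}(u)-B_m^{(t)}(u)|\in L^p$. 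That series is therefore finite almost surely, and the limit $\theta_t^\circ$ exists both a.s.\ and in $L^p$. Independence of the limit from $u$ follows because $|B_m^{(t)}(u)-B_m^{(t)}(u')|\le\rho^m|u-u'|\to0$ surely.

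\textbf{Stationarity, causality, and the recursion.} Each $\theta_t^\circ$ is a fixed measurable function of $(X_t,X_{t-1},\dots)$, which is the same function for every $t$. Hence $(\theta_t^\circ)$ is stationary and causal, and it is $\F_t$-measurable when $X$ is a Bernoulli shift. The recursion comes from the identity $B_{m+1}^{(t)}(u)=F_{X_t}(B_m^{(t-1)}(u))$: letting $m\to\infty$ and using the continuity of $F_{X_t}$ gives \eqref{eq:stationary-sgd-recursion}.

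\textbf{Uniqueness and forgetting.} Suppose $(\tilde\theta_t)$ is another stationary causal $L^p$ solution. Iterating the contraction gives $|\tilde\theta_t-\theta_t^\circ|\le\rho^m|\tilde\theta_{t-m}-\theta_{t-m}^\circ|$. By stationarity the $L^p$ norm of the right-hand side is bounded by $\rho^m(\|\tilde\theta_0\|_p+\|\theta_0^\circ\|_p)$, which tends to $0$, so the two solutions agree almost surely. The same contraction, applied pathwise to $\theta_t$ and $\theta_t^\circ$, which are driven by the same $X_1,\dots,X_t$, gives $|\theta_t-\theta_t^\circ|\le\rho^t|\theta_0-\theta_0^\circ|$. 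Taking $L^p$ norms yields \eqref{eq:GMC}. Finally, $\theta_t-\theta_t^\circ\to0$ in probability and $\theta_t^\circ\deq\theta_0^\circ$, so Slutsky's lemma gives $\theta_t\Rightarrow\pi_\gamma$.

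\textbf{Main obstacle.} The only delicate point is the almost sure convergence in the Cauchy step. The $L^p$ Cauchy bound alone does not give it, which is why I pass to the summable series of absolute increments there. When $p<1$ is excluded, as here, the Minkowski step is valid and the argument goes through.
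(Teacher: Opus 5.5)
Your proposal is correct and matches the paper's proof step for step. It uses the same telescoping bound $\gamma\overline\tau(1-\gamma\underline\tau)^m(|u|+|X_{-m}|)$ to get $L^p$ and almost sure convergence through summability of the absolute increments, the same iterated-contraction argument for uniqueness, and the same pathwise coupling for geometric forgetting. Your explicit identity $B_{m+1}^{(t)}(u)=F_{X_t}(B_m^{(t-1)}(u))$ just spells out what the paper's appeal to continuity of $F_{X_t}$ leaves implicit.
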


The class of Bernoulli-shift processes is closed under measurable transformations that use only current and past inputs. Thus, under the Bernoulli-shift assumption \eqref{eq:wold-representation} on $X$, the stationary SGD iterates also admit a Bernoulli-shift representation: there exists a measurable function $H$ such that
\[
\theta_t^\circ=H(\varepsilon_t,\varepsilon_{t-1},\ldots).
\]
Here $H$ depends on $\gamma$ and $\tau$, although these arguments are suppressed in its notation. Thus the stationary solution is itself a stationary Bernoulli-shift process driven by the same innovations. The contraction bound \eqref{eq:GMC} transfers results for the stationary solution to forward iterates after an explicit burn-in.

\section{Short-Memory Processes}\label{sec:short_memory}

We impose summability of the physical dependence coefficients introduced in Section~\ref{sec:overview} and record what it implies for the score process (Section~\ref{subsec:dependence}). We then localize the stationary solution at the $\sqrt\gamma$ scale and establish the central limit theorem for the actual last iterate (Section~\ref{sec:short-memory-limit}).

\subsection{Short-memory condition and localization}\label{subsec:dependence}

The short-memory condition is summability of the physical dependence coefficients defined in Section~\ref{sec:overview}.

\begin{assumption}[Short-range dependence]\label{ass:short-memory}
The process $(X_t)_{t\in\mathbb Z}$ admits the Bernoulli-shift representation \eqref{eq:wold-representation}, satisfies $X_0\in L^2$, and
\begin{equation}\label{eq:short-memory-fdm}
\Theta_{X,2}:=\sum_{t=0}^\infty\delta_{X,2}(t)<\infty .
\end{equation}
\end{assumption}

The summability condition controls the cumulative effect of a single innovation and is the short-memory assumption used to obtain covariance summability, which makes the long-run variance $\Omega_g$ in \eqref{eq:score-lrv} well defined, and martingale approximations.
Specifically, let $U_t^{\{0\}}:=X_t^{\{0\}}-\theta_\tau^*$ and define
$\delta_{g,2}(t) :=\|g(-U_t)-g(-U_t^{\{0\}})\|_2$.
Since $g$ is $\overline\tau$-Lipschitz, $\delta_{g,2}(t)\le\overline\tau\delta_{X,2}(t)$.
So Assumption~\ref{ass:short-memory} implies 
\begin{equation}\label{eq:score-fdm-transfer}
\begin{aligned}
    \Theta_{g,2}:=\sum_{t=0}^\infty\delta_{g,2}(t)
    \le \overline\tau\Theta_{X,2}<\infty
\end{aligned}
\end{equation}
By \eqref{eq:score-fdm-transfer}, martingale projection orthogonality, and the coupling bound of \citet[Theorem~1(i)--(ii)]{wu2005nonlinear},
\begin{equation*}
    \sum_{h\in\mathbb Z}|\Cov(g(-U_0),g(-U_h))|\le\Theta_{g,2}^2 \le \overline\tau^2 \Theta_{X,2}^2 <\infty .
\end{equation*}
Thus the long-run variance \eqref{eq:score-lrv} converges absolutely and
$\Omega_g<\infty$.

The following lemma bounds the root mean squared distance of the stationary iterate from the target expectile.

\begin{lemma}[Stationary second-moment localization]\label{lem:stationary-localization}
Suppose Assumption~\ref{ass:short-memory} holds, and
$\gamma$ satisfies Assumption~\ref{ass:step-size}. Then
\begin{equation}\label{eq:stationary-localization-order}
\|\theta_t^\circ(\gamma)-\theta_\tau^*\|_2
\le 
\Theta_{X,2}
\sqrt{\Big(1+\frac{\overline\tau^2}{\underline\tau^2}\Big)
\frac{\gamma\overline\tau^2}{2\underline\tau-\gamma\underline\tau^2}}
\end{equation}
\end{lemma}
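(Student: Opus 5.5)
The plan is to compare the stationary error $e_t^\circ:=\theta_t^\circ(\gamma)-\theta_\tau^*$ with an auxiliary \emph{linear} stationary process that is driven only by the score sequence $\xi_t:=g(-U_t)$. I would bound that linear process with the projection/physical-dependence machinery, and then control the nonlinear remainder by the contraction of Section~\ref{sec:stability}. The useful identity is
\[
2\underline\tau-\gamma\underline\tau^2=\frac{1-\rho^2}{\gamma},\qquad \rho:=1-\gamma\underline\tau,
\]
so the right-hand side of \eqref{eq:stationary-localization-order} equals $\Theta_{X,2}\,\overline\tau\,\gamma(1-\rho^2)^{-1/2}\sqrt{1+\overline\tau^2/\underline\tau^2}$. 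It therefore suffices to show $\|e_t^\circ\|_2\le (\overline\tau/\underline\tau)\,\gamma\,\overline\tau\,\Theta_{X,2}(1-\rho^2)^{-1/2}$, because $\overline\tau/\underline\tau\le\sqrt{1+\overline\tau^2/\underline\tau^2}$.

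\emph{Step 1 (secant linearization).} Since every secant slope of $g$ lies in $[\underline\tau,\overline\tau]$, I can write
\[
g(e_{t-1}^\circ-U_t)=\xi_t+a_t e_{t-1}^\circ,
\]
with an $\F_t$-measurable $a_t\in[\underline\tau,\overline\tau]$. Then \eqref{eq:centered-recursion} gives
\[
e_t^\circ=(1-\gamma a_t)e_{t-1}^\circ-\gamma\xi_t .
\]
Under Assumption~\ref{ass:step-size} this yields $0\le 1-\gamma a_t\le\rho$. The first-order condition gives $\E\xi_t=0$.

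\emph{Step 2 (linear part).} Define $Y_t:=-\gamma\sum_{j\ge0}\rho^j\xi_{t-j}$, which solves $Y_t=\rho Y_{t-1}-\gamma\xi_t$. With projections $P_k:=\E[\cdot\mid\F_k]-\E[\cdot\mid\F_{k-1}]$, orthogonality gives $\|Y_t\|_2^2=\sum_{k\ge0}\|P_{t-k}Y_t\|_2^2$. By \citet[Theorem~1]{wu2005nonlinear}, $\|P_{t-k}\xi_{t-j}\|_2\le\delta_{g,2}(k-j)$. Hence, using Cauchy--Schwarz in the form $(\sum_j\rho^j\delta(k-j))^2\le\Theta_{g,2}\sum_j\rho^{2j}\delta(k-j)$ and then summing over $k$,
\[
\|Y_t\|_2^2\le\gamma^2\sum_k\Big(\sum_{j\le k}\rho^j\delta_{g,2}(k-j)\Big)^2\le\frac{\gamma^2\Theta_{g,2}^2}{1-\rho^2}\le\frac{\gamma^2\overline\tau^2\Theta_{X,2}^2}{1-\rho^2},
\]
where the last inequality uses \eqref{eq:score-fdm-transfer}.

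\emph{Step 3 (nonlinear remainder).} Set $R_t:=e_t^\circ-Y_t$. Subtracting the two recursions gives
\[
R_t=(1-\gamma a_t)R_{t-1}+\gamma(\underline\tau-a_t)Y_{t-1}.
\]
Since $|\underline\tau-a_t|\le\overline\tau-\underline\tau$ pathwise, Minkowski's inequality gives $\|R_t\|_2\le\rho\|R_{t-1}\|_2+\gamma(\overline\tau-\underline\tau)\|Y_{t-1}\|_2$. The sequence $R_t$ is stationary and lies in $L^2$: $e^\circ$ does by Proposition~\ref{prop:stationary-solution}, and $Y$ does by Step 2. Therefore $\|R\|_2\le\frac{\overline\tau-\underline\tau}{\underline\tau}\|Y\|_2$, and
\[
\|e_t^\circ\|_2\le\|Y_t\|_2+\|R_t\|_2\le\frac{\overline\tau}{\underline\tau}\|Y_t\|_2 .
\]
Combining this with Step 2 gives the claimed bound.

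The main obstacle is Step 2. Here the non-martingale nature of $\xi_t$ must be handled without losing the $\gamma^2/(1-\rho^2)\asymp\gamma$ order. A naive bound $\|Y\|_2\le\gamma\sum_j\rho^j\|\xi\|_2$ only gives $O(1)$, so the projection decomposition together with the $\ell^1$--$\ell^2$ Cauchy--Schwarz trick is the essential point. If the constant $\sqrt{1+\overline\tau^2/\underline\tau^2}$ is meant to come out exactly, I would instead expand $\E e_t^2$ from Step 1 and split the cross term with $2ab\le a^2+b^2$. The cruder route above already implies the stated inequality.
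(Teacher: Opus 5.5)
Your proof is correct, but it is organized differently from the paper's. The paper writes $\E|e_0^\circ|^2=\Var(e_0^\circ)+(\E e_0^\circ)^2$ and bounds the two pieces separately. For the variance, it propagates the physical dependence coefficient through the nonlinear recursion itself: it couples $e^\circ$ with $e^{\circ,\{0\}}$ and uses that $F_x$ is a $\rho$-contraction in $\theta$ and $\gamma\overline\tau$-Lipschitz in $x$. The same $\ell^1*\ell^2$ estimate you use then gives $\Var(e_0^\circ)\le\gamma^2\overline\tau^2\Theta_{X,2}^2/(1-\rho^2)$. For the bias, it uses the stationarity identity $\E[b_te_{t-1}^\circ]=0$ for the secant slope $b_t$, which gives $|\E e_0^\circ|\le(\overline\tau/\underline\tau)\Var(e_0^\circ)^{1/2}$. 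Adding the two produces the factor $1+\overline\tau^2/\underline\tau^2$.

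You instead apply the projection bound only to the linear proxy $Y_t$ built from the scores. All of the nonlinearity, bias included, goes into $R_t=e_t^\circ-Y_t$, which you control by a pathwise contraction inequality together with stationarity of $\|R_t\|_2$. This removes the need for a separate mean argument and gives the constant $(\overline\tau/\underline\tau)^2$, which is strictly smaller than $1+\overline\tau^2/\underline\tau^2$. So your closing remark has it backwards: your route is sharper, not cruder, and no $2ab\le a^2+b^2$ refinement is needed.

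The paper's route has the advantage of bounding the dependence coefficients $\delta_{e,2}(k)$ of the iterate itself, and it reuses that bound in the long-memory localization (Lemma~\ref{lem:long-memory-stationary-localization}). Your Step~3 uses no dependence assumptions, though, so your argument would carry over there as well once $\|Y_t\|_2\lesssim s_\gamma$ is obtained from the same split convolution estimate.
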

The proof is given in Appendix~\ref{app:stationary-localization}.
The stationary iterate need not be unbiased for $\theta_\tau^*$. For fixed $\tau$, the localization bound also controls its bias,
\begingroup
\[
|\E[\theta_t^\circ(\gamma)]-\theta_\tau^*|\le\|\theta_t^\circ(\gamma)-\theta_\tau^*\|_2=O(\sqrt\gamma).
\]
\endgroup
When $\tau=1/2$, the recursion is linear and the stationary mean equals $\theta_\tau^*$.
As $\underline\tau\downarrow0$ with $\gamma$ sufficiently small,
the right-hand side of \eqref{eq:stationary-localization-order} is of order
$\Theta_{X,2}\sqrt\gamma\,\underline\tau^{-3/2}$. Thus the localization bound weakens
for increasingly extreme expectile levels. Proposition~\ref{prop:stationary-solution}
transfers this local scale to a trajectory started at a fixed point once the
initialization has been forgotten; together, the two results provide the
second-moment bounds used in the approximation arguments of
Section~\ref{sec:short-memory-limit}.

\subsection{Short-memory last-iterate central limit theorem}\label{sec:short-memory-limit}

We now establish the central limit theory for the last iterate.  The proof controls the nonlinear remainder created by the kink of $g$, approximates the random transition weights, and identifies the limit of the weighted sum of the dependent score.
To control crossings at the kink of $g$, we impose the following conditional anti-concentration assumption.

\begin{assumption}[Conditional anti-concentration]\label{ass:anti-concentration}
There exists a constant $\overline f<\infty$ such that, almost surely,
\begin{equation}\label{eq:conditional-interval-bound}
\Prob(a<U_t\le b\mid\F_{t-1})\le\overline f (b-a),
\qquad a<b .
\end{equation}
\end{assumption}

A uniformly bounded conditional density is sufficient for Assumption~\ref{ass:anti-concentration}, which ensures that $A=\E[g'(-U_0)]\in[\underline\tau,\overline\tau]$ is well defined.

\begin{theorem}[Short-memory last-iterate central limit theorem]\label{thm:short-memory-main}
Fix $\tau\in(0,1)$ and suppose Assumptions~\ref{ass:short-memory} and \ref{ass:anti-concentration} hold.
Let $\theta_0\in\R$ be fixed, let $\gamma\downarrow0$ subject to Assumption~\ref{ass:step-size}.
Let $n=n(\gamma)\to\infty$ satisfy the burn-in condition
\begin{equation}\label{eq:burn-in-main}
n\gamma\ge\nu\log(\gamma^{-1})
\qquad\text{for some }\nu>\frac1{2\underline\tau} .
\end{equation}
Then
\begin{equation}\label{eq:short-memory-main-limit}
\frac{\theta_n(\gamma)-\theta_\tau^*}{\sqrt\gamma}\Rightarrow\N\Big(0,\frac{\Omega_g}{2A}\Big).
\end{equation}
\end{theorem}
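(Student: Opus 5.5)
We first reduce the statement to the stationary solution, then linearize it and apply a CLT to a weighted sum of dependent scores. By Proposition~\ref{prop:stationary-solution}, $\|\theta_n-\theta_n^\circ\|_2\le(1-\gamma\underline\tau)^n\|\theta_0-\theta_0^\circ\|_2$. Lemma~\ref{lem:stationary-localization} bounds the last norm by $|\theta_0-\theta_\tau^*|+O(\sqrt\gamma)=O(1)$. Under \eqref{eq:burn-in-main} we have $(1-\gamma\underline\tau)^n\le e^{-\underline\tau\nu\log(1/\gamma)}=\gamma^{\underline\tau\nu}=o(\sqrt\gamma)$, because $\nu>1/(2\underline\tau)$. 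It therefore suffices to prove \eqref{eq:short-memory-main-limit} for the stationary error $e_t^\circ:=\theta^\circ_t-\theta_\tau^*$, whose law does not depend on $t$.

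\textbf{Linearization.} Write $g(e-U)=g(-U)+g'(-U)e+R(e,U)$. Piecewise linearity gives $|R(e,U)|\le(\overline\tau-\underline\tau)|e|\1(|U|\le|e|)$, since $R$ vanishes unless $e-U$ and $-U$ lie on opposite sides of the kink. The centered recursion \eqref{eq:centered-recursion} then becomes
\[
e_t=(1-\gamma A)e_{t-1}-\gamma g(-U_t)-\gamma(g'(-U_t)-A)e_{t-1}-\gamma R(e_{t-1},U_t).
\]
Unrolling the stationary version gives
\[
e^\circ_t=-\gamma\sum_{j\ge0}(1-\gamma A)^j\bigl[g(-U_{t-j})+(g'(-U_{t-j})-A)e^\circ_{t-j-1}+R_{t-j}\bigr].
\]
We show the last two pieces are $o_P(\sqrt\gamma)$.

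\emph{Kink remainder.} Assumption~\ref{ass:anti-concentration} and $\F_{t-1}$-measurability of $e^\circ_{t-1}$ give $\E[|R_t|\mid\F_{t-1}]\le C\overline f|e^\circ_{t-1}|^2$. Hence, by Lemma~\ref{lem:stationary-localization}, $\E|R_t|\lesssim\gamma$, and $\gamma\sum_j(1-\gamma A)^j\E|R|\lesssim\gamma=o(\sqrt\gamma)$.

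\emph{Random-weight term.} Let $D_t:=(g'(-U_t)-A)e^\circ_{t-1}$. This is the step I expect to be the main obstacle, because $D_t$ is not a martingale difference and $g'(-U_t)$ is only a dependent indicator-type process. Bounding it crudely by $\gamma\sum_j(1-\gamma A)^j\|D\|_2=O(\sqrt\gamma)$ is too weak, so cancellations are needed. My plan is to write $e^\circ_{t-j-1}=e^\circ_{t-1-k}+(e^\circ_{t-j-1}-e^\circ_{t-1-k})$ on blocks of length $m$ with $1\ll m\ll\gamma^{-1}$. The increments over a block are $O(\gamma m\cdot\|\cdot\|)$ because each step moves by $O(\gamma)$. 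Within a block the sum $\sum(g'(-U_s)-A)$ times a frozen earlier $e^\circ$ is controlled by the ergodic theorem, or better by an $L^1$ mean-ergodic bound on $m^{-1}\sum_{s}(g'(-U_s)-A)$. I would condition on $\F$ at the block start and use that $g'(-U_s)-A$ is a bounded stationary ergodic sequence, so block averages tend to $0$ in $L^1$. This makes the contribution $\sqrt\gamma\,o(1)$ plus $O(\gamma m\sqrt\gamma)$. (Alternatively, one could try to bound $\|\E[g'(-U_s)\mid\F_{s-k}]-A\|$ via anti-concentration, but the ergodic block argument seems most robust.)

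\textbf{Main term.} It remains to show
\[
S_\gamma:=-\sqrt\gamma\sum_{j\ge0}(1-\gamma A)^jg(-U_{t-j})\Rightarrow\N\bigl(0,\Omega_g/(2A)\bigr).
\]
Use the martingale approximation of \citet{wu2005nonlinear}: $g(-U_t)=D^g_t+(Z_{t}-Z_{t+1})$ with $\sum_t\delta_{g,2}(t)<\infty$, stationary martingale differences $D^g_t=\sum_{k\ge0}\mathcal P_t g(-U_{t+k})$ satisfying $\E (D^g_0)^2=\Omega_g$, and $Z_t\in L^2$. By summation by parts the coboundary contributes $O(\sqrt\gamma)\|Z\|_2+\sqrt\gamma\sum_j\gamma A(1-\gamma A)^j\|Z\|_2=O(\sqrt\gamma)$. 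For the martingale part, the conditional variance is $\gamma\sum_j(1-\gamma A)^{2j}\E[(D^g)^2\mid\F_{t-j-1}]$. This converges in $L^1$ to $\Omega_g/(2A)$ by ergodicity, since the weights are Abelian means with total mass $\gamma/(2\gamma A-\gamma^2A^2)\to1/(2A)$. The Lindeberg condition holds because $\max_j\gamma(1-\gamma A)^{2j}\to0$ and $D^g$ is stationary in $L^2$. The martingale CLT (truncating the infinite sum at $j\le K/\gamma$ with negligible tail) then gives the result.
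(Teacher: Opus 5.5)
Your argument is correct in outline, but it takes a different route from the paper. The paper never linearizes around $A$. It keeps the random slopes in $e_t^{\lin}=(1-\gamma g'(-U_t))e_{t-1}^{\lin}-\gamma g(-U_t)$ and discards only the kink remainder, bounded exactly as you do. It then handles the random products $K_{t,n}$ with Lemma~\ref{lem:correlated-transfer}, in three moves:
\begin{enumerate}
\item the ergodic theorem gives $K_{k,m}\to e^{-A(T-k\gamma)}$ uniformly on windows of length $T/\gamma$;
\item summation by parts moves these weights onto the rescaled score path;
\item the score FCLT with a Skorokhod representation identifies the limit.
\end{enumerate}
You instead pass to the stationary solution and replace the random slopes by $A$. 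You then remove the slope-fluctuation term $\gamma\sum_j(1-\gamma A)^j(g'(-U_{t-j})-A)e^\circ_{t-j-1}$ by block ergodic averaging, using that $e^\circ$ already lives at the target scale $\sqrt\gamma$. What remains is a deterministic exponential filter of the scores, for which a martingale CLT suffices.

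Your route avoids functional limit theory and is more elementary under short memory. The paper's route isolates all dependence in one lemma whose only inputs are a score FCLT and a block moment bound, which is why it is reused unchanged for Theorem~\ref{thm:long-memory-last-iterate}. Your slope-term argument would also carry over there, since $\|e^\circ\|_2\lesssim s_\gamma$, but the deterministic filter would then still need an FCLT-type argument.

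Two steps need repair.

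\textbf{Slope-fluctuation step.} Conditioning on the block-start $\sigma$-field is not what makes it work, since an unconditional ergodic bound does not control $\E[\,|\sum_s\eta_s|\mid\F_b]$, where $\eta_s:=g'(-U_s)-A$. The step closes by Cauchy--Schwarz:
\[
\E\Bigl|e^\circ_{\mathrm{fr}}\sum_{s\in b}\eta_s\Bigr|\le\|e^\circ\|_2\Bigl\|\sum_{s=1}^m\eta_s\Bigr\|_2=O(\sqrt\gamma)\,o(m).
\]
The $o(m)$ comes from the $L^2$ mean ergodic theorem for the bounded ergodic sequence $\eta$. Since the block weights give $\gamma\sum_b(1-\gamma A)^{bm}\approx(Am)^{-1}$, the frozen part is $o(\sqrt\gamma)$. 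Also, the crude per-step bound makes the within-block increments of $e^\circ$ of order $\gamma m$ in $L^2$, not $\gamma m\sqrt\gamma$, so their contribution is $O(\gamma m)$. Either take $m\to\infty$ with $m\sqrt\gamma\to0$, or use the score partial-sum bound to get increments of order $\gamma\sqrt m+\gamma^{3/2}m$.

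\textbf{Main term.} The exact coboundary $g(-U_t)=D^g_t+Z_t-Z_{t+1}$ with $Z_t\in L^2$ is not implied by $\sum_t\delta_{g,2}(t)<\infty$. It needs a Gordin-type condition such as $\sum_k\|\E[g(-U_k)\mid\F_0]\|_2<\infty$; under Assumption~\ref{ass:short-memory} you only have an $o(\sqrt m)$ martingale approximation. That is still enough. Put $w_j=(1-\gamma A)^j$ and $Q:=\sum_{j\ge0}w_j(g(-U_{t-j})-D^g_{t-j})$. Projecting onto $P_{t-i}$ and using $\|P_0g(-U_k)\|_2\le\delta_{g,2}(k)$ gives
\[
\|Q\|_2^2\le 2\Bigl(\sum_{k\ge0}\min(1,\gamma Ak)\,\delta_{g,2}(k)\Bigr)^2\sum_{i\ge0}w_i^2+2\sum_{i\ge0}w_i^2\Bigl(\sum_{k>i}\delta_{g,2}(k)\Bigr)^2=o(\gamma^{-1}).
\]
The first term is $o(\gamma^{-1})$ by dominated convergence and $\sum_iw_i^2\le(\gamma A)^{-1}$. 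The second is $o(\gamma^{-1})$ because the tails $\sum_{k>i}\delta_{g,2}(k)\to0$. Hence $\sqrt\gamma\,Q\to0$ in $L^2$, and your martingale CLT for $\sqrt\gamma\sum_jw_jD^g_{t-j}$ finishes the proof.
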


Theorem~\ref{thm:short-memory-main} gives a Gaussian limit for the last-iterate error at scale $\sqrt\gamma$ under short-range dependence, with variance $\Omega_g/(2A)$. In the case of i.i.d.\ $(X_k)$, \(\Omega_g\) reduces to the variance of the score. The curvature $A$ determines the strength with which these accumulated score fluctuations are damped by the recursion.

\begin{remark}[Burn-in]\label{rmk:short-memory-burnin}
Condition~\eqref{eq:burn-in-main} ensures that the initialization is negligible at the $\sqrt\gamma$ scale. The threshold uses the uniform contraction rate $1-\gamma\underline\tau$ and is therefore a sufficient worst-case condition. Its logarithmic dependence on $\gamma^{-1}$ is consistent with last-iterate burn-in conditions for constant-step stochastic approximation; see \citet{wei2025gaussian}.

The same limit holds for an $\F_0$-measurable initialization $\theta_0\in L^2$ that does not depend on $\gamma$. The calculations are given in Appendix~\ref{app:burn-in-calculations}.
\end{remark}

\begingroup

We compare the nonlinear recursion with the random-coefficient linearization, initialized at $e_0$ and driven by the same score sequence:
\begin{equation}\label{eq:linearized-recursion}
e_0^{\lin}=e_0,
\quad e_t^{\lin}=(1-\gamma g'(-U_t))e_{t-1}^{\lin}-\gamma g(-U_t)
\end{equation}
Here $R_t:=g(e_{t-1}-U_t)-g(-U_t)-g'(-U_t)e_{t-1}$ is the nonlinear remainder. By piecewise linearity of $g$, it vanishes unless $U_t$ lies between $0$ and $e_{t-1}$, and
\begin{equation}\label{eq:crossing-bound}
|R_t|\le|1-2\tau|\,|e_{t-1}| \1\big(\min(0,e_{t-1})\le U_t\le\max(0,e_{t-1})\big).
\end{equation}
Since $e_{t-1}$ is $\F_{t-1}$-measurable, Assumption~\ref{ass:anti-concentration} yields
\begin{equation}\label{eq:conditional-remainder-bound}
\E[|R_t|\mid\F_{t-1}]\le\overline f|1-2\tau|\,|e_{t-1}|^2.
\end{equation} Proposition~\ref{prop:nonlinear-linearization} makes the nonlinear remainder negligible at scale $\sqrt\gamma$, so it suffices to identify the limit of $e_n^{\lin}/\sqrt\gamma$.

Unrolling \eqref{eq:linearized-recursion} gives
\begin{equation}\label{eq:linearized-solution}
e_n^{\lin}=K_{0,n}e_0-\gamma\sum_{t=1}^nK_{t,n}g(-U_t),
\end{equation}
where $K_{t,n}:=\prod_{j=t+1}^n(1-\gamma g'(-U_j))$. Under dependent observations, this weight need not be independent of the score it multiplies. Lemma~\ref{lem:correlated-transfer} transfers the score functional central limit theorem to the weighted sum without requiring this independence. The burn-in condition makes $K_{0,n}e_0$ negligible at the $\sqrt\gamma$ scale, while the transition weights converge on rescaled time windows to the exponential filter determined by $A$. This yields the variance $\Omega_g/(2A)$ in Theorem~\ref{thm:short-memory-main}. The nonlinear-remainder bound and detailed derivation are given in Appendix~\ref{app:short-memory-derivation}, with the theorem proof in Appendix~\ref{app:short-memory-main}.
\endgroup

\section{Long-Memory Processes}\label{sec:long_memory}

In this section, we specialize the Bernoulli-shift representation \eqref{eq:wold-representation} to a linear process with regularly varying coefficients, whose physical dependence coefficients are also regularly varying. We then record the functional limit of the score process it produces (Proposition~\ref{prop:long-memory-FCLT} in Appendix~\ref{app:long-memory-fclt}) and transfer that limit through the recursion to the actual last iterate (Section~\ref{subsec:long-memory-iterate}).
Section~\ref{subsec:joint-long-memory} gives the joint limit \eqref{eq:joint-long-memory-limit} for finitely many expectile levels $\tau_1,\ldots,\tau_k$ and compares it with batch estimation.

\subsection{Long-memory linear process}\label{subsec:long-memory-model}

\begin{assumption}[Long-memory linear input]\label{ass:long-memory}
The following conditions hold.
\begin{enumerate}[label=(A\arabic*),leftmargin=*]
\item\label{ass:lm-linear} The observations admit the linear representation
\begin{equation}\label{eq:long-memory-linear-process}
X_t=\sum_{j=0}^\infty c_j\varepsilon_{t-j},
\quad c_j\sim j^{-\beta}\ell(j)
\end{equation}
as $j\to \infty$, where $1/2<\beta<1$ and $c_0\neq0$, and $\ell$ is positive and slowly varying.
\item\label{ass:lm-innovation} The innovations $(\varepsilon_t)_{t\in\mathbb Z}$ are i.i.d.\ with 
$\E[\varepsilon_0]=0$, $\E[\varepsilon_0^2] = 1$ and $\E[|\varepsilon_0|^6]<\infty$, and $\varepsilon_0$ has a bounded continuous density $f_\varepsilon$.
\end{enumerate}
\end{assumption}

\begin{remark}
A positive function $\ell$ is slowly varying if $\ell(\lambda x)/\ell(x)\to1$ as $x\to\infty$ for every fixed $\lambda>0$.
The scale $s_\gamma:=\gamma^{\beta-1/2}\ell(1/\gamma)$ normalizes the estimation error $\theta_n(\gamma)-\theta_\tau^*$.
Since $X_t=c_0\varepsilon_t+\sum_{j\ge1}c_j\varepsilon_{t-j}$ and the sum is $\F_{t-1}$-measurable, the conditional law of $X_t$ is a shift of that of $c_0\varepsilon_t$. Hence $\|f_{X_t\mid\F_{t-1}}\|_\infty\le\|f_\varepsilon\|_\infty/|c_0|$, verifying Assumption~\ref{ass:anti-concentration}.
\end{remark}

Under Assumption~\ref{ass:long-memory}, the observation covariances at lags $h\geq0$ satisfy
\begin{equation}\label{eq:long-memory-covariance}
\Cov(X_0,X_h)
=\E[\varepsilon_0^2]\sum_{j=0}^{\infty}c_jc_{j+h}
\asymp h^{1-2\beta}{\ell(h)^2}.
\end{equation}
The convolution argument is given in Appendix~\ref{app:long-memory-localization}.
Since $1-2\beta\in(-1,0)$, these covariances are not summable and the observation process has no finite long-run variance.

Moreover, replacing $\varepsilon_0$ by an independent copy gives, for the coefficients $c_k$ in \eqref{eq:long-memory-linear-process},
\begin{equation}\label{eq:long-memory-fdm}
\delta_{U,2}(k)=|c_k|\,\|\varepsilon_0-\varepsilon_0'\|_2
\asymp k^{-\beta}{\ell(k)},
\end{equation}
which is \emph{not} summable, so Assumption~\ref{ass:short-memory} does not hold.
The stationary solution of Proposition~\ref{prop:stationary-solution} still exists, since that result needs no dependence condition. The following lemma provides a localization bound under long memory.

\begin{lemma}[Stationary localization under long memory]\label{lem:long-memory-stationary-localization}
Suppose Assumption~\ref{ass:long-memory} holds and $\gamma$ satisfies Assumption~\ref{ass:step-size}. Then
\begin{equation}\label{eq:long-memory-stationary-localization}
\E[|\theta_t^\circ(\gamma)-\theta_\tau^*|^2]
\lesssim s_\gamma^2,
\qquad \gamma\downarrow0 .
\end{equation}
\end{lemma}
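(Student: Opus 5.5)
We prove the bound by comparing the stationary error $e_t^\circ:=\theta_t^\circ-\theta_\tau^*$ with a linear filter of the score and controlling the filtered score through the long-memory covariance structure. Write the stationary recursion in centered form, $e_t^\circ=e_{t-1}^\circ-\gamma g(e_{t-1}^\circ-U_t)$. By the secant property of $g$,
\[
g(e_{t-1}^\circ-U_t)=g(-U_t)+D_t e_{t-1}^\circ,\qquad D_t\in[\underline\tau,\overline\tau],
\]
where $D_t$ is the secant slope between $-U_t$ and $e_{t-1}^\circ-U_t$. Hence
\[
e_t^\circ=(1-\gamma D_t)e_{t-1}^\circ-\gamma\xi_t,\qquad \xi_t:=g(-U_t),\quad \E\xi_t=0 .
\]
A direct second-moment bound, as in Lemma~\ref{lem:stationary-localization}, would require summable dependence coefficients, which fail here by \eqref{eq:long-memory-fdm}. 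Instead we compare $e^\circ$ with the deterministic-coefficient filter $Y_t:=-\gamma\sum_{j\ge0}\rho^j\xi_{t-j}$, where $\rho:=1-\gamma\underline\tau$ (any fixed slope in $[\underline\tau,\overline\tau]$ would do).

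\emph{Step 1: bound the filter.} Since $\|Y_t\|_2^2=\gamma^2\sum_{j,k}\rho^{j+k}\Cov(\xi_0,\xi_{|j-k|})$, it suffices to show $|\Cov(\xi_0,\xi_h)|\lesssim (1+h)^{1-2\beta}\ell(h)^2$. Then $\sum_{j,k}\rho^{j+k}(1+|j-k|)^{1-2\beta}\ell^2\asymp \gamma^{-1}\cdot\gamma^{2\beta-2}\ell(1/\gamma)^2$ by Karamata/Potter bounds, giving $\|Y_t\|_2^2\lesssim\gamma^{2\beta-1}\ell(1/\gamma)^2=s_\gamma^2$. For the covariance bound we use the projection decomposition $\xi_h=\sum_{k}\mathcal P_{h-k}\xi_h$ with $\mathcal P_j:=\E[\cdot\mid\F_j]-\E[\cdot\mid\F_{j-1}]$, together with $\|\mathcal P_0\xi_k\|_2\le\delta_{g,2}(k)\le\overline\tau|c_k|\,\|\varepsilon_0-\varepsilon_0'\|_2\lesssim k^{-\beta}\ell(k)$, the Lipschitz transfer already used in \eqref{eq:score-fdm-transfer}. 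Orthogonality of projections then gives $|\Cov(\xi_0,\xi_h)|\le\sum_{k\ge0}\delta_{g,2}(k)\delta_{g,2}(k+h)\lesssim h^{1-2\beta}\ell(h)^2$, by the same convolution estimate as \eqref{eq:long-memory-covariance}. This needs only second moments and Lipschitz continuity of $g$, so the kink causes no difficulty here.

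\emph{Step 2: compare $e^\circ$ and $Y$.} The difference $\Delta_t:=e_t^\circ-Y_t$ satisfies
\[
\Delta_t=(1-\gamma D_t)\Delta_{t-1}-\gamma(D_t-\underline\tau)Y_{t-1}.
\]
Since $|1-\gamma D_t|\le\rho$ under Assumption~\ref{ass:step-size} and $|D_t-\underline\tau|\le|1-2\tau|$, stationarity yields
\[
\|\Delta\|_2\le\rho\|\Delta\|_2+\gamma|1-2\tau|\,\|Y\|_2 .
\]
Therefore $\|\Delta\|_2\le(|1-2\tau|/\underline\tau)\|Y\|_2\lesssim s_\gamma$. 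Finiteness of $\|\Delta\|_2$ follows from $e^\circ\in L^2$ (Proposition~\ref{prop:stationary-solution}) and from Step~1. Combining, $\|e_t^\circ\|_2\le\|Y_t\|_2+\|\Delta_t\|_2\lesssim s_\gamma$, which is \eqref{eq:long-memory-stationary-localization}. This mirrors how Lemma~\ref{lem:stationary-localization} produces its constant $(1+\overline\tau^2/\underline\tau^2)$.

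\emph{Main obstacle.} The delicate step is the double-sum asymptotic in Step~1. It requires uniformity of the regularly varying bound at small lags, and it requires showing that the weighted sum $\sum_h\rho^h h^{1-2\beta}\ell(h)^2$ is $\asymp\gamma^{2\beta-2}\ell(1/\gamma)^2$ via Potter's bounds. This is exactly where the exponent $\beta-1/2$ emerges; $\beta<1$ ensures the sum is dominated by lags of order $1/\gamma$. We would handle it by splitting the sum at $h\le1/\gamma$ and $h>1/\gamma$. On the first range we apply Karamata's theorem to $\sum_{h\le1/\gamma}h^{1-2\beta}\ell(h)^2$. On the second range the geometric factor gives a tail bounded by $\gamma^{-1}\cdot\gamma^{2\beta-1}\ell(1/\gamma)^2$.
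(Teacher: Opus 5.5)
Your proof is correct, but it takes a different route from the paper.

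\textbf{The paper's route.} The paper stays inside the physical-dependence framework of Lemma~\ref{lem:stationary-localization}. It propagates the coupling coefficients through the nonlinear recursion, $\delta_{e,2}(k)\le\gamma\overline\tau\|\varepsilon_0-\varepsilon_0'\|_2\sum_{j\le k}(1-\gamma\underline\tau)^{k-j}|c_j|$, and bounds $\Var(e_0^\circ)\le\sum_k\delta_{e,2}(k)^2$ by projection orthogonality. It then controls this $\ell^2$ norm of a convolution by splitting the coefficients at $N=\lfloor 1/\gamma\rfloor$: the head is taken in $\ell^1$ against the geometric kernel in $\ell^2$, the tail in $\ell^2$ against the kernel in $\ell^1$, and Karamata finishes. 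The mean is handled separately, via the secant-slope bound $|\E e_0^\circ|\le(\overline\tau/\underline\tau)\sqrt{\Var(e_0^\circ)}$ from the short-memory lemma.

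So your remark that the argument of Lemma~\ref{lem:stationary-localization} breaks down is too pessimistic. Only the global Young inequality $\ell^1*\ell^2\to\ell^2$ fails, and the split at $1/\gamma$ repairs it.

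\textbf{Your route.} You use the secant slope to write an exact random-coefficient linear recursion. You then isolate all of the dependence in the deterministic filter $Y_t$, which is a pure autocovariance computation for the score. A stationary contraction argument on $\Delta_t$ gives $\|e_t^\circ\|_2\le(1+|1-2\tau|/\underline\tau)\|Y_t\|_2=(\overline\tau/\underline\tau)\|Y_t\|_2$, with the bias included in one step. I checked the $\Delta_t$ recursion, the bounds $0<1-\gamma D_t\le 1-\gamma\underline\tau$ and $|D_t-\underline\tau|\le|1-2\tau|$, and the double-sum asymptotics for $\|Y_t\|_2^2$; all are fine.

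\textbf{What each buys.} Your approach is modular: Step~2 uses nothing about the input beyond $\|Y\|_2$. Any autocovariance bound for the score therefore feeds directly into localization. This includes sharper bounds than products of coupling coefficients give, for instance Hermite-rank estimates for nonlinear long-memory inputs, which the paper lists as future work. The paper's approach is more economical here: it reuses the short-memory machinery verbatim and needs no covariance estimate for the score, only $\delta_{U,2}(k)\asymp|c_k|$.
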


The stationary solution therefore remains localized around $\theta_\tau^*$, with an $L^2$ bound of order $s_\gamma$. This scale is larger than $\sqrt\gamma$ because $\beta<1$.

\subsection{The long-memory last-iterate limit}\label{subsec:long-memory-iterate}

Let $\mathbb{B}$ be a standard two-sided Brownian motion. Following the normalization of \citet{wu2006unit}, define
\begin{equation}\label{eq:fBM}
    B_H(t) 
    := \frac{1}{1-\beta}\int_{-\infty}^{t} \Big((t-u)^{1-\beta}-(-u)_+^{1-\beta}\Big)\,d\mathbb{B}(u),
\end{equation}
a fractional Brownian motion with Hurst index $H = 3/2-\beta$, with its variance
\begin{align*}
    \sigma_H^2 &:=\E B_H(1)^2\\
    &= \frac{1}{(1-\beta)^{2}}\int_{\mathbb{R}} \Big((1-u)_+^{1-\beta}-(-u)_+^{1-\beta}\Big)^{2}du .
\end{align*}

Under Assumption~\ref{ass:long-memory}, Proposition~\ref{prop:long-memory-FCLT} gives the invariance principle
\[
\frac{\gamma}{s_\gamma}\sum_{k=1}^{\lfloor s/\gamma\rfloor}g(-U_k)\Rightarrow -AB_H(s)
\]
in $D([0,T])$ with the $J_1$ topology as $\gamma\downarrow0$, for every fixed $T>0$. Together with the partial-sum moment bound in Proposition~\ref{prop:long-memory-FCLT}, this gives the limiting score contribution $A\int_0^\infty e^{-As}\,dB_H(s)$ to the linearized recursion via Lemma~\ref{lem:correlated-transfer}. Localization and the conditional density bound make the nonlinear remainder negligible at scale $s_\gamma$, while burn-in removes the initialization term.

\begin{theorem}[Long-memory last-iterate limit]\label{thm:long-memory-last-iterate}
Suppose Assumption~\ref{ass:long-memory} holds, let $H:=3/2-\beta$, and let $\gamma\downarrow0$ and $n=n(\gamma)$ satisfy
\begin{equation}\label{eq:long-memory-burn-in}
\frac{(1-\gamma\underline\tau)^n}{s_\gamma}
\to0,
\end{equation}
for which $n\gamma\ge\nu\log(\gamma^{-1})$ with $\nu>(\beta-1/2)/\underline\tau$ is sufficient.
Then with $\theta_0 \in \R$ fixed,
\begin{equation}
\frac{\theta_n-\theta_\tau^*}{s_\gamma} \Rightarrow \N\left(0,\sigma_H^2 H \Gamma(2 H) A^{2-2 H}\right).
\end{equation}
\end{theorem}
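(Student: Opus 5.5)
We follow the three-step template described after Theorem~\ref{thm:short-memory-main}: reduce the actual iterate to the linearized recursion \eqref{eq:linearized-recursion}, remove the initialization, and identify the limit of the weighted score sum through the long-memory invariance principle.

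\textbf{Step 1: burn-in.} Write $e_n=\theta_n-\theta_\tau^*$ and let $e_n^\circ=\theta_n^\circ-\theta_\tau^*$ be the stationary error. By \eqref{eq:GMC} with $p=2$,
\[
\|e_n-e_n^\circ\|_2\le(1-\gamma\underline\tau)^n\|\theta_0-\theta_0^\circ\|_2 .
\]
Lemma~\ref{lem:long-memory-stationary-localization} gives $\|\theta_0-\theta_0^\circ\|_2=O(1)$, so \eqref{eq:long-memory-burn-in} makes this term $o(s_\gamma)$. The sufficient condition follows from $(1-\gamma\underline\tau)^n\le e^{-\underline\tau n\gamma}\le\gamma^{\underline\tau\nu}$ and $s_\gamma=\gamma^{\beta-1/2+o(1)}$. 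It therefore suffices to prove the limit for the stationary solution started at time $n-m$. I will take $m=\lceil M/\gamma\rceil$ with $M$ large, or run it from time $0$ with stationary initial value $e_0^\circ$, which lies in $L^2$ at scale $s_\gamma$.

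\textbf{Step 2: linearization.} Let $e^{\lin}$ solve \eqref{eq:linearized-recursion} from the same initial value. Then $D_t:=e_t-e_t^{\lin}$ satisfies
\[
D_t=(1-\gamma g'(-U_t))D_{t-1}-\gamma R_t ,
\]
so $|D_n|\le\gamma\sum_t(1-\gamma\underline\tau)^{n-t}|R_t|$. By \eqref{eq:conditional-remainder-bound}, which holds here by the Remark verifying Assumption~\ref{ass:anti-concentration}, and by Lemma~\ref{lem:long-memory-stationary-localization},
\[
\E|D_n|\lesssim\gamma\sum_t(1-\gamma\underline\tau)^{n-t}s_\gamma^2\lesssim s_\gamma^2=o(s_\gamma).
\]
This uses $\beta>1/2$, so that $s_\gamma\to0$. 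The initialization term $K_{0,n}e_0$ is bounded by $(1-\gamma\underline\tau)^n|e_0|$ and is negligible as in Step~1.

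\textbf{Step 3: the weighted score sum.} It remains to show
\[
-\frac{\gamma}{s_\gamma}\sum_{t=1}^nK_{t,n}g(-U_t)\Rightarrow A\int_0^\infty e^{-As}\,dB_H(s).
\]
Reversing time via stationarity, write the sum in terms of $S_\gamma(s)=\frac{\gamma}{s_\gamma}\sum_{k\le s/\gamma}g(-U_{n-k})$. The weights $K_{n-k,n}$ should be replaced by $e^{-A\gamma k}$. The error is a summation by parts:
\[
\log K_{n-k,n}+A\gamma k=-\gamma\sum_j\bigl(g'(-U_j)-A\bigr)+O(\gamma^2k).
\]
Since $g'(-U_j)-A=(2\tau-1)\bigl(\1(U_j>0)-\Prob(U_0>0)\bigr)$ is a bounded functional of a long-memory linear process, its centered partial sums over $k\le T/\gamma$ are $O_P(\gamma^{-1}\cdot\gamma^{\beta-1/2}\ell)$ by the empirical-process reduction principle \citep{Ho1997Limit,Wu2003Empirical}. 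Hence the weights converge uniformly on $[0,T]$ to $e^{-As}$ in probability. The truncation at $T$ is controlled by the partial-sum moment bound of Proposition~\ref{prop:long-memory-FCLT} together with the factor $(1-\gamma\underline\tau)^{k}$, after summing by parts. Lemma~\ref{lem:correlated-transfer} then transfers the invariance principle
\[
S_\gamma\Rightarrow -AB_H
\]
(time reversal preserves the fBM law of increments) to
\[
\int_0^T e^{-As}\,dS_\gamma(s)\Rightarrow -A\int_0^Te^{-As}\,dB_H(s),
\]
with integration by parts making this a continuous functional. Letting $T\to\infty$ gives a centered Gaussian $A\int_0^\infty e^{-As}dB_H(s)$.

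\textbf{Step 4: the variance.} The variance is $A^2\sigma_H^2\,\E\bigl[\int e^{-As}dW_H\bigr]^2$, where $W_H$ is fBM with unit variance at time $1$. Using the covariance kernel $H(2H-1)|s-u|^{2H-2}$ and substituting, this integral equals $\Gamma(2H)H\,A^{-2H}$. The resulting variance is $\sigma_H^2H\Gamma(2H)A^{2-2H}$, as claimed.

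The main obstacle is Step~3: the random weights are correlated with the long-memory scores. The weight fluctuations are of order $s_\gamma$ rather than $\sqrt\gamma$, so they vanish but only slowly. Their product with scores must be shown to be $o_P(1)$ after normalization, which is exactly where Lemma~\ref{lem:correlated-transfer} and uniform-on-compacts convergence of the weight process are needed.
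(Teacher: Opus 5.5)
Your proposal is correct and follows the paper's proof. Geometric forgetting plus \eqref{eq:long-memory-burn-in} removes the initialization; the crossing bound, combined with the conditional density bound and the $s_\gamma$-localization, makes the linearization error $O(s_\gamma^2)=o(s_\gamma)$ in $L^1$; Lemma~\ref{lem:correlated-transfer} is then applied with $v(N)=N^H\ell(N)$, $b_\gamma=s_\gamma$, $Z=-AB_H$ via Proposition~\ref{prop:long-memory-FCLT}; and the variance comes from the same fBM double integral.

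Your deviations are inessential. Passing to the stationary solution first only makes the remainder bound uniform in $t$. Your reduction-principle rate for the centered slope sums is more than the lemma needs: it only requires $\gamma\max_{k\le T/\gamma}\bigl|\sum_{j\le k}(g'(-U_j)-A)\bigr|\to0$ in probability, which the ergodic theorem already gives. Using the ergodic theorem also avoids the extra density regularity that reduction principles for indicators typically assume.
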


For these linear long-memory inputs, the limit remains Gaussian, but dependence enlarges the localization scale from $\sqrt\gamma$ to $s_\gamma$.
The proof is given in Appendix~\ref{app:long-memory-last-iterate}.

\subsection{Joint limits across expectile levels}\label{subsec:joint-long-memory}

The marginal (single-level) limits of Theorems~\ref{thm:short-memory-main} and~\ref{thm:long-memory-last-iterate} do not determine the uncertainty of differences between expectile levels estimated from the same observations.
Fix finitely many levels $\tau_1,\ldots,\tau_k\in(0,1)$. Write $g_i:=g_{\tau_i}$ and $\theta_i^*:=\theta_{\tau_i}^*$, and let $\theta_t^{(i)}$ follow \eqref{eq:expectile-sgd} at level $\tau_i$ on the same observation sequence with the same learning rate $\gamma$. Put $A_i:=\E[g_i'(\theta_i^*-X_0)]$ and $\underline\tau_i:=\min(\tau_i,1-\tau_i)$.

\begin{theorem}[Joint long-memory law]\label{thm:joint-long-memory}
Suppose Assumption~\ref{ass:long-memory} holds, and let the initial values $\theta_0^{(i)}$ be fixed. Let $\gamma\downarrow0$ and $n=n(\gamma)$ satisfy
\eqs{\max_{1\le i\le k}\frac{(1-\gamma\underline\tau_i)^n}{s_\gamma}\to0.\label{eq:joint-long-memory-burn-in}}
Put $Y_i:=A_i\int_0^\infty e^{-A_i s}\,dB_H(s)$. Then
\begin{equation}\label{eq:joint-long-memory-limit}
\frac1{s_\gamma}\begin{pmatrix}\theta_n^{(1)}-\theta_1^*\\\vdots\\\theta_n^{(k)}-\theta_k^*\end{pmatrix}\Rightarrow\begin{pmatrix}Y_1\\\vdots\\Y_k\end{pmatrix}.
\end{equation}
Every coordinate is a deterministic linear functional of the same $B_H$, so the limit is a jointly Gaussian, centered random vector.
\end{theorem}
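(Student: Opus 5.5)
The plan is to redo the single-level proof of Theorem~\ref{thm:long-memory-last-iterate} for each level separately, with all errors controlled in probability. Joint convergence then reduces to the joint convergence of finitely many functionals of one and the same score process. For each $i$, write $e_n^{(i)}=\theta_n^{(i)}-\theta_i^*$ and compare it with its random-coefficient linearization \eqref{eq:linearized-recursion} at level $\tau_i$. That proof already shows, for each fixed $i$, three things. First, the initialization term $K^{(i)}_{0,n}e^{(i)}_0$ is $o_P(s_\gamma)$; this uses \eqref{eq:joint-long-memory-burn-in}, which is exactly the single-level condition \eqref{eq:long-memory-burn-in} for every $i$. Second, the nonlinear remainder is $o_P(s_\gamma)$, by Lemma~\ref{lem:long-memory-stationary-localization} together with Proposition~\ref{prop:stationary-solution} and the conditional density bound. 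Third, the random weights $K^{(i)}_{t,n}$ can be replaced by the deterministic exponential weights $(1-\gamma A_i)^{n-t}$ at cost $o_P(s_\gamma)$. Since only finitely many $o_P(s_\gamma)$ terms arise, summing them shows that
\[
\frac{e_n^{(i)}}{s_\gamma}=-\frac{\gamma}{s_\gamma}\sum_{t=1}^n (1-\gamma A_i)^{n-t} g_i(-U^{(i)}_t)+o_P(1),\qquad i=1,\dots,k,
\]
holds simultaneously, where $U^{(i)}_t=X_t-\theta_i^*$.

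The next step is to identify the joint limit of these weighted sums. By stationarity, reverse time and write each sum as a Stieltjes-type integral $\int_0^{n\gamma} e^{-A_i s}\,dS^{(i)}_\gamma(s)$, up to $o(1)$ error from replacing $(1-\gamma A_i)^{\lfloor s/\gamma\rfloor}$ by $e^{-A_is}$. Here $S^{(i)}_\gamma(s)=(\gamma/s_\gamma)\sum_{k\le s/\gamma} g_i(-U^{(i)}_k)$. The key input is a \emph{joint} version of Proposition~\ref{prop:long-memory-FCLT}:
\[
(S^{(1)}_\gamma,\dots,S^{(k)}_\gamma)\Rightarrow(-A_1B_H,\dots,-A_kB_H)
\]
in $D([0,T])^k$, all driven by the same $B_H$.

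I would prove this through the standard long-memory reduction principle (Ho--Hsing / \citet{Wu2003Empirical}). For each $i$, define $h_i(y)=\E g_i(\theta_i^*-y-c_0\varepsilon_0-\dots)$ evaluated as a function of the shift. Its derivative at the target equals $-A_i$ times the linear coefficient, because $g_i$ is Lipschitz and $\varepsilon$ has a bounded density. Then $g_i(-U^{(i)}_k)=-A_iX_k+(\text{remainder})$ up to centering, where the remainder partial sums are $o_P(n^{3/2-\beta}\ell(n))$ uniformly on $[0,T]$. Hence each $S^{(i)}_\gamma$ equals $-A_i$ times the \emph{same} normalized partial-sum process of $X$, plus $o_P(1)$ in sup norm. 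The joint FCLT then follows from the univariate FCLT for the linear process $X$ alone. I expect the single-level proof of Proposition~\ref{prop:long-memory-FCLT} to proceed exactly this way, so the joint statement comes for free once that reduction is stated for all $i$.

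With the joint FCLT in hand, I would pass to the limit in the vector of integrals. On each window $[0,T]$, integrate by parts: $\int_0^T e^{-A_is}dS(s)=e^{-A_iT}S(T)+A_i\int_0^T e^{-A_is}S(s)\,ds$. This map is continuous on $D([0,T])^k$, so the continuous mapping theorem gives joint convergence of the truncated vector. The tails $s>T$ are handled uniformly in $\gamma$ by the partial-sum moment bound of Proposition~\ref{prop:long-memory-FCLT}: $\E|S_\gamma(s)|^2\lesssim s^{2H}$, and the factor $e^{-A_is}$ kills this growth. A standard $T\to\infty$ approximation argument (Billingsley, Theorem~3.2) then yields the joint limit $(A_i\int_0^\infty e^{-A_is}dB_H(s))_i$ after the sign flip from time reversal. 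By the Gaussian stationary-increment structure, reversing time does not change the law of $B_H$ jointly across $i$, because the same reversal is applied to every coordinate. Each $Y_i$ is an $L^2$-limit of linear functionals of the Gaussian process $B_H$, so the vector is centered jointly Gaussian. The main obstacle is the joint reduction step, specifically controlling the nonlinear (kinked) remainder of $g_i$ uniformly in time at the long-memory scale; I would handle it through the martingale/projection bounds used for the marginal FCLT, applied level by level.
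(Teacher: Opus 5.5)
Your architecture is sound and reaches the paper's conclusion, but it differs from the paper in two places.

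\textbf{Joint score FCLT.} The paper does not reduce to $\sum_t X_t$. It applies \citet[Theorem~1]{wu2006unit} to the averaged score $\frac12\bigl(g_i(\theta_i^*-x)/A_i+g_j(\theta_j^*-x)/A_j\bigr)$, whose linear coefficient is $-1$. It then uses $\Var(Z_i-Z_j)=2\Var Z_i+2\Var Z_j-4\Var\bigl((Z_i+Z_j)/2\bigr)=0$ to show that every subsequential joint limit is diagonal. Your reduction instead ties each coordinate to the single process $S^X_\gamma(s):=(\gamma/s_\gamma)\sum_{t\le s/\gamma}X_t$. This makes the common $B_H$ explicit and gives the batch reduction \eqref{eq:joint-batch-reduction} as a by-product.

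The step you flag as the main obstacle is in fact cheap. The proof of Proposition~\ref{prop:long-memory-FCLT} already records $\|\sum_{t\le m}(\xi^{(i)}_t+A_iX_t)\|_2=o(v(m))$. The FCLT for $X$ is that proposition at $\tau=1/2$, where $g_{1/2}(-X_t)=-X_t/2$. Hence $S^{(i)}_\gamma+A_iS^X_\gamma$ is a sum of two sequences with continuous limits. It is therefore C-tight (tight with continuous limit points), and its finite-dimensional laws tend to $0$. So it tends to $0$ uniformly on $[0,T]$ in probability, and no maximal inequality for the kinked remainder is needed.

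\textbf{Handling the random weights.} The paper keeps the random weights and passes $(W_\gamma,F_{\gamma,T})\Rightarrow(Z,F_T)$ through summation by parts. You first replace $K^{(i)}_{t,n}$ by $(1-\gamma A_i)^{n-t}$ and then apply the continuous mapping theorem to a fixed linear functional. Both routes work.

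Two of your justifications need repair.

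\textbf{Your third item is not \emph{already shown}.} The paper compares the kernels on a window, via \eqref{eq:transfer-kernel-uniform}, but never the weighted sums. The claim
\[
\frac{\gamma}{s_\gamma}\sum_{t=1}^n\bigl(K^{(i)}_{t,n}-(1-\gamma A_i)^{n-t}\bigr)\xi^{(i)}_t=o_P(1)
\]
is true, but it has to be proved with the machinery of Lemma~\ref{lem:correlated-transfer}. On the recent window you need summation by parts, together with \eqref{eq:transfer-kernel-uniform} and the asymptotic modulus of continuity of $W_\gamma$. On the remote past you need the block bound \eqref{eq:transfer-tail-bound} with deterministic envelopes, because the weights cannot be factored out of expectations. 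This is exactly where the weight--score dependence is handled, so it should be argued rather than cited.

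\textbf{The time reversal is placed at the wrong stage.} ``By stationarity, reverse time'' is invalid at the level of the observations. Stationarity gives only shift invariance, and linear processes with non-Gaussian innovations are generally not time-reversible. So the weighted sum is not equal in law to $\int_0^{n\gamma}e^{-A_is}\,dS^{(i)}_\gamma(s)$ built from forward partial sums. The correct order is as follows.
\begin{enumerate}
\item Shift the last $m=\lfloor T/\gamma\rfloor$ observations to $\{1,\dots,m\}$.
\item Write the window sum as approximately $\int_0^Te^{-A_i(T-s)}\,dS^{(i)}_\gamma(s)$, which is a continuous functional of the forward path.
\item Apply the reversal invariance $Z(T)-Z(T-\cdot)\deq Z$ only to the limiting fBm, jointly in $i$ because it is a single $B_H$, as in Step~5 of Lemma~\ref{lem:correlated-transfer}.
\end{enumerate}
Your final sentence invokes this fact, but after a reversal you had already applied to the data.

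A minor point: because of $\ell$, the tail moment bound is $\|S_\gamma(s)\|_2\lesssim s^{H+\epsilon}$ for $s\ge1$, by Potter's bound. The exponential weight still absorbs this.
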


The common fractional Brownian motion is filtered through $A_i e^{-A_i s}$ at level $\tau_i$. These integrands determine the dependence between the limiting coordinates.

\begin{corollary}[Covariance and rank]\label{cor:joint-long-memory-covariance}
The limiting random vector in Theorem~\ref{thm:joint-long-memory} has covariance
\eqs{
    \Sigma^H_{ij}
    =\sigma_H^2H\Gamma(2H)\frac{A_iA_j(A_i^{1-2H}+A_j^{1-2H})}{A_i+A_j}.\label{eq:joint-long-memory-covariance}
}
This follows from the deterministic-integrand isometry for fBm \citep[Section~2]{Nualart2006Stochastic}, as calculated in Appendix~\ref{app:joint-limit}.
Its rank is the number of distinct values among $A_1,\ldots,A_k$. In particular, $\Sigma^H$ is positive definite when these population curvatures are distinct, and equal population curvatures give identical limiting coordinates.
\end{corollary}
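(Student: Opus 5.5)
The plan is to compute $\Sigma^H$ directly from the representation $Y_i=A_i\int_0^\infty e^{-A_is}\,dB_H(s)$ in Theorem~\ref{thm:joint-long-memory}, and then to read off the rank from a Fourier representation of the same quadratic form.

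\emph{Covariance.} Since $H=3/2-\beta\in(1/2,1)$, the deterministic-integrand isometry for fBm \citep[Section~2]{Nualart2006Stochastic} applies. With our normalization $\E B_H(1)^2=\sigma_H^2$, it reads
\[
\E\Big[\int f\,dB_H\int h\,dB_H\Big]=\sigma_H^2H(2H-1)\int\!\!\int f(s)h(t)|s-t|^{2H-2}\,ds\,dt
\]
for integrands $f,h$ in the appropriate class. The functions $s\mapsto e^{-as}\1(s\ge0)$ with $a>0$ lie in $L^{1/H}$, so they are admissible. For $a,b>0$, split the double integral $\int_0^\infty\!\int_0^\infty e^{-as-bt}|s-t|^{2H-2}\,ds\,dt$ into the regions $s>t$ and $s<t$.
\begin{itemize}
\item On $\{s>t\}$, substitute $s=t+u$. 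The $t$-integral gives $1/(a+b)$ and the $u$-integral gives $\Gamma(2H-1)a^{1-2H}$.
\item On $\{s<t\}$, the same substitution gives $\Gamma(2H-1)b^{1-2H}/(a+b)$.
\end{itemize}
Multiplying by $\sigma_H^2H(2H-1)$ and using $(2H-1)\Gamma(2H-1)=\Gamma(2H)$, then taking $a=A_i$, $b=A_j$ and including the prefactor $A_iA_j$, yields \eqref{eq:joint-long-memory-covariance}. As a sanity check, setting $i=j$ gives $\sigma_H^2H\Gamma(2H)A^{2-2H}$, which is the variance in Theorem~\ref{thm:long-memory-last-iterate}.

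\emph{Rank.} If $A_i=A_j$, then $Y_i=Y_j$ identically, because they are the same functional of the same $B_H$. Hence rows $i$ and $j$ of $\Sigma^H$ coincide, and the rank is at most the number $m$ of distinct curvature values. Deleting duplicate coordinates, it therefore suffices to show that $\Sigma^H$ is positive definite when all $A_i$ are distinct.

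For $c\in\R^k$, write $h_c(s):=\sum_ic_iA_ie^{-A_is}\1(s\ge0)$. Then $c^\top\Sigma^Hc=\Var\big(\int h_c\,dB_H\big)$. Using the spectral form of the fBm isometry,
\[
\Var\Big(\int h_c\,dB_H\Big)=C_H\int_\R|\widehat{h_c}(\xi)|^2|\xi|^{1-2H}\,d\xi,
\qquad C_H>0,
\]
which holds because $|s-t|^{2H-2}$ is, up to a positive constant, the Fourier transform of $|\xi|^{1-2H}$. This expression vanishes only if $\widehat{h_c}=0$ a.e. Here $h_c\in L^1\cap L^2$, so $\widehat{h_c}=0$ a.e. forces $h_c=0$ a.e. on $[0,\infty)$. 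Since exponentials with distinct rates are linearly independent, this gives $c=0$. So $\Sigma^H$ restricted to distinct curvatures is positive definite, and the rank equals $m$.

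The main technical point is justifying the spectral (Fourier) form of the isometry for these integrands, or equivalently the strict positive definiteness of the kernel $|s-t|^{2H-2}$. I would cite the standard identity $\int e^{i\xi x}|\xi|^{1-2H}\,d\xi\propto|x|^{2H-2}$ for $1/2<H<1$, noting that the integrands $h_c$ belong to $L^1\cap L^2$.
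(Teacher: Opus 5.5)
Your proposal is correct. The covariance computation is the paper's: the same kernel $\sigma_H^2H(2H-1)|s-t|^{2H-2}$, the same split into $\{s>t\}$ and $\{s<t\}$, and the same identity $(2H-1)\Gamma(2H-1)=\Gamma(2H)$.

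The rank argument takes a different route. The paper works in the time domain with the moving-average representation \eqref{eq:fBM}. It writes $Y_i=A_i^2\int_0^\infty e^{-A_is}B_H(s)\,ds$ and applies stochastic Fubini to obtain $Y_i=\int_{\mathbb R}\varphi_i\,d\mathbb B$, where $\varphi_i(u)=\Gamma(1-\beta)A_i^\beta e^{-A_iu}$ for $u>0$. It then uses the ordinary Brownian isometry, together with a Vandermonde system obtained by differentiating $\sum_i d_i\varphi_i\equiv0$ at a point $u_0>0$.

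You work in the frequency domain instead. You view $\Sigma^H$ as the Gram matrix of the functions $A_ie^{-A_is}\1(s\ge0)$ under the inner product $\iint f(s)h(t)|s-t|^{2H-2}\,ds\,dt$. The spectral identity makes this inner product strictly positive definite on $L^1\cap L^2$, so the rank equals the dimension of the span of the distinct exponentials.

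Your route is shorter and, once the covariance formula is established, purely deterministic: it needs no second stochastic representation and no stochastic Fubini. Its cost is an imported result, the spectral form of the fBm inner product. You should cite it in its precise form, for example from Pipiras and Taqqu for $f\in L^1\cap L^2$, because $\int e^{i\xi x}|\xi|^{1-2H}\,d\xi\propto|x|^{2H-2}$ holds only in the distributional sense.

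The paper's route uses only the Brownian isometry and elementary calculus. It also works directly with the integration-by-parts $L^2$ definition of $Y_i$ from Lemma~\ref{lem:correlated-transfer}, rather than an abstract Wiener integral against $B_H$. Identifying those two objects is taken for granted in the covariance step, both by you and by the paper.
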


A pairwise contrast is the difference between two estimation errors, $(\theta_n^{(i)}-\theta_n^{(j)})-(\theta_i^*-\theta_j^*)$. After division by $s_\gamma$, its limit has positive variance exactly when $A_i\ne A_j$. Distinct levels alone need not give distinct population curvatures; for example, complementary levels of a symmetric marginal distribution can have equal population curvatures. The proofs are given in Appendix~\ref{app:joint-limit}.

\paragraph{Comparison with batch expectiles.}
Related long-memory asymptotics for batch expectiles are studied in \citet{ren2026asymptotics}.
For comparison, let $\textcolor{black}{\hat\theta}_{m,i}^{\rm batch}$ denote the batch expectile at level $\tau_i$, defined by $\frac{1}{m}\sum_{t=1}^m g_i(\textcolor{black}{\hat\theta}_{m,i}^{\rm batch}-X_t)=0$,
and write $\bar X_m:=m^{-1}\sum_{t=1}^m X_t$ and $a_m:=m^{H-1}\ell(m)$. 
Under Assumption~\ref{ass:long-memory}, Appendix~\ref{app:joint-batch-comparison} shows that, jointly over the fixed levels,
\eqs{
    \textcolor{black}{\hat\theta}_{m,i}^{\rm batch}-\theta_i^*
    =\bar X_m+o_P(a_m).
    \label{eq:joint-batch-reduction}
}
Thus all batch expectiles share the same first-order long-memory error, and the normalized error vector converges to
$(B_H(1),\ldots,B_H(1))$, with covariance
$\sigma_H^2\mathbf{1}\mathbf{1}^\top$. Consequently, the common leading term cancels in centered cross-level contrasts, which are $o_P(a_m)$. By contrast, for SGD with a common learning rate,
\[
    \frac{
    (\theta_n^{(i)}-\theta_n^{(j)})
    -(\theta_i^*-\theta_j^*)
    }{s_\gamma}
    \Rightarrow
    \N\!\left(
        0,\,
        \Sigma^H_{ii}+\Sigma^H_{jj}-2\Sigma^H_{ij}
    \right),
\]
which is nondegenerate whenever $A_i\ne A_j$. Hence the level-specific exponential filters preserve first-order cross-level fluctuations that cancel for the corresponding batch expectiles.

\section{Simulation Experiments}\label{sec:experiment}

We illustrate the finite-sample behavior of the last iterate under both short- and long-range dependence.  The stationary short-memory models, all with standard Gaussian innovations, are
\begin{align*}
\text{AR(1): }&X_t=0.5X_{t-1}+\varepsilon_t;\\
\text{nonlinear AR: }&X_t=0.9|X_{t-1}|+\varepsilon_t;\\
\text{GARCH$(1,1)$: }&X_t=\sigma_t\varepsilon_t,\\
&\sigma_t^2=0.1+0.1X_{t-1}^2+0.8\sigma_{t-1}^2.
\end{align*}
The first two processes are contractive causal processes, while the third introduces conditional heteroskedasticity.  The nonlinear AR has an asymmetric stationary marginal.  For long memory, we use an ARFIMA$(0,d,0)$ process with $d=0.25$,
\[
    X_t=\sum_{j=0}^{\infty}c_j\varepsilon_{t-j},
    \qquad
    c_j=\frac{\Gamma(j+d)}
               {\Gamma(j+1)\Gamma(d)}.
\]
Since $c_j\asymp j^{d-1}$, this corresponds to the setting of Section~\ref{sec:long_memory} with $\beta=1-d=0.75$.  We use $\tau = 0.9$ for Figure~\ref{fig:kolmogorov}(a)--(d) and $\tau=0.5,0.9$ for MSE with varying constant learning rates $\gamma$.

Let $Z_{t,\gamma}$ denote the SGD error standardized according to the Gaussian approximations in Theorems~\ref{thm:short-memory-main} and \ref{thm:long-memory-last-iterate}. Simulation settings and calibration details are given in Appendices~\ref{app:numerical-setup} and \ref{app:numerical-calibration}.

Figure~\ref{fig:kolmogorov}(a)--(d) examines the Gaussian
approximation along the SGD trajectory.  At each checkpoint we estimate the Kolmogorov distance from $30{,}000$ independent trajectories.
Across all four processes, the distance fluctuates after the transient period around a plateau that is generally lower for smaller learning rates. Smaller learning rates require longer transient periods, consistent with the joint $n$--$\gamma$ asymptotics of Theorems~\ref{thm:short-memory-main} and \ref{thm:long-memory-last-iterate}.
At $\gamma=10^{-4}$, the last-iterate distances for AR(1), GARCH, and ARFIMA all fall within the shaded 95\% Monte Carlo reference band.

\begin{figure}[!ht]
    \centering
    \includegraphics[width=\textwidth]{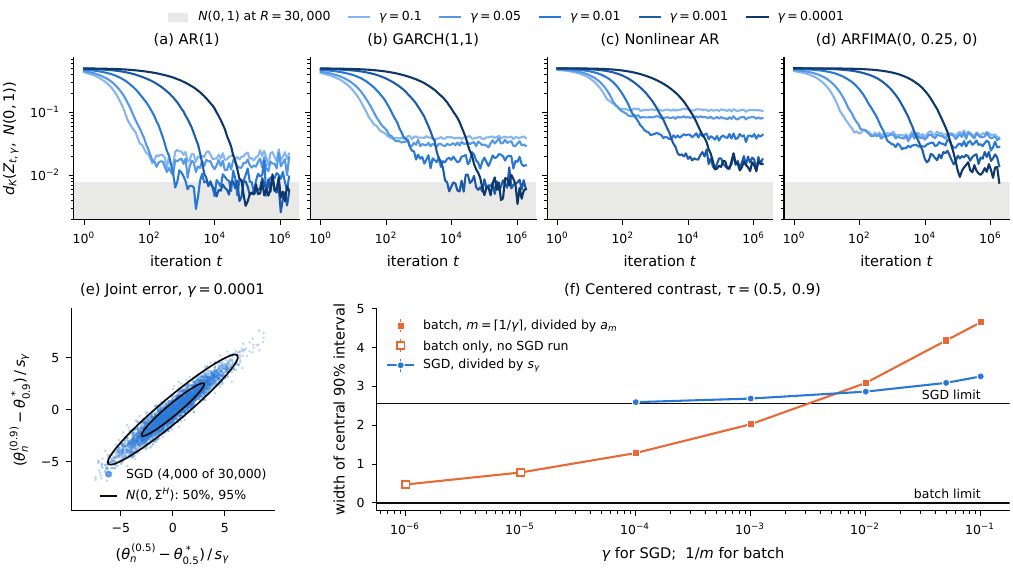}
    \caption{
    (a)--(d) Kolmogorov distance to the corresponding Gaussian limit
    along the expectile SGD trajectory for AR(1), GARCH$(1,1)$, nonlinear AR,
    and ARFIMA$(0,0.25,0)$ processes at $\tau=0.9$.
    Each curve corresponds to a different learning rate $\gamma$.
    The shaded region shows the 95\% Monte Carlo reference band for the estimated Kolmogorov distance under the $\N(0,1)$ reference, excluding calibration uncertainty.
    (e) Joint normalized SGD errors at $\tau=0.5,0.9$ and $\gamma=10^{-4}$ under ARFIMA$(0,0.25,0)$ with 50\% and 95\% contours of the joint Gaussian limit.
    (f) Central 90\% interval widths of the normalized centered contrast between $\tau=0.5$ and $0.9$ under ARFIMA$(0,0.25,0)$. Batch uses the last $m=\lceil1/\gamma\rceil$ observations, with open squares for $m=10^5,10^6$. Error bars are pointwise 95\% bootstrap intervals.
    }
    \label{fig:kolmogorov}
    \label{fig:joint-contrast}
\end{figure}

Under short memory, the empirical MSE is approximately linear in $\gamma$, whereas for ARFIMA$(0,0.25,0)$ it scales approximately as $\gamma^{1-2d}=\gamma^{1/2}$. These trends agree with the squared fluctuation scales in Theorems~\ref{thm:short-memory-main} and \ref{thm:long-memory-last-iterate}. At $\gamma=10^{-4}$ and $\tau=0.9$, the nonlinear AR MSE is within $1.3\%$ of the predicted Gaussian variance, but its Kolmogorov distance remains above the Monte Carlo reference. The reference band excludes calibration uncertainty, which may contribute to the nonlinear-AR discrepancy. Table~\ref{tab:numerical-mse} in Appendix~\ref{app:numerical-marginal} reports the MSE ratios and fitted log--log slopes.

We next examine the joint distribution of the normalized SGD errors at $\tau=0.5,0.9$ on shared ARFIMA paths. Figure~\ref{fig:joint-contrast}(e) compares these errors at $\gamma=10^{-4}$ with the Gaussian contours from Theorem~\ref{thm:joint-long-memory}. The elongated cloud shows strong positive dependence between the errors on the common observation paths. The population curvatures at these two levels differ, so their joint limit has a nonzero contrast variance despite the shared input.

Figure~\ref{fig:joint-contrast}(f) illustrates the different first-order behavior of SGD and batch contrasts. After normalization by their respective long-memory scales, the SGD interval width approaches the positive limit $2.561$ from Corollary~\ref{cor:joint-long-memory-covariance}, while the batch width decreases toward zero as the window grows. Normalizations and interval construction are detailed in Appendix~\ref{app:numerical-joint}.

\section{Conclusion}\label{sec:conclusion}
In this paper, we established distributional limits for the last iterate of constant-step expectile SGD under stationary dependent observations. 
Under short memory, the last iterate has a Gaussian limit at scale $\sqrt{\gamma}$, with variance determined by the long-run variance of the score process. For long-memory linear processes, we obtain the larger scale $\gamma^{\beta-1/2}\ell(1/\gamma)$ and a Gaussian limit expressed as an exponentially weighted fractional Brownian motion integral. Under long memory, the joint SGD limit has full-rank covariance when the population curvatures are distinct, whereas the batch limit has rank one, so centered cross-level contrasts can retain first-order fluctuations for SGD that cancel for batch expectiles.

Future work includes extending the last-iterate theory to multivariate expectile estimation and studying extreme expectiles as \(\tau\uparrow1\) and \(\gamma\downarrow0\) jointly. In this regime, $\underline\tau\to0$, weakening contraction and enlarging the localization bound \eqref{eq:stationary-localization-order} through the factor $\underline\tau^{-3/2}$. The present fixed-level bounds therefore do not directly yield this joint limit. Another direction is to characterize last-iterate limits under nonlinear long-memory processes. This extension requires new control of the score partial sums, whose normalization and limiting process may depend on the nonlinear transformation. For functions of Gaussian long-memory processes, Hermite-rank methods provide one possible approach.

\bibliographystyle{plainnat}
\bibliography{references.bib}

\clearpage
\appendix
\section*{Appendix}

Throughout the appendix, $P_tZ:=\E[Z\mid\F_t]-\E[Z\mid\F_{t-1}]$ denotes the martingale projection operator associated with the innovation filtration introduced after \eqref{eq:wold-representation}; it decomposes a causal $L^2$ random variable into its innovation contributions.

\section{Proof for Stationary Solution}
\label{app:proof_statioanry}
\subsection{Proof of Proposition~\ref{prop:stationary-solution} }
\label{app:proof_propgmc}

\begin{proof}

Recall from Section~\ref{sec:stability} that, for every $x$, the map $F_x$
is Lipschitz in $\theta$ with contraction factor
$1-\gamma\underline\tau$ under Assumption~\ref{ass:step-size}.

\textbf{Step 1.}
We establish the existence of the backward limit. 
For $m\ge 1$, the contraction property gives
\[
\begin{aligned}
    |B_{m+1}(\theta)-B_m(\theta)| &\le (1-\gamma\underline\tau)^m |F_{X_{-m}}(\theta)-\theta|\\
    &\le \gamma\overline\tau (1-\gamma\underline\tau)^m (|\theta|+|X_{-m}|).
\end{aligned}
\]
The second inequality follows from
$|F_x(\theta)-\theta| = \gamma|g(\theta-x)| \le
\gamma\overline\tau(|\theta|+|x|)$, since $g$ is
$\overline\tau$-Lipschitz.
Taking $L^p$ norms and using stationarity,
\[
    \|B_{m+1}(\theta)-B_m(\theta)\|_p \le \gamma\overline\tau (1-\gamma\underline\tau)^m (|\theta|+\|X_0\|_p).
\]
Summing along a geometric series,
\[
    \sum_{m=M}^\infty \|B_{m+1}(\theta)-B_m(\theta)\|_p \le \gamma\overline\tau (|\theta|+\|X_0\|_p) \frac{(1-\gamma\underline\tau)^M}{\gamma\underline\tau} <\infty.
\]
Therefore, for $N>M$,
\[
    \|B_N(\theta)-B_M(\theta)\|_p \le \sum_{m=M}^{N-1} \|B_{m+1}(\theta)-B_m(\theta)\|_p \to 0
\]
as $M,N\to\infty$. Thus $\{B_m(\theta)\}_{m\ge1}$ is Cauchy in $L^p$.

Now observe that
\[
\left\| \sum_{m=M}^\infty |B_{m+1}(\theta) - B_m(\theta)| \right\|_1 \leq \sum_{m=M}^\infty \|B_{m+1}(\theta) - B_m(\theta)\|_p < \infty
\]

and therefore
\[
    \sum_{m=1}^\infty |B_{m+1}(\theta)-B_m(\theta)| <\infty
    \qquad\text{a.s.}
\]
Then by the same reasoning, the backward iterates are almost surely Cauchy. 
Since $\R$ is complete and $L^p$ is complete, there exists a random variable
$\theta_0^\circ\in L^p$ such that
\[
    B_m(\theta) \to \theta_0^\circ \quad\text{a.s. and in }L^p.
\]
For any $\theta \in \R$,
$|B_m(\theta)-B_m(0)| \le (1-\gamma\underline\tau)^m|\theta|$.
Letting $m\to\infty$ shows that the limiting random variable does not depend on the starting value.

Since 
\[
    \theta_0^\circ
    =B_1(\theta)+\sum_{m=1}^\infty
      \bigl(B_{m+1}(\theta)-B_m(\theta)\bigr),
\]

taking $L^p$ norms and applying the triangle inequality yields

\[
    \|\theta_0^{\circ}\|_p \le \|B_1(\theta)\|_p+\sum_{m=1}^{\infty}\|B_{m+1}(\theta)-B_m(\theta)\|_p<\infty.
\]

\textbf{Step 2.}
Apply the same backward construction at each time $t\in\mathbb Z$ to define $\theta_t^\circ$ as a measurable function of $(X_t,X_{t-1},\ldots)$.
Stationarity of $(X_t)_{t\in\mathbb Z}$ implies that
$(\theta_t^\circ)_{t\in\mathbb Z}$ is stationary. Continuity of
$F_{X_t}$ gives
\[
    \theta_t^\circ
    =F_{X_t}\bigl(\theta_{t-1}^\circ\bigr).
\]

To prove uniqueness, let $(\widetilde\theta_t^\circ)_{t\in\mathbb Z}$ be another
stationary causal $L^p$ solution driven by the same process
$(X_t)_{t\in\mathbb Z}$.
Then, for every $m\ge1$,
\[
\begin{aligned}
    |\theta_t^\circ-\widetilde\theta_t^\circ|
    &\le (1-\gamma\underline\tau)^m
    |\theta_{t-m}^\circ-\widetilde\theta_{t-m}^\circ|.
\end{aligned}
\]
Taking $L^p$ norms gives
\[
\begin{aligned}
    \|\theta_t^\circ-\widetilde\theta_t^\circ\|_p
    &\le (1-\gamma\underline\tau)^m
    \|\theta_{t-m}^\circ-\widetilde\theta_{t-m}^\circ\|_p.
\end{aligned}
\]
By stationarity and the finite $p$th moments,
$\|\theta_{t-m}^\circ-\widetilde\theta_{t-m}^\circ\|_p
\le \|\theta_0^\circ\|_p+\|\widetilde\theta_0^\circ\|_p<\infty$.
Then letting $m\to\infty$ yields
\[
    \|\theta_t^\circ-\widetilde\theta_t^\circ\|_p=0,
\]
and hence $\theta_t^\circ=\widetilde\theta_t^\circ$ almost surely.

\textbf{Step 3.}
Finally, the forward recursion and the stationary recursion use the same
"future observations" $(X_t)_{t \geq 1}$. Therefore,

\[
    |\theta_t-\theta_t^\circ|
    \le (1-\gamma\underline\tau)^t |\theta_0-\theta_0^\circ|.
\]

Taking $L^p$ norms proves the stated geometric forgetting. Since
$(1-\gamma\underline\tau)^t\to0$ and
$\theta_t^\circ\sim\pi_\gamma$ for every $t$, it follows that
\[
    \theta_t\Rightarrow\pi_\gamma.
\]
Moreover, $\pi_\gamma$ has finite $p$th moment because
$\theta_0^\circ\in L^p$, as shown above.

\end{proof}

\subsection{Proof of Lemma~\ref{lem:stationary-localization}}
\label{app:stationary-localization}

Throughout this subsection $e_t^\circ:=\theta_t^\circ(\gamma)-\theta_\tau^*$ denotes the stationary error, so that by \eqref{eq:centered-recursion} it satisfies $e_t^\circ=e_{t-1}^\circ-\gamma g(e_{t-1}^\circ-U_t)$ with $\E[g(-U_t)]=0$.  We also write $\Theta_{U,2}:=\sum_{k\ge0}\delta_{U,2}(k)$, which equals $\Theta_{X,2}$ because $U_t=X_t-\theta_\tau^*$ gives $\delta_{U,2}(k)=\delta_{X,2}(k)$.

\begin{proof}
We bound the variance and squared mean of the stationary error.
\[
    \E[|e_0^\circ|^2] =\Var(e_0^\circ) + |\E[e_0^\circ]|^2.
\]

\textbf{Term 1: centered second moment.}
Let $(e_t^{\circ,\{0\}})$ denote the stationary solution driven by the coupled input $(U_t^{\{0\}})$ obtained by replacing $\varepsilon_0$ with an independent copy. 
Since the two inputs coincide before time $0$, $e_{-1}^{\circ,\{0\}}=e_{-1}^\circ$.

For every $k\ge0$, write $\delta_{e, 2} (k) := \|e_k^{\circ}-e_k^{\circ,\{0\}}\|_2$. 
Since the map $F_U$ is $(1 - \gamma \underline \tau)$-Lipschitz and the score function $g$ is $\overline\tau$-Lipschitz,
\begin{equation}
\begin{aligned}[b]
\delta_{e, 2} (k)
&=\|F_{U_k}(e_{k-1}^{\circ})-F_{U_k^{\{0\}}}(e_{k-1}^{\circ,\{0\}})\|_2  \\
&\leq \|F_{U_k}(e_{k-1}^{\circ})-F_{U_k}(e_{k-1}^{\circ,\{0\}})\|_2
    +\|F_{U_k}(e_{k-1}^{\circ,\{0\}})-F_{U_k^{\{0\}}}(e_{k-1}^{\circ,\{0\}})\|_2  \\
&\leq (1-\gamma\underline\tau)\|e_{k-1}^{\circ}-e_{k-1}^{\circ,\{0\}}\|_2
    +\gamma\overline\tau\delta_{U,2}(k)\\
&\le \cdots \le \gamma\overline\tau \sum_{\ell=0}^{k} (1 - \gamma \underline \tau)^{k-\ell}\delta_{U,2}(\ell),
\end{aligned}
\label{eq:e-fdm-propagation}
\end{equation}
Since $e_0^\circ$ is a causal $L^2$ random variable by Proposition~\ref{prop:gmc}, the martingale projection decomposition yields
\[
\Var(e_0^\circ) = \sum_{j=0}^\infty \|P_{-j}e_0^\circ\|_2^2
= \sum_{k=0}^\infty \|P_0e_k^\circ\|_2^2 \le \sum_{k=0}^\infty\delta_{e,2}(k)^2,
\]
where the last inequality holds by \cite{wu2005nonlinear}.

Applying Young's convolution inequality $\ell^1*\ell^2\to\ell^2$ to \eqref{eq:e-fdm-propagation},
\[
\Bigg( \sum_{k=0}^\infty\delta_{e,2}(k)^2 \Bigg)^{1/2} 
\le \Bigg( \sum_{\ell=0}^\infty\delta_{U,2}(\ell) \Bigg) 
\Bigg(\sum_{j=0}^\infty \gamma^2\overline\tau^2 (1-\gamma \underline\tau)^{2j} \Bigg)^{1/2}.
\]
Hence
\begin{equation}
\Var(e_0^\circ) 
\le \Theta_{U,2}^2 \frac{\gamma^2\overline\tau^2}{1-(1 - \gamma \underline\tau)^2} 
= \frac{\gamma\overline\tau^2} {2\underline\tau-\gamma\underline\tau^2} \Theta_{U,2}^2.
\label{eq:stationary-e-var}
\end{equation}

\textbf{Term 2: stationary mean.}
Define the random secant slope
\[
    b_t:=\begin{cases}
    \displaystyle\frac{g(e_{t-1}^\circ-U_t)-g(-U_t)}{e_{t-1}^\circ}, & e_{t-1}^\circ\ne0,\\[1.2ex]
    \underline\tau, & e_{t-1}^\circ=0.
    \end{cases}
\]
Then $\underline\tau\le b_t\le\overline\tau$ almost surely.  By stationarity and $\E[g(-U_t)]=0$
\begin{align*}
    0&=\E[e_t^\circ-e_{t-1}^\circ]=-\gamma \E[g(e_{t-1}^\circ-U_t)]\\
    &= -\gamma \E[e_{t-1}^\circ b_t + g(-U_t)] = -\gamma \E[e_{t-1}^\circ b_t].
\end{align*}

Then from $b_te_{t-1}^\circ = b_t(e_{t-1}^\circ - \E e_{t-1}^\circ) + \E [ e_{t-1}^\circ] b_t$,
\begin{align*}
    |\E[e_0^\circ]| \E[b_t]
    &=|\E[b_t(e_{t-1}^\circ-\E[e_0^\circ])]| \\
    &\le \E [| b_t | |e_{t-1}^\circ - \E[e_0^\circ]|]\\
    &\le \overline\tau \E[|e_{t-1}^\circ-\E[e_0^\circ]|],
\end{align*}

Therefore, by $\E[b_t]\ge\underline\tau$ and $|b_t|\le\overline\tau$,
\begin{equation}
|\E e_0^\circ| 
\le \frac{\overline\tau}{\underline\tau} \E[|e_{t-1}^\circ-\E e_0^\circ|]
\le \frac{\overline\tau}{\underline\tau} \sqrt{\Var(e_0^\circ)}.
\label{eq:stationary-e-mean}
\end{equation}

Combining \eqref{eq:stationary-e-var} and \eqref{eq:stationary-e-mean},
\begin{align*}
\E[|e_0^\circ|^2] &= \Var(e_0^\circ)+|\E e_0^\circ|^2  \\
&\le \Big( 1+\frac{\overline\tau^2}{\underline\tau^2} \Big) \frac{\gamma\overline\tau^2} {2\underline\tau-\gamma\underline\tau^2} \Theta_{U,2}^2 = O(\gamma).
\end{align*}
\end{proof}
\paragraph{The population score equation.}\label{app:population-score}
Differentiation under the expectation is justified by dominated convergence: $f_\tau'=g$ is Lipschitz, and on every bounded neighborhood of $\theta_\tau^*$, $|g(\theta-X_0)|\leq\overline\tau(|\theta|+|X_0|)$, an integrable bound since $X_0\in L^2$.

\section{Proof for Short-Memory Processes}\label{app:proof_short_memory}

\begingroup

\subsection{Short-memory derivation}\label{app:short-memory-derivation}

Using a first-order expansion of $g$ around $-U_t$, we write the error recursion \eqref{eq:centered-recursion} as
\begin{equation}\label{eq:original-error-expanded}
e_t=(1-\gamma g'(-U_t))e_{t-1}-\gamma g(-U_t)-\gamma R_t,
\end{equation}
where
$R_t:=g(e_{t-1}-U_t)-g(-U_t)-g'(-U_t)e_{t-1}$ is the nonlinear remainder.

\paragraph{Proof outline.}
We compare \eqref{eq:original-error-expanded} with the random-coefficient linearization \eqref{eq:linearized-recursion}, initialized at $e_0$ and driven by the same score sequence.
Equation~\eqref{eq:linearized-recursion} is \eqref{eq:original-error-expanded} without the nonlinear remainder $R_t$. The central decomposition is
\begin{equation}\label{eq:linearized-decomposition}
\frac{e_n}{\sqrt\gamma}=\frac{e_n^{\lin}}{\sqrt\gamma}+\frac{e_n-e_n^{\lin}}{\sqrt\gamma}.
\end{equation}
Proposition~\ref{prop:nonlinear-linearization} proves that the second term converges to zero in $L^1$. Lemma~\ref{lem:correlated-transfer} transfers the score functional central limit theorem (FCLT) through the transition products $K_{i,n}:=\prod_{j=i+1}^n(1-\gamma g'(-U_j))$ to identify the limit of the first term. Together with the burn-in condition \eqref{eq:burn-in-main}, these steps prove Theorem~\ref{thm:short-memory-main}.

\subsubsection{Linearization error}\label{subsec:nonlinear-linearization}

A uniformly bounded conditional density is sufficient for Assumption~\ref{ass:anti-concentration}. For every $h>0$, $\Prob(U_t=0\mid\F_{t-1})\le\overline f h$. Letting $h\downarrow0$ gives $\Prob(X_0=\theta_\tau^*)=0$, so $A=\E[g'(-U_0)]\in[\underline\tau,\overline\tau]$ is well defined.

Because $g$ is absolutely continuous, the fundamental theorem of calculus
gives
\begingroup
\begin{equation}\label{eq:remainder-integral}
R_t=\int_0^{e_{t-1}}\bigl(g'(s-U_t)-g'(-U_t)\bigr)\,ds.
\end{equation}
\endgroup
By piecewise linearity of $g$, its derivative in \eqref{eq:score-derivative} is constant on each side of zero and jumps there by magnitude $|1-2\tau|$. The integrand in \eqref{eq:remainder-integral} therefore vanishes unless $U_t$ lies between $0$ and $e_{t-1}$. This gives~\eqref{eq:crossing-bound}.
Assumption~\ref{ass:anti-concentration} bounds the conditional probability that $U_t$ lies in $[\min(0,e_{t-1}),\max(0,e_{t-1})]$.

In particular, $\E[|R_t|\mid\F_{t-1}]\le\overline f|1-2\tau|\,|e_{t-1}|^2$.

\begin{proposition}[Linearization error]\label{prop:nonlinear-linearization}
Let $\theta_0\in\R$ be fixed and let $\gamma\downarrow0$ subject to Assumption~\ref{ass:step-size}.
Suppose Assumptions~\ref{ass:short-memory} and \ref{ass:anti-concentration} hold, and let $n=n(\gamma)\to\infty$ satisfy the burn-in condition \eqref{eq:burn-in-main}.
Then
\begin{equation}\label{eq:nonlinear-linearization-result}
\frac{e_n-e_n^{\lin}}{\sqrt\gamma}\to0
\qquad\text{in }L^1 .
\end{equation}
\end{proposition}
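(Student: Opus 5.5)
The plan is to write the difference $D_t:=e_t-e_t^{\lin}$ as a linear recursion driven by the remainder and bound it in $L^1$ using \eqref{eq:conditional-remainder-bound} together with second-moment control of $e_{t-1}$. Subtracting \eqref{eq:linearized-recursion} from \eqref{eq:original-error-expanded} gives $D_0=0$ and
\[
D_t=(1-\gamma g'(-U_t))D_{t-1}-\gamma R_t,
\]
so that $D_n=-\gamma\sum_{t=1}^n K_{t,n}R_t$. Since $g'(-U_j)\in[\underline\tau,\overline\tau]$ almost surely and $\gamma\overline\tau<1$ by Assumption~\ref{ass:step-size}, we have $0\le K_{t,n}\le(1-\gamma\underline\tau)^{n-t}$ deterministically. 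This is the key simplification: the dependence between $K_{t,n}$ and $R_t$ becomes irrelevant once we take absolute values.

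Taking expectations and conditioning on $\F_{t-1}$ via \eqref{eq:conditional-remainder-bound} gives
\[
\E|D_n|\le\gamma\sum_{t=1}^n(1-\gamma\underline\tau)^{n-t}\,\overline f|1-2\tau|\,\E[e_{t-1}^2].
\]
The next step is a uniform bound on $\E[e_{t-1}^2]$. By Proposition~\ref{prop:gmc} with $p=2$ and Lemma~\ref{lem:stationary-localization},
\[
\|e_{t}\|_2\le\|e_t^\circ\|_2+(1-\gamma\underline\tau)^t\|\theta_0-\theta_0^\circ\|_2\le C\sqrt\gamma+C(1-\gamma\underline\tau)^t,
\]
where $\|\theta_0-\theta_0^\circ\|_2\le|\theta_0-\theta^*_\tau|+\|e_0^\circ\|_2$ is bounded uniformly in small $\gamma$. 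Hence $\E[e_{t-1}^2]\lesssim\gamma+(1-\gamma\underline\tau)^{2(t-1)}$.

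Inserting this bound splits the sum into two pieces.
\begin{itemize}
\item The stationary piece is $\gamma\sum_t(1-\gamma\underline\tau)^{n-t}\gamma\lesssim\gamma\cdot\gamma/(\gamma\underline\tau)=O(\gamma)$, which is $o(\sqrt\gamma)$.
\item The transient piece is $\gamma\sum_{t=1}^n(1-\gamma\underline\tau)^{n-t}(1-\gamma\underline\tau)^{2(t-1)}\le\gamma n(1-\gamma\underline\tau)^{n-1}$.
\end{itemize}
For the transient piece, using $(1-\gamma\underline\tau)^{n}\le e^{-\gamma\underline\tau n}$, we need $\gamma n e^{-\gamma\underline\tau n}=o(\sqrt\gamma)$. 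The function $x e^{-\underline\tau x}$ is decreasing for large $x$, and by \eqref{eq:burn-in-main} we have $x=n\gamma\ge\nu\log(1/\gamma)$. This gives the bound $\nu\log(1/\gamma)\gamma^{\nu\underline\tau}$, which is $o(\sqrt\gamma)$ precisely because $\nu\underline\tau>1/2$. (A sharper summation, bounding $(1-\gamma\underline\tau)^{n-t+2t-2}$ by a geometric sum in $t$, would give $\gamma^{\nu\underline\tau}$ without the log factor, but this is not needed.) Dividing by $\sqrt\gamma$ then yields \eqref{eq:nonlinear-linearization-result}.

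The main obstacle is the transient regime: early iterates are $O(1)$ rather than $O(\sqrt\gamma)$, so the quadratic remainder is not small there. It must be absorbed by the contraction factor over the remaining $n-t$ steps, and it is exactly this step that uses the burn-in threshold $\nu>1/(2\underline\tau)$. One other point needs checking: conditioning on $\F_{t-1}$ inside the product with $K_{t,n}$ is legitimate only after replacing $K_{t,n}$ by its deterministic upper bound, which is why that bound is applied first.
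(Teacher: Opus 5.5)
Your proposal is correct and follows the paper's proof essentially step for step. Both use the variation-of-constants formula, apply the deterministic envelope $0<K_{t,n}\le(1-\gamma\underline\tau)^{n-t}$ before conditioning on $\F_{t-1}$ and invoking \eqref{eq:conditional-remainder-bound}, and use the bound $\E[e_{t-1}^2]\lesssim\gamma+(1-\gamma\underline\tau)^{2(t-1)}$ from geometric forgetting plus Lemma~\ref{lem:stationary-localization}. The only difference is that you bound the transient sum crudely by $\gamma n(1-\gamma\underline\tau)^{n-1}$, at the cost of a harmless $\log(1/\gamma)$ factor, while the paper sums the geometric series to get $C(1-\gamma\underline\tau)^{n-1}$ directly, which is the sharper version you mention.
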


The proof is given in Appendix~\ref{app:nonlinear-linearization}.

\subsubsection{The last-iterate limit}\label{subsec:random-coefficient-replacement}\label{subsec:short-memory-assembly}

Unrolling \eqref{eq:linearized-recursion} assigns the score $g(-U_t)$ the transition weight $K_{t,n}=\prod_{j=t+1}^n(1-\gamma g'(-U_j))$. Under dependent observations, this weight need not be independent of the score it multiplies. Lemma~\ref{lem:correlated-transfer} in Appendix~\ref{app:transfer} transfers a functional limit for the score partial sums to the corresponding weighted sum without requiring this independence, and applies to both memory regimes.

With $a_t=g'(-U_t)$ and $\xi_t=g(-U_t)$ in Lemma~\ref{lem:correlated-transfer}, unwinding \eqref{eq:linearized-recursion} gives \eqref{eq:linearized-solution}.
The score FCLT \eqref{eq:short-memory-score-fclt} verifies \eqref{eq:transfer-fclt-assumption} with $v(m)=\sqrt m$ and $b_\gamma=\sqrt\gamma$. Lemma~\ref{lem:correlated-transfer} transfers it to the weighted sum in \eqref{eq:linearized-solution}. Its proof in Appendix~\ref{app:transfer} combines summation by parts with convergence of the random weights and control of the remote past. Together with Proposition~\ref{prop:nonlinear-linearization} and \eqref{eq:burn-in-main}, this proves Theorem~\ref{thm:short-memory-main}.

\endgroup

\subsection{Proof of Proposition~\ref{prop:nonlinear-linearization}}
\label{app:nonlinear-linearization}

Assumption~\ref{ass:anti-concentration} makes the conditional distribution function of $U_t$ Lipschitz and hence continuous. The no-atom argument in Section~\ref{subsec:nonlinear-linearization} gives $\Prob(X_0=\theta_\tau^*)=0$. The difference quotients of $g$ are bounded by $\overline\tau$, so dominated convergence gives
\[
\left.\frac{d}{d\theta}\E[g(\theta-X_0)]\right|_{\theta=\theta_\tau^*}=\E[g'(-U_0)]=A,
\]
where $A=\tau\Prob(X_0>\theta_\tau^*)+(1-\tau)\Prob(X_0<\theta_\tau^*)\in[\underline\tau,\overline\tau]$.

\begin{proof}

We can express the difference between the linearized and nonlinear errors by
iterating \eqref{eq:original-error-expanded} and
\eqref{eq:linearized-recursion}:
\begin{equation}\label{eq:linearization-variation-constants}
e_n-e_n^{\lin}=-\gamma\sum_{k=1}^n\Bigg(\prod_{j=k+1}^n(1-\gamma g'(-U_j))\Bigg)R_k .
\end{equation}

We derive the bound on $R_k$ explicitly. Since $g$ is absolutely continuous,
\[
\begin{aligned}
g(e_{k-1}-U_k)-g(-U_k)&=\int_0^{e_{k-1}}g'(s-U_k)\,ds,\\
g'(-U_k)e_{k-1}&=\int_0^{e_{k-1}}g'(-U_k)\,ds.
\end{aligned}
\]
Subtracting gives
\[
R_k=\int_0^{e_{k-1}}\bigl(g'(s-U_k)-g'(-U_k)\bigr)\,ds.
\]
If $U_k$ lies outside the interval between $0$ and $e_{k-1}$, the derivative difference vanishes almost everywhere on the integration interval. Otherwise its magnitude is at most $|1-2\tau|$, the jump of $g'$ at zero. Therefore,
\[
|R_k|\le|1-2\tau|\,|e_{k-1}|\,\1\bigl(\min(0,e_{k-1})\le U_k\le\max(0,e_{k-1})\bigr),
\]
which is \eqref{eq:crossing-bound}. Since $e_{k-1}$ is $\F_{k-1}$-measurable, the interval endpoints are fixed conditionally on $\F_{k-1}$. Applying \eqref{eq:conditional-interval-bound}, with endpoints included by continuity of the conditional distribution, yields
\[
\begin{aligned}
\E[|R_k|\mid\F_{k-1}]&\le|1-2\tau|\,|e_{k-1}|\,\Prob\bigl(\min(0,e_{k-1})\le U_k\le\max(0,e_{k-1})\mid\F_{k-1}\bigr)\\
&\le\overline f|1-2\tau|\,|e_{k-1}|^2.
\end{aligned}
\]
Taking expectations gives
\begin{equation*}
    \E[|R_k|] \leq |1 - 2 \tau| \overline f \E[|e_{k-1}|^2].
\end{equation*}

Because this proof considers the family indexed by $\gamma\downarrow0$, we
write the coupled forward and stationary iterates as $\theta_t(\gamma)$ and
$\theta_t^\circ(\gamma)$, respectively. Coupling them as in
Proposition~\ref{prop:stationary-solution} gives
\begin{align*}
    \E[|e_t|^2]
    &=\E\big[|\theta_t(\gamma)-\theta_t^\circ(\gamma)
              +\theta_t^\circ(\gamma)-\theta_\tau^*|^2\big] \\
    &\leq2\E\big[|\theta_t(\gamma)-\theta_t^\circ(\gamma)|^2\big]
      +2\E\big[|\theta_t^\circ(\gamma)-\theta_\tau^*|^2\big].
\end{align*}

Applying Proposition~\ref{prop:stationary-solution} and Lemma~\ref{lem:stationary-localization} to deal with the first and second terms respectively yields
\begin{equation}\label{eq:forward-second-moment}
\begin{split}
\E[|e_k|^2]
&\leq2(1-\gamma\underline\tau)^{2k}
       \E[|\theta_0(\gamma)-\theta_0^\circ(\gamma)|^2]
       +2\E[|\theta_k^\circ(\gamma)-\theta_\tau^*|^2]\\
&\leq C(1-\gamma\underline\tau)^{2k}+C\gamma,
\end{split}
\end{equation}

where $C$ is independent of $k$ and all sufficiently small $\gamma$.
Consequently,
\begin{equation}\label{eq:forward-remainder-bound}
    \E[|R_k|]
    \leq C(1-\gamma\underline\tau)^{2(k-1)}+C\gamma.
\end{equation}
Since $g'(-U_j)\ge\underline\tau$ and $\gamma\overline\tau<1$ by Assumption~\ref{ass:step-size},
\[
0<\prod_{j=k+1}^n(1-\gamma g'(-U_j))\le(1-\gamma\underline\tau)^{n-k} ,
\]
so that, using \eqref{eq:forward-remainder-bound},
\begin{align*}
\E[|e_n-e_n^{\lin}|]&\le C\gamma\sum_{k=1}^n(1-\gamma\underline\tau)^{n-k}((1-\gamma\underline\tau)^{2(k-1)}+\gamma)\\
&\le C(1-\gamma\underline\tau)^{n-1}+C\gamma .
\end{align*}
Condition \eqref{eq:burn-in-main} gives $(1-\gamma\underline\tau)^n=o(\sqrt\gamma)$, and $\gamma=o(\sqrt\gamma)$, which proves \eqref{eq:nonlinear-linearization-result}.
\end{proof}

\subsection{Transfer of score partial-sum limits}
\label{app:transfer}

Here $D([0,T])$ is the space of right-continuous paths with left limits. Convergence in the Skorokhod $J_1$ topology means uniform convergence after strictly increasing continuous time changes from $[0,T]$ onto itself that converge uniformly to the identity.

\begin{lemma}\label{lem:correlated-transfer}
Let $(a_t,\xi_t)_{t\in\mathbb Z}$ be jointly stationary with $\E\xi_0=0$. Suppose $(a_t)$ is ergodic, $0<\underline a\le a_t\le\overline a<\infty$, and $A:=\E a_0$. Let $v(m)$ be a positive regularly varying function with index $\kappa\in(0,1)$, meaning $v(\lambda m)/v(m)\to\lambda^\kappa$ as $m\to\infty$ for each $\lambda>0$. Assume $\|\sum_{t=1}^m\xi_t\|_2\le Cv(m)$ for all $m\ge1$ and, for every $T>0$,
\eqs{\frac1{v(N)}\sum_{k=1}^{\lfloor Ns\rfloor}\xi_k\Rightarrow Z(s)\quad\text{in }D([0,T]),\ J_1,\label{eq:transfer-fclt-assumption}}
where $Z$ is a possibly zero multiple of fractional Brownian motion with Hurst index $\kappa$. Put $b_\gamma:=\gamma v(1/\gamma)$ and define the transition products $K_{i,n}:=\prod_{j=i+1}^n(1-\gamma a_j)$, with $K_{n,n}=1$. For every deterministic $n=n(\gamma)$ satisfying $n\gamma\to\infty$, as $\gamma\downarrow0$,
\eqs{\frac\gamma{b_\gamma}\sum_{t=1}^nK_{t,n}\xi_t\Rightarrow\int_0^\infty e^{-As}\,dZ(s).\label{eq:transfer-stationary-limit}}
\end{lemma}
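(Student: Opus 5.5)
Write $S_k:=\sum_{t=1}^k\xi_t$ with $S_0=0$. Since $K_{t,n}-K_{t-1,n}=\gamma a_tK_{t,n}$, Abel summation gives
\[
\sum_{t=1}^nK_{t,n}\xi_t=S_n-\gamma\sum_{t=1}^{n}a_tK_{t,n}S_{t-1}
=\sum_{t=1}^n\gamma a_tK_{t,n}\,(S_n-S_{t-1})+K_{0,n}S_n .
\]
I would work with the second form and reverse time, putting $r:=n-t$. The increments $S_n-S_{n-r-1}$ form a sum of $r+1$ consecutive $\xi$'s ending at time $n$, and the weights $K_{n-r,n}=\prod_{j=n-r+1}^n(1-\gamma a_j)$ depend only on the $a_j$ in that same window. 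By joint stationarity, the whole expression has the same law as the backward object $\sum_{r=0}^{n-1}\gamma a_{-r}\widetilde K_r\,\widetilde S_r+\widetilde K_n \widetilde S_n$, where $\widetilde S_r:=\sum_{j=-r}^{0}\xi_j$ and $\widetilde K_r:=\prod_{j=-r+1}^{0}(1-\gamma a_j)$. The FCLT~\eqref{eq:transfer-fclt-assumption} transfers to this backward sum in distribution: the time-reversed partial-sum process over a window has the same law as a forward one, and $Z$, being a multiple of fBm, has stationary increments and is reversible, so $Z(s)\deq Z(s)$ for the reversed path. The target is then $\int_0^\infty e^{-As}\,dZ(s)=\int_0^\infty Ae^{-As}Z(s)\,ds$, obtained by integration by parts, which is valid since $Z(s)=O(s^{\kappa+\epsilon})$ almost surely.

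Normalize with $N=1/\gamma$. Since $\gamma/b_\gamma=1/v(1/\gamma)$, and reading $r=\lfloor s/\gamma\rfloor$, the main term becomes the Riemann-type sum
\[
\int_0^{n\gamma} a_{-\lfloor s/\gamma\rfloor}\,\widetilde K_{\lfloor s/\gamma\rfloor}\,\frac{\widetilde S_{\lfloor s/\gamma\rfloor}}{v(1/\gamma)}\,ds .
\]
The argument then has three pieces.

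\emph{Truncation.} Fix $T$ and split at $s=T$. For the tail, $\widetilde K_r\le(1-\gamma\underline a)^r\le e^{-\underline a\gamma r}$ and $\|\widetilde S_r\|_2\le Cv(r)$. Potter bounds give $v(s/\gamma)/v(1/\gamma)\le C s^{\kappa+\epsilon}$ for $s\ge1$, and for $s<1$ we use boundedness. Hence the $L^1$ norm of the part with $s>T$ is at most $\overline a\int_T^\infty e^{-\underline a s}Cs^{\kappa+\epsilon}ds\to0$ as $T\to\infty$, uniformly in $\gamma$. The boundary term $K_{0,n}S_n/v(1/\gamma)$ is $O(e^{-\underline a n\gamma}(n\gamma)^{\kappa+\epsilon})\to0$ in $L^2$ because $n\gamma\to\infty$.

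\emph{Random weights.} On $[0,T]$, replace $\widetilde K_{\lfloor s/\gamma\rfloor}$ by $e^{-As}$ and $a_{-\lfloor s/\gamma\rfloor}$ by $A$. The ergodic theorem gives $\gamma\sum_{j=-\lfloor s/\gamma\rfloor+1}^0a_j\to As$ almost surely, uniformly on $[0,T]$ because the integrand is monotone and the limit continuous. Together with $\log(1-x)=-x+O(x^2)$, this yields $\sup_{s\le T}|\widetilde K_{\lfloor s/\gamma\rfloor}-e^{-As}|\to0$ almost surely. The error is then bounded by this supremum times $\int_0^T|\widetilde S|/v\,ds$, which is $O_P(1)$, so it vanishes.

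The factor $a_{-r}$ is the main obstacle. It cannot be replaced pointwise, because $a$ need not converge; only its averages do. I would handle it by a second summation by parts in $r$. Writing $a_{-r}=A+(a_{-r}-A)$, one has $\sum_r\gamma(a_{-r}-A)\widetilde K_r\widetilde S_r/v$. Over blocks of length $\lfloor\eta/\gamma\rfloor$, the ergodic averages of $a-A$ are uniformly $o(1)$ almost surely. The integrand $\widetilde K_r\widetilde S_r/v$ varies little within a block up to its oscillation, which is tight by the $J_1$ convergence to a continuous limit and so small in probability as $\eta\to0$. Hence this error is $o_P(1)$ after letting $\gamma\to0$ and then $\eta\to0$.

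\emph{Limit identification.} After these reductions, the main term is $\int_0^TAe^{-As}\widetilde S_{\lfloor s/\gamma\rfloor}/v(1/\gamma)\,ds$. This is a continuous functional on $D([0,T])$: the map is continuous at continuous limits and $J_1$ convergence there is uniform. So the continuous mapping theorem gives convergence to $\int_0^TAe^{-As}Z(s)\,ds$. Letting $T\to\infty$, with the uniform tail bound above, completes a standard approximation argument (Billingsley, Theorem~3.2) and proves \eqref{eq:transfer-stationary-limit}.
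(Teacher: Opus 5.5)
Your proof is correct, but it is organized differently from the paper's.

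\textbf{How the routes differ.} The paper first splits off the $m=\lfloor T/\gamma\rfloor$ most recent terms and shifts that window to $\{1,\dots,m\}$. It then sums by parts against the forward partial-sum path, obtaining $W_\gamma(T)-\int W_\gamma\,d\mu_{\gamma,T}$ with limit $\int_0^Te^{-A(T-s)}\,dZ(s)$. It reverses the limiting fBm to match this with $\int_0^Te^{-As}\,dZ(s)$. Finally, it bounds the remote past separately by summation by parts inside blocks of length $\lceil1/\gamma\rceil$.

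You instead sum by parts once, globally, against the tail sums $S_n-S_{t-1}$, and shift so that the sum ends at time $0$. This puts the kernel $e^{-As}$ in the right orientation from the start.

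\textbf{What each route buys.} Your tail estimate is simpler than the paper's Step~4. Each summand is a deterministically bounded weight times a single partial sum, so Minkowski's inequality, the envelope $\widetilde K_r\le e^{-\underline a\gamma r}$ and Potter's bound suffice, with no blocking.

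On the other hand, the factor $a_{-r}$ needs no separate treatment. You single it out as the main obstacle and handle it with ergodic block sums plus the modulus of continuity of the partial-sum path; that is valid but unnecessary. Since $\widetilde K_{r+1}=(1-\gamma a_{-r})\widetilde K_r$, your weights satisfy $\gamma a_{-r}\widetilde K_r=\widetilde K_r-\widetilde K_{r+1}$. They therefore form a positive measure with distribution function $1-\widetilde K_{\lfloor s/\gamma\rfloor+1}$. The uniform convergence $\widetilde K_{\lfloor s/\gamma\rfloor}\to e^{-As}$ that you already prove then gives convergence to $Ae^{-As}\,ds$, in the sense needed to integrate a continuous path. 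That telescoping is exactly the paper's device in Step~3, and its partition argument there is your block argument in another form.

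\textbf{One justification to restate.} The backward partial sums $(\sum_{j=-r}^0\xi_j)_{r\ge0}$ do not in general have the law of the forward ones $(\sum_{j=1}^{r+1}\xi_j)_{r\ge0}$. That would require time-reversibility of $(\xi_t)$, which typically fails for causal nonlinear or non-Gaussian processes. What stationarity does give is $(\sum_{j=-r}^0\xi_j)_{0\le r\le m}\deq(S_{m+1}-S_{m-r})_{0\le r\le m}$. The assumed FCLT and continuous mapping then yield the limit $Z(T)-Z(T-s)$, $0\le s\le T$, which has the law of $Z$ by the fBm covariance. With this reading your backward FCLT is valid. As in the paper, time reversal is then applied only to the Gaussian limit, not to the observations.
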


\begin{proof}
For sufficiently small $\gamma$, $0<\gamma<1/\overline a$. We use the assumed block bound
\eqs{\Bigg\|\sum_{t=1}^m\xi_t\Bigg\|_2\le Cv(m),\qquad m\ge1,\label{eq:transfer-block-assumption}}
which in particular gives $\xi_0\in L^2$ by stationarity. Fix a deterministic $n=n(\gamma)$ with $n\gamma\to\infty$. Put $q_\gamma:=1-\gamma\underline a$ and define the normalized sum
\eqs{
    \mathcal I_\gamma:=\frac\gamma{b_\gamma}\sum_{t=1}^nK_{t,n}\xi_t.\label{eq:transfer-stationary-filters}
}
For fixed $T>0$, let $m=\lfloor T/\gamma\rfloor$. The proof follows the chain
\[
\begin{aligned}
\mathcal I_\gamma
&=
\underbrace{\frac{\gamma}{b_\gamma}
\sum_{t=n-m+1}^n K_{t,n}\xi_t}_{\text{recent window}}
+
\underbrace{\frac{\gamma}{b_\gamma}
\sum_{t=1}^{n-m}K_{t,n}\xi_t}_{\text{remote past}} \\
&\approx
\frac{\gamma}{b_\gamma}\sum_{k=1}^m K_{k,m}\xi_k
\\
&=
W_\gamma(T)-\int_0^T W_\gamma(s)\,\mu_{\gamma,T}(ds)
\\
&\Rightarrow
\int_0^T e^{-A(T-s)}\,dZ(s)
\overset d=
\int_0^T e^{-As}\,dZ(s),
\end{aligned}
\]
while the remote-past term is $O_{L^2}(e^{-\underline a T})$.
Steps 1--2 establish the kernel approximation, Step 3 the fixed-window
transfer, Step 4 the tail bound, and Step 5 lets $T\to\infty$.

The improper integral in the conclusion is defined by the $L^2$ limit of its finite-interval integration-by-parts expression in Step 5.

Since $v$ has index $\kappa\in(0,1)$, $v(x)\to\infty$ and $v(x)/x\to0$, so $b_\gamma\to0$. All constants below are independent of sufficiently small $\gamma$ and of deterministic time shifts. Constants in the tail bounds are also independent of the truncation time $T$.

\paragraph{Step 1: Uniform control of the centered slope sums.}
Put $\eta_t:=a_t-A$, $S_k^\eta:=\sum_{j=1}^k\eta_j$, and $S_0^\eta:=0$. Ergodicity gives $S_k^\eta/k\to0$ almost surely. To obtain the maximum needed for the products, fix $\epsilon>0$. Almost surely, there is a finite $k_\epsilon$ such that $|S_k^\eta|\le\epsilon k$ for every $k\ge k_\epsilon$. Consequently,
\eqs*{
    \frac1m\max_{0\le k\le m}|S_k^\eta|
    &\le\frac1m\max_{0\le k<k_\epsilon}|S_k^\eta|+\frac1m\max_{k_\epsilon\le k\le m}\epsilon k\\
    &\le\frac1m\max_{0\le k<k_\epsilon}|S_k^\eta|+\epsilon.
}
Letting $m\to\infty$ and then $\epsilon\downarrow0$ proves
\eqs{\frac1m\max_{0\le k\le m}|S_k^\eta|\asto0.\label{eq:transfer-ergodic-maximum}}
For fixed $T>0$, set $m:=\lfloor T/\gamma\rfloor$. Since $\gamma m\to T$,
\eqs{\gamma\max_{0\le k\le m}|S_k^\eta|=(\gamma m)\frac1m\max_{0\le k\le m}|S_k^\eta|\asto0.\label{eq:transfer-scaled-maximum}}
Stationarity gives the same convergence in probability for any deterministically shifted block of length $m$:
\[
\max_{k\le m}\left|\sum_{j=1}^k(a_{r+j}-A)\right|
\overset d= \max_{k\le m}|S_k^\eta|
\]

\paragraph{Step 2: Uniform approximation of the transition products.}
For $1\le k\le m$, use the products $K_{k,m}$ from the statement and set $h_T(s):=e^{-A(T-s)}$, $0\le s\le T$. For $0\le x\le1/2$,
\eqs*{\log(1-x)+x=-\int_0^x\frac{u}{1-u}\,du,\qquad |\log(1-x)+x|\le\int_0^x2u\,du=x^2.}
Thus, for $\gamma\overline a\le1/2$,
\eqs{
    \log K_{k,m}
    &=\sum_{j=k+1}^m\log(1-\gamma a_j)
    =-\gamma\sum_{j=k+1}^m a_j+\rho_{\gamma,k}\\
    &=-A\gamma(m-k)-\gamma(S_m^\eta-S_k^\eta)+\rho_{\gamma,k},\\
    |\rho_{\gamma,k}|&\le\gamma^2\sum_{j=k+1}^m a_j^2\le\gamma^2m\overline a^2.
    \label{eq:transfer-log-expansion}
}
Since $0\le T-\gamma m<\gamma$, subtraction of $\log h_T(k\gamma)=-A(T-k\gamma)$ gives
\eqs*{
    \max_{1\le k\le m}|\log K_{k,m}-\log h_T(k\gamma)|
    &\le A\gamma+2\gamma\max_{0\le k\le m}|S_k^\eta|+\gamma^2m\overline a^2
    \pto 0.
}
Both logarithms are nonpositive, and $|e^x-e^y|\le|x-y|$ for $x,y\le0$. Therefore,
\eqs{D_{\gamma,T}:=\max_{1\le k\le m}|K_{k,m}-h_T(k\gamma)|\pto0.\label{eq:transfer-kernel-uniform}}
The sign of the increments will also be needed. Directly from the product definition,
\eqs{
    K_{k,m}&=(1-\gamma a_{k+1})K_{k+1,m},\\
    K_{k+1,m}-K_{k,m}&=\gamma a_{k+1}K_{k+1,m}\ge0,\\
    \sum_{k=1}^{m-1}(K_{k+1,m}-K_{k,m})&=K_{m,m}-K_{1,m}=1-K_{1,m}\le1.
    \label{eq:transfer-positive-increments}
}

\paragraph{Step 3: Transfer on a fixed window by summation by parts.}
Define the rescaled score partial-sum process by
\eqs{W_\gamma(s):=\frac\gamma{b_\gamma}\sum_{k=1}^{\lfloor s/\gamma\rfloor}\xi_k=\frac1{v(1/\gamma)}\sum_{k=1}^{\lfloor s/\gamma\rfloor}\xi_k.\label{eq:transfer-score-path}}
We first relate its time scale to \eqref{eq:transfer-fclt-assumption}. Let $N_\gamma:=\lfloor1/\gamma\rfloor$, $\lambda_\gamma:=(\gamma N_\gamma)^{-1}$, and $V_N(s):=v(N)^{-1}\sum_{k=1}^{\lfloor Ns\rfloor}\xi_k$. Then
\eqs*{
    W_\gamma(s)&=\frac{v(N_\gamma)}{v(1/\gamma)}V_{N_\gamma}(\lambda_\gamma s),\qquad
    \frac{v(N_\gamma)}{v(1/\gamma)}\to1,\qquad
    \lambda_\gamma\to1.
}
Apply \eqref{eq:transfer-fclt-assumption} on $[0,T+1]$. Because its limit is continuous, a Skorokhod representation gives uniform convergence of $V_{N_\gamma}$ to $Z$ along any sequence $\gamma\downarrow0$. The time change is harmless, since on that representation
\eqs*{
    \sup_{s\le T}|V_{N_\gamma}(\lambda_\gamma s)-Z(s)|
    &\le\sup_{u\le T+1}|V_{N_\gamma}(u)-Z(u)|
    +\sup_{s\le T}|Z(\lambda_\gamma s)-Z(s)|
    \to0.
}
Here the second term tends to zero by uniform continuity of $Z$. It follows that $W_\gamma\Rightarrow Z$ in $D([0,T])$ with the $J_1$ topology.

Consider the weighted sum on this forward block:
\eqs{I_{\gamma,T}:=\frac\gamma{b_\gamma}\sum_{k=1}^mK_{k,m}\xi_k.\label{eq:transfer-window-sums}}
Put $w_k:=W_\gamma(k\gamma)$, so that $w_0=0$ and $w_m=W_\gamma(T)$. Summation by parts is the exact identity
\eqs{
    I_{\gamma,T}
    &=\sum_{k=1}^mK_{k,m}(w_k-w_{k-1})\\
    &=K_{m,m}w_m-K_{1,m}w_0-\sum_{k=1}^{m-1}(K_{k+1,m}-K_{k,m})w_k\\
    &=W_\gamma(T)-\sum_{k=1}^{m-1}(K_{k+1,m}-K_{k,m})W_\gamma(k\gamma).
    \label{eq:transfer-abel-window}
}
Summation by parts moves the random weights from the score increments onto the partial-sum path. The resulting discrete derivative of the kernel is a positive measure of total mass at most one.

To justify the limit of the last sum, define positive measures on $[0,T]$ by
\eqs*{
    \mu_{\gamma,T}&:=\sum_{k=1}^{m-1}(K_{k+1,m}-K_{k,m})\delta_{k\gamma},\qquad
    \mu_T(ds):=Ae^{-A(T-s)}\,ds,
}
where $\delta_x$ is the unit point mass at $x$. Both measures have mass at most one. Write $F_{\gamma,T}(s):=\mu_{\gamma,T}([0,s])$ and $F_T(s):=\mu_T([0,s])$. For $j_\gamma(s):=\min(m,\lfloor s/\gamma\rfloor+1)$, telescoping gives
\eqs*{
    F_{\gamma,T}(s)&=K_{j_\gamma(s),m}-K_{1,m},\qquad F_T(s)=h_T(s)-h_T(0),\\
    |F_{\gamma,T}(s)-F_T(s)|
    &\le|K_{j_\gamma(s),m}-h_T(j_\gamma(s)\gamma)|+|h_T(j_\gamma(s)\gamma)-h_T(s)|\\
    &\quad+|K_{1,m}-h_T(\gamma)|+|h_T(\gamma)-h_T(0)|\\
    &\le2D_{\gamma,T}+2A\gamma.
}
The last inequality uses $|j_\gamma(s)\gamma-s|\le\gamma$ and $\sup_{s\le T}|h_T'(s)|\le A$. In particular,
\eqs{\|F_{\gamma,T}-F_T\|_\infty\pto0.\label{eq:transfer-measure-convergence}}

We record explicitly why this convergence is sufficient for integrating a continuous path. Let $z_\gamma\to z$ uniformly, with $z$ continuous. For a partition $0=t_0<\cdots<t_R=T$ of mesh at most $\delta$, put $z^\delta(s):=z(t_{r-1})$ on $(t_{r-1},t_r]$. Define $\omega_z(\delta):=\sup_{|s-t|\le\delta}|z(s)-z(t)|$. None of the measures has an atom at zero, so
\eqs{
    \left|\int z_\gamma\,d\mu_{\gamma,T}-\int z\,d\mu_T\right|
    &\le\left|\int(z_\gamma-z)\,d\mu_{\gamma,T}\right|
    +\left|\int(z-z^\delta)\,d\mu_{\gamma,T}\right|\\
    &\quad+\left|\int z^\delta\,d(\mu_{\gamma,T}-\mu_T)\right|
    +\left|\int(z^\delta-z)\,d\mu_T\right|\\
    &\le\|z_\gamma-z\|_\infty+2\omega_z(\delta)\\
    &\quad+\sum_{r=1}^R|z(t_{r-1})|
    \left|(F_{\gamma,T}-F_T)(t_r)-(F_{\gamma,T}-F_T)(t_{r-1})\right|\\
    &\le\|z_\gamma-z\|_\infty+2\omega_z(\delta)
    +2R\|z\|_\infty\|F_{\gamma,T}-F_T\|_\infty.
    \label{eq:transfer-integral-continuity}
}
First let $\gamma\downarrow0$ with the partition fixed, and then let $\delta\downarrow0$. The right-hand side tends to zero.

Since the limiting cumulative function in \eqref{eq:transfer-measure-convergence} is deterministic, Slutsky's theorem gives the joint convergence
\eqs*{(W_\gamma,F_{\gamma,T})\Rightarrow(Z,F_T).}
On a Skorokhod representation both coordinates converge uniformly, because their limits are continuous. Apply \eqref{eq:transfer-integral-continuity} on that representation, and use \eqref{eq:transfer-abel-window}. This yields
\eqs{
    I_{\gamma,T}&\Rightarrow J_T,\qquad
    J_T:=Z(T)-A\int_0^T e^{-A(T-s)}Z(s)\,ds
    =\int_0^T e^{-A(T-s)}\,dZ(s).
    \label{eq:transfer-window-limit}
}
The last equality defines the integral by integration by parts, using $Z(0)=0$. This joint argument does not require independence of the score path and the products.

\paragraph{Step 4: A uniform bound for the remote past.}
Write $w_{\gamma,j}:=K_{-j,0}$, $j\ge0$. The product definition and the slope bounds imply, for every realization,
\eqs{
    0<w_{\gamma,j}\le q_\gamma^j,\qquad
    w_{\gamma,j+1}=(1-\gamma a_{-j})w_{\gamma,j},\qquad
    0\le w_{\gamma,j}-w_{\gamma,j+1}=\gamma a_{-j}w_{\gamma,j}\le\gamma\overline a q_\gamma^j.
    \label{eq:transfer-tail-envelopes}
}
Fix an integer $0\le J<n$ and let $L_\gamma:=\lceil1/\gamma\rceil$. Partition $J\le j<n$ into blocks starting at $p_b:=J+bL_\gamma<n$, with lengths $r_b:=\min(L_\gamma,n-p_b)$. Within a block, set
\eqs*{S_{b,r}:=\sum_{i=0}^{r-1}\xi_{-p_b-i},\qquad 1\le r\le L_\gamma,\qquad S_{b,0}:=0.}
The sum can be written in increasing time order. By stationarity its distribution equals that of $\sum_{t=1}^r\xi_t$. Hence \eqref{eq:transfer-block-assumption} gives $\|S_{b,r}\|_2\le Cv(r)$ uniformly in $b$ and $J$.

We also need a bound uniform over the lengths within one block. Potter's bound for the regularly varying function $v$ gives, for some $r_0$ and $r_0\le r\le L_\gamma$,
\eqs*{\frac{v(r)}{v(L_\gamma)}\le C\left(\frac r{L_\gamma}\right)^{\kappa/2}\le C.}
The finitely many $r<r_0$ obey the same bound for small $\gamma$, since $v(L_\gamma)\to\infty$. Thus
\eqs{\sup_{1\le r\le L_\gamma}\|S_{b,r}\|_2\le C v(L_\gamma).\label{eq:transfer-block-norms}}
This is a bound on each partial-sum norm, not a bound on the norm of their maximum.

Apply summation by parts within the block, including a possibly shorter final block, before taking norms:
\eqs*{
    \sum_{i=0}^{r_b-1}w_{\gamma,p_b+i}\xi_{-p_b-i}
    &=\sum_{i=0}^{r_b-1}w_{\gamma,p_b+i}(S_{b,i+1}-S_{b,i})\\
    &=w_{\gamma,p_b+r_b-1}S_{b,r_b}
    +\sum_{i=0}^{r_b-2}(w_{\gamma,p_b+i}-w_{\gamma,p_b+i+1})S_{b,i+1}.
}
Using the deterministic envelopes in \eqref{eq:transfer-tail-envelopes} inside each norm, followed by \eqref{eq:transfer-block-norms}, gives
\eqs{
    \Bigg\|\sum_{i=0}^{r_b-1}w_{\gamma,p_b+i}\xi_{-p_b-i}\Bigg\|_2
    &\le q_\gamma^{p_b+r_b-1}\|S_{b,r_b}\|_2
    +\gamma\overline a\sum_{i=0}^{r_b-2}q_\gamma^{p_b+i}\|S_{b,i+1}\|_2\\
    &\le Cq_\gamma^{p_b}v(L_\gamma)(1+\gamma\overline a L_\gamma).
    \label{eq:transfer-one-block-tail}
}
Here $1\le\gamma L_\gamma<1+\gamma$, and regular variation implies
\eqs*{\frac{\gamma v(L_\gamma)}{b_\gamma}=\frac{v(L_\gamma)}{v(1/\gamma)}\to1.}
Minkowski's inequality and the geometric series therefore yield
\eqs{
    \Bigg\|\frac\gamma{b_\gamma}\sum_{j=J}^{n-1} K_{-j,0}\xi_{-j}\Bigg\|_2
    &\le C\sum_{b=0}^\infty q_\gamma^{J+bL_\gamma}
    =C\frac{q_\gamma^J}{1-q_\gamma^{L_\gamma}}\\
    &\le\frac{C}{1-e^{-\underline a}}q_\gamma^J
    \le C' e^{-\underline a\gamma J}.
    \label{eq:transfer-tail-bound}
}
The denominator bound follows from $q_\gamma^{L_\gamma}\le e^{-\underline a\gamma L_\gamma}\le e^{-\underline a}$. The bound is uniform in $J$ and $n$, and also holds for $J=n$, when the sum is empty.

In particular, taking $J=\lfloor T/\gamma\rfloor$ gives a bound $Ce^{-\underline a T}$ for the tail, after absorbing $e^{\underline a\gamma}$ into the constant.
All bounds used deterministic envelopes for the random weights; no expectation of a product of a weight and a score was factorized.

\paragraph{Step 5: The limit of the finite weighted sum.}
Fix $T>0$ and set $m=\lfloor T/\gamma\rfloor$. Since $n\gamma\to\infty$, we have $m\le n$ for all sufficiently small $\gamma$. Keep the most recent $m$ terms of \eqref{eq:transfer-stationary-filters}:
\eqs*{\widetilde I_{\gamma,T}:=\frac\gamma{b_\gamma}\sum_{t=n-m+1}^nK_{t,n}\xi_t.}
Shifting the entire pair $(a_t,\xi_t)$ by $-(n-m)$ and reindexing $k=t-(n-m)$ gives
\eqs*{\widetilde I_{\gamma,T}\deq\frac\gamma{b_\gamma}\sum_{k=1}^mK_{k,m}\xi_k=I_{\gamma,T}\Rightarrow J_T.}
The convergence follows from \eqref{eq:transfer-window-limit}. For the remaining terms, stationarity and $j=n-t$ give
\eqs*{
    \|\mathcal I_\gamma-\widetilde I_{\gamma,T}\|_2
    &=\Bigg\|\frac\gamma{b_\gamma}\sum_{t=1}^{n-m}K_{t,n}\xi_t\Bigg\|_2
    =\Bigg\|\frac\gamma{b_\gamma}\sum_{j=m}^{n-1}K_{-j,0}\xi_{-j}\Bigg\|_2
    \le Ce^{-\underline a\gamma m}\le C'e^{-\underline a T},
}
by \eqref{eq:transfer-tail-bound}.

To identify the limit, put $\sigma_Z^2:=\E Z(1)^2$. The covariance of $Z$ is
\eqs*{\E[Z(s)Z(t)]=\frac{\sigma_Z^2}{2}(s^{2\kappa}+t^{2\kappa}-|t-s|^{2\kappa}).}
For fixed $T$, let $\widetilde Z_T(s):=Z(T)-Z(T-s)$, $0\le s\le T$. Substituting this covariance formula gives
\eqs*{
    \E[\widetilde Z_T(s)\widetilde Z_T(t)]
    &=\E Z(T)^2-\E[Z(T)Z(T-t)]-\E[Z(T-s)Z(T)] +\E[Z(T-s)Z(T-t)]\\
    &=\frac{\sigma_Z^2}{2}(s^{2\kappa}+t^{2\kappa}-|t-s|^{2\kappa}).
}
The two processes are centered Gaussian with continuous paths, so $\widetilde Z_T\deq Z$ on $[0,T]$. Moreover, integration by parts shows
\eqs*{
    \int_0^T e^{-Au}\,d\widetilde Z_T(u)
    &=e^{-AT}Z(T)+A\int_0^T e^{-Au}(Z(T)-Z(T-u))\,du\\
    &=Z(T)-A\int_0^T e^{-Au}Z(T-u)\,du\\
    &=Z(T)-A\int_0^T e^{-A(T-s)}Z(s)\,ds=J_T.
}
Consequently, $J_T\deq J_T^*$, where
\eqs*{J_T^*:=\int_0^T e^{-As}\,dZ(s)=e^{-AT}Z(T)+A\int_0^T e^{-As}Z(s)\,ds.}
Since $\|Z(s)\|_2=\sigma_Zs^\kappa$, the random variable $J_\infty:=A\int_0^\infty e^{-As}Z(s)\,ds$ exists in $L^2$, and
\eqs{
    \|J_T^*-J_\infty\|_2
    &\le\sigma_Z e^{-AT}T^\kappa+A\sigma_Z\int_T^\infty e^{-As}s^\kappa\,ds
    \to0.
    \label{eq:transfer-limit-integral-tail}
}
This defines $\int_0^\infty e^{-As}\,dZ(s):=J_\infty$. Time reversal has been used only for the limiting Gaussian process, not for the observation sequence.

For a bounded function $\varphi$ with Lipschitz constant at most one,
\eqs*{
    |\E\varphi(\mathcal I_\gamma)-\E\varphi(J_\infty)|
    &\le\|\mathcal I_\gamma-\widetilde I_{\gamma,T}\|_2
    +|\E\varphi(\widetilde I_{\gamma,T})-\E\varphi(J_T)|
    +\|J_T^*-J_\infty\|_2.
}
The middle term tends to zero by \eqref{eq:transfer-window-limit} and stationarity. The first term is at most $Ce^{-\underline a T}$ by Step 4, and the last tends to zero as $T\to\infty$ by \eqref{eq:transfer-limit-integral-tail}. Taking these limits in that order proves $\mathcal I_\gamma\Rightarrow J_\infty$, which is \eqref{eq:transfer-stationary-limit}.
\end{proof}

\subsection{Proof of Theorem~\ref{thm:short-memory-main}}
\label{app:short-memory-main}
\begin{proof}
By Proposition~\ref{prop:nonlinear-linearization}, it suffices to identify the limit of $e_n^{\lin}/\sqrt\gamma$.

\paragraph{Step 1: The score FCLT and the block moment bound.}
The score is centered by the expectile equation and belongs to $L^2$ because $g$ is Lipschitz. The coupling inequality and \eqref{eq:score-fdm-transfer} give
\eqs*{\sum_{r=0}^\infty\|P_0g(-U_r)\|_2\le\sum_{r=0}^\infty\delta_{g,2}(r)=\Theta_{g,2}\le\overline\tau\Theta_{X,2}<\infty.}
Thus \citet[Theorem 3(ii)]{Wu2011Asymptotic} applies to the score process, with long-run variance $\Omega_g$ from \eqref{eq:score-lrv}. In the equivalent step-partial-sum formulation, for every fixed $T>0$,
\eqs{\frac1{\sqrt N}\sum_{t=1}^{\lfloor Ns\rfloor}g(-U_t)\Rightarrow\sqrt{\Omega_g}W(s)\quad\text{in }D([0,T]),\ J_1,\label{eq:short-memory-score-fclt}}
where $W$ is standard Brownian motion. The covariance bound in Section~\ref{subsec:dependence} also gives
\eqs{
    \left\|\sum_{t=1}^m g(-U_t)\right\|_2^2
    &=\sum_{s,t=1}^m\Cov(g(-U_s),g(-U_t))\\
    &\le m\sum_{h\in\mathbb Z}|\Cov(g(-U_0),g(-U_h))|\le m\Theta_{g,2}^2.
    \label{eq:short-memory-block-bound}
}

\paragraph{Step 2: Transfer to the finite weighted sum.}
The pair $(g'(-U_t),g(-U_t))$ is jointly stationary, and $g'(-U_t)\in[\underline\tau,\overline\tau]$. The slope process is a Bernoulli shift and hence ergodic. Together with Step 1, these facts verify the hypotheses of Lemma~\ref{lem:correlated-transfer} with
\eqs*{a_t=g'(-U_t),\qquad\xi_t=g(-U_t),\qquad v(N)=\sqrt N,\qquad b_\gamma=\sqrt\gamma.}
Since \eqref{eq:burn-in-main} implies $n\gamma\to\infty$, the lemma yields
\eqs{\sqrt\gamma\sum_{t=1}^nK_{t,n}g(-U_t)\Rightarrow\sqrt{\Omega_g}\int_0^\infty e^{-As}\,dW(s)\sim\N\Big(0,\frac{\Omega_g}{2A}\Big),\label{eq:short-memory-random-filter-limit}}
since $\Omega_g\int_0^\infty e^{-2As}\,ds=\Omega_g/(2A)$.

\paragraph{Step 3: Initialization and the nonlinear remainder.}
The initialization term in \eqref{eq:linearized-solution} vanishes by \eqref{eq:burn-in-main}:
\eqs*{\frac{|K_{0,n}e_0|}{\sqrt\gamma}\le\frac{|e_0|(1-\gamma\underline\tau)^n}{\sqrt\gamma}\le |e_0|\gamma^{\nu\underline\tau-1/2}\to0.}
Thus \eqref{eq:linearized-solution} and the symmetry of the centered Gaussian limit give $e_n^{\lin}/\sqrt\gamma\Rightarrow\N(0,\Omega_g/(2A))$. Finally, Proposition~\ref{prop:nonlinear-linearization}, which combines geometric forgetting, stationary localization, and the conditional nonlinear-remainder bound, gives
\eqs*{\frac{\E|e_n-e_n^{\lin}|}{\sqrt\gamma}\le C\sqrt\gamma+C\frac{(1-\gamma\underline\tau)^{n-1}}{\sqrt\gamma}\to0.}
Slutsky's theorem completes the proof:
\eqs*{\frac{e_n}{\sqrt\gamma}=\frac{e_n^{\lin}}{\sqrt\gamma}+o_P(1)\Rightarrow\N\Big(0,\frac{\Omega_g}{2A}\Big).}
\end{proof}

\paragraph{Burn-in and random initialization.}\label{app:burn-in-calculations}
Condition~\eqref{eq:burn-in-main} makes the initialization negligible at scale $\sqrt\gamma$. Indeed,
\[
    (1-\gamma\underline\tau)^n
    \le e^{-\gamma\underline\tau n}
    \le \gamma^{\nu\underline\tau}
    =o(\sqrt\gamma)
\]
whenever $\nu>1/(2\underline\tau)$.

The same limit holds for an $\F_0$-measurable initialization $\theta_0\in L^2$ that does not depend on $\gamma$. Indeed, coupling this recursion with one initialized at a deterministic $c$ gives
\begingroup
\[
\frac{|\theta_n^{[\theta_0]}-\theta_n^{[c]}|}{\sqrt\gamma}\le\frac{(1-\gamma\underline\tau)^n}{\sqrt\gamma}|\theta_0-c|\longpto0,
\]
\endgroup
where $\theta_n^{[u]}$ denotes the iterate initialized at $u$. This follows from \eqref{eq:burn-in-main} and requires no independence between $\theta_0$ and the observations.

\section{Proof for Long-Memory Processes}\label{app:proof_long_memory}
The unit-variance normalization in Assumption~\ref{ass:long-memory} is without loss of generality. For innovations with $\sigma_\varepsilon:=\|\varepsilon_0\|_2>0$, replacing $(\varepsilon_t,c_j)$ by $(\varepsilon_t/\sigma_\varepsilon,\sigma_\varepsilon c_j)$ leaves $X_t$ unchanged, gives $\E[(\varepsilon_t/\sigma_\varepsilon)^2]=1$, and replaces $\ell$ by $\sigma_\varepsilon\ell$.

\subsection{Proof of Lemma~\ref{lem:long-memory-stationary-localization}}
\label{app:long-memory-localization}

For \eqref{eq:long-memory-covariance}, the asymptotic follows from the regularly varying convolution,
\begingroup
\[
\frac{\sum_{j=0}^\infty c_jc_{j+h}}{h^{1-2\beta}\ell(h)^2}\longrightarrow\int_0^\infty x^{-\beta}(1+x)^{-\beta}\,dx\in(0,\infty).
\]
\endgroup

\begin{proof}
We follow the argument of Lemma~\ref{lem:stationary-localization}.
Under Assumption~\ref{ass:long-memory},
\[
    \delta_{U,2}(k) = \|U_k-U_k^{\{0\}}\|_2=|c_k| \|\varepsilon_0-\varepsilon_0'\|_2,
    \qquad k\ge0.
\]
Hence, following \eqref{eq:e-fdm-propagation},
\[
    \delta_{e,2}(k) \le \gamma\overline\tau\|\varepsilon_0-\varepsilon_0'\|_2\sum_{j=0}^{k}(1-\gamma\underline\tau)^{k-j}|c_j|.
\]
This yields
\begin{equation}\label{eq:long-memory-localization-convolution}
\begin{aligned}
    \Var(e_0^\circ)
    &\le\sum_{k=0}^{\infty}\delta_{e,2}^2(k)\le\gamma^2\overline\tau^2\|\varepsilon_0-\varepsilon_0'\|_2^2 \Lambda_\gamma^2,\qquad
    \Lambda_\gamma:=\Bigg(\sum_{k=0}^{\infty}\bigg(\sum_{j=0}^{k}(1-\gamma\underline\tau)^{k-j}|c_j|\bigg)^2\Bigg)^{1/2}.
\end{aligned}
\end{equation}

It remains to bound the convolution in $\Lambda_\gamma$.  Set $N:=\lfloor\gamma^{-1}\rfloor$.  Splitting the coefficient sequence at $N$ and applying Young's convolution inequality for each term gives
\[
    \Lambda_\gamma\le\Bigg(\sum_{j=0}^{\infty}(1-\gamma\underline\tau)^{2j}\Bigg)^{1/2}\sum_{j=0}^{N}|c_j|+\Bigg(\sum_{j=0}^{\infty}(1-\gamma\underline\tau)^j\Bigg)\Bigg(\sum_{j>N}c_j^2\Bigg)^{1/2}.
\]
Summing the geometric series gives
\[
    \Bigg(\sum_{j=0}^{\infty}(1-\gamma\underline\tau)^{2j}\Bigg)^{1/2}\lesssim\gamma^{-1/2},
    \qquad
    \sum_{j=0}^{\infty}(1-\gamma\underline\tau)^j\lesssim\gamma^{-1}.
\]
Using Karamata's theorem (see \citealp[Lemma 5]{Wu2003Empirical}) gives
\[
    \sum_{j=0}^{N}|c_j|\lesssim N^{1-\beta}\ell(N),
    \qquad
    \Bigg(\sum_{j>N}c_j^2\Bigg)^{1/2}\lesssim N^{1/2-\beta}\ell(N).
\]
Hence, as $N\asymp\gamma^{-1}$,
\[
    \Lambda_\gamma
    \lesssim\gamma^{-1/2}N^{1-\beta}\ell(N)
      +\gamma^{-1}N^{1/2-\beta}\ell(N)
    \lesssim\gamma^{\beta-3/2}\ell(1/\gamma)=s_\gamma/\gamma.
\]
Substitution into \eqref{eq:long-memory-localization-convolution} yields
\begin{equation}\label{eq:long-memory-localization-variance}
    \Var(e_0^\circ)\lesssim s_\gamma^2.
\end{equation}

Combining \eqref{eq:stationary-e-mean} in Lemma~\ref{lem:stationary-localization} and \eqref{eq:long-memory-localization-variance}, we obtain
\[
    \E[|e_0^\circ|^2]
    =\Var(e_0^\circ)+|\E[e_0^\circ]|^2
    \le\Big(1+\frac{\overline\tau^2}{\underline\tau^2}\Big)
       \Var(e_0^\circ)
    \lesssim s_\gamma^2.
\]
Stationarity gives the same bound for every $t$.
\end{proof}

\subsection{Long-memory functional CLT}
\label{app:long-memory-fclt}

Throughout Appendix~\ref{app:proof_long_memory} we write $B_H$ for the rank-one Hermite process of \citet{wu2006unit}, a fractional Brownian motion with Hurst index $H=3/2-\beta$, and $W_\gamma(s)$ for the rescaled score partial-sum process of \eqref{eq:long-memory-score-fclt}.

\begin{proposition}\label{prop:long-memory-FCLT}
Suppose Assumption~\ref{ass:long-memory} holds, and let $B_H$ denote the fractional Brownian motion defined in \eqref{eq:fBM}.
Then for every fixed $T>0$,
\begin{equation}\label{eq:long-memory-score-fclt}
\begin{aligned}
\frac{\gamma}{s_\gamma} \sum_{k=1}^{\lfloor s/\gamma\rfloor} g(-U_k)
\Rightarrow -A B_H(s)
\end{aligned}
\end{equation}
holds in $D([0,T])$ with the $J_1$ topology, as $\gamma\downarrow0$.
Moreover $\big\|\sum_{t=1}^m g(-U_t)\big\|_2\lesssim m^{3/2-\beta}\ell(m)$ for all
$m\geq1$.
\end{proposition}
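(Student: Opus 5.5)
The plan is to reduce the nonlinear score process to a linear functional of the innovations by a martingale/projection (Ho--Hsing type) decomposition, and then invoke the known invariance principle for long-memory linear processes. Write $\xi_t:=g(-U_t)$ and $X_{t,1}:=X_t-c_0\varepsilon_t=\sum_{j\ge1}c_j\varepsilon_{t-j}$, which is $\F_{t-1}$-measurable. Define $G(y):=\E[g(\theta_\tau^*-c_0\varepsilon_0-y)]$, so that $\E[\xi_t\mid\F_{t-1}]=G(X_{t,1})$. The target statement is
\[
\sum_{t=1}^m\xi_t=-A\sum_{t=1}^m X_t+R_m,\qquad \|R_m\|_2=o\big(m^{3/2-\beta}\ell(m)\big),
\]
together with the bound $\|R_m\|_2\lesssim m^{3/2-\beta}\ell(m)$ uniformly in $m$. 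Given this, the FCLT for the linear process, $\gamma s_\gamma^{-1}\sum_{k\le s/\gamma}X_k\Rightarrow B_H(s)$ in $D([0,T])$ (\citealp{wu2006unit}, or Davydov/Taqqu with the normalization in \eqref{eq:fBM}; note $\gamma/s_\gamma=1/v(1/\gamma)$ with $v(m)=m^{3/2-\beta}\ell(m)$), transfers to the score with the factor $-A$.

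\textbf{Step 1: the linear coefficient.} Because $g$ is Lipschitz and piecewise linear, and $\varepsilon_0$ has a bounded continuous density, $G$ is $C^1$ with
\[
G'(y)=-\E[g'(\theta_\tau^*-c_0\varepsilon_0-y)].
\]
Its Lipschitz derivative in $y$ comes from $f_\varepsilon$ being bounded; a second derivative may require continuity of $f_\varepsilon$. Define $G_\infty(x):=\E[g(\theta_\tau^*-X_0-x)]$. Then $G_\infty(0)=0$ and $G_\infty'(0)=-A$.

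\textbf{Step 2: decomposition.} Split the score as
\[
\xi_t=\big(\xi_t-G(X_{t,1})\big)+\big(G(X_{t,1})+AX_{t,1}\big)-AX_{t,1}.
\]
The first piece is a martingale difference with bounded $L^2$ norm, so its partial sums are $O(m^{1/2})=o(v(m))$ since $\beta<1$. The term $-AX_{t,1}$ differs from $-AX_t$ by $Ac_0\varepsilon_t$, again a martingale term of order $m^{1/2}$.

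\textbf{Step 3: the nonlinear remainder (main obstacle).} The key step is controlling $\sum_t L(X_{t,1})$ with $L(y):=G(y)+Ay$, showing $\|\sum_{t\le m}L(X_{t,1})\|_2=o(v(m))$ with a uniform $\lesssim$ bound. I would follow the Ho--Hsing/Wu projection argument, \citet{Wu2003Empirical}.
\begin{enumerate}
\item Decompose $L(X_{t,1})=\sum_{k\ge1}P_{t-k}L(X_{t,1})$.
\item For each projection level, use the further-conditioned functions $G_k(y):=\E[g(\theta_\tau^*-\sum_{j<k}c_j\varepsilon_{t-j}-y)]$. These are $C^2$ with bounded derivatives by smoothing from $f_\varepsilon$.
\item Write
\[
P_{t-k}L(X_{t,1})=P_{t-k}\big[L_k(X_{t,k})-G_\infty'(0)\,c_k\varepsilon_{t-k}\big]+\cdots,
\]
where $X_{t,k}:=\sum_{j\ge k}c_j\varepsilon_{t-j}$. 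A second-order Taylor expansion then yields
\[
\|P_{t-k}L(X_{t,1})\|_2\lesssim |c_k|\Big(\sum_{j>k}c_j^2\Big)^{1/2}+c_k^2 .
\]
This is where the moment condition $\E|\varepsilon_0|^6<\infty$ (or at least $\E\varepsilon_0^4<\infty$) is used.
\item Summing over $t$ with orthogonality of projections gives
\[
\Big\|\sum_{t\le m}L(X_{t,1})\Big\|_2^2\lesssim\sum_{i}\Big(\sum_{t=1}^m|c_{t-i}|\,k_{t-i}^{1/2-\beta}\ell\Big)^2 .
\]
By Karamata this is of order $m^{\max(1,\,4-4\beta)}\ell^4$, which is $o(m^{3-2\beta}\ell^2)$ since $\beta>1/2$.
\end{enumerate}
The delicate points are the regularity of $G_k$ near the kink of $g$ for small $k$, and justifying $G_k'\to -A$ uniformly. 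For these I would use the smoothing from $c_0\varepsilon$ and dominated convergence.

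\textbf{Step 4: conclusion.} The moment bound follows from Steps 2--3 combined with $\|\sum_{t\le m}X_t\|_2\asymp v(m)$, the latter by Karamata applied to \eqref{eq:long-memory-covariance}. For the FCLT, write $W_\gamma$ for the rescaled score process and $V_\gamma$ for the rescaled partial sums of the $X_t$. The remainder is negligible uniformly on $[0,T]$: apply the $L^2$ bounds to $\max_{k\le T/\gamma}$ via a chaining or Doob argument on the martingale parts, and for the $L$-part use that $\|\sum_{t\le m}L(X_{t,1})\|_2\lesssim m^{1/2+\epsilon}$-type bounds combined with a Móricz maximal inequality. Hence $\sup_{s\le T}|W_\gamma(s)+AV_\gamma(s)|\pto0$, and Slutsky's theorem in $D([0,T])$ completes the proof.
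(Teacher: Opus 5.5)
Your proposal is correct and uses the same mechanism as the paper's proof: the rank-one reduction $\sum_{t\le m}g(-U_t)=-A\sum_{t\le m}X_t+R_m$ with $\|R_m\|_2=o(m^{3/2-\beta}\ell(m))$. The difference is that you prove this reduction, while the paper imports it.

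The paper applies \citet[Theorem~1]{wu2006unit} to $K(x)=g(\theta_\tau^*-x)$. It checks power rank one and Condition~1 with $p=1$, $q=4$. That theorem gives the FCLT for the linearly interpolated sums in $C([0,T])$. The step process is then handled by a crude bound on $\max_k|g(-U_k)|$, and the moment bound is read off from the reduction inside Wu's proof together with Karamata.

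Your route is self-contained. You split the score into the martingale part $\xi_t-G(X_{t,1})$, the term $Ac_0\varepsilon_t$, and $L(X_{t,1})$, and you control $L(X_{t,1})$ with the Ho--Hsing/Wu projection bound $\|P_{t-k}L(X_{t,1})\|_2\lesssim|c_k|(\sum_{j\ge k}c_j^2)^{1/2}$. This shows that $\E\varepsilon_0^4<\infty$ suffices for this score, whereas the paper's check of Wu's Term~2 uses the sixth moment. The price is that you must upgrade the $L^2$ reduction to uniform negligibility on $[0,T]$ yourself (Doob for the martingale parts, a maximal inequality for the $L$-part). The cited theorem delivers that directly.

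Two remarks on details.

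\textbf{The steps you flag as delicate are not.} For every $k\ge1$ the function $G_k$ contains the $c_0\varepsilon$ summand. Hence $G_k''(y)=(1-2\tau)f_{Z_k}(\theta_\tau^*-y)$, where $f_{Z_k}$ is the density of $Z_k:=\sum_{0\le j<k}c_j\varepsilon_{-j}$, and $\|f_{Z_k}\|_\infty\le\|f_\varepsilon\|_\infty/|c_0|$ uniformly in $k$. This is exactly the paper's computation of $K_n''$. You also do not need $G_k'\to-A$ uniformly. Since $\E G_k'(X_{t,k})=-A$, the Lipschitz property of $G_k'$ already gives $\|G_k'(X_{t,k+1})+A\|_2\lesssim(\sum_{j\ge k}c_j^2)^{1/2}$.

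\textbf{The exponent in Step~4 needs correcting.} The bound $\|\sum_{t\le m}L(X_{t,1})\|_2\lesssim m^{1/2+\epsilon}$ does not follow from your Step~3 when $\beta<3/4$; there your own estimate is of order $m^{2-2\beta}\ell(m)^2$. What M\'oricz's inequality actually needs is $\E|\sum_{t=a+1}^{a+k}L(X_{t,1})|^2\le Ck^{\alpha}$ for all $a,k$ and some $\alpha\in(\max(1,4-4\beta),\,3-2\beta)$. Such an $\alpha$ exists because $1/2<\beta<1$, and the bound holds by stationarity and Potter's bounds. It then gives $\E\max_{k\le m}|\sum_{t\le k}L(X_{t,1})|^2=O(m^\alpha)=o(m^{3-2\beta}\ell(m)^2)$, which is what you need.
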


\begin{proof}
We apply \citet[Theorem 1]{wu2006unit} to $K(x) := g(\theta_\tau^*-x)$ so that $g(-U_t) = K(X_t)$.
Following the notation of the paper, define
\[
\begin{aligned}
    x_{n,0} &:= \sum_{j=n}^{\infty}c_j\varepsilon_{n-j},
    \qquad x_{n,1}:=x_{n,0}+c_{n-1}\varepsilon_1,\\
    K_n(w) &:= \E[K(w+X_n-x_{n,0})],
    \qquad K_\infty(w):=\E[K(w+X_0)].
\end{aligned}
\]
For a function $h$, write $h(x;\lambda):=\sup_{|y|\le\lambda}|h(x+y)|$.

We first identify the power rank of $K$. 
By the expectile equation, $K_\infty(0)=\E[g(\theta_\tau^*-X_0)]=0$
whereas $K_\infty'(0) = -\E[g'(\theta_\tau^*-X_0)] = -A\neq 0$.
Hence $K$ has power rank $p=1$ \citep{Ho1997Limit,wu2006unit}.

We next verify Condition~1 of \citet{wu2006unit} with $p=1$ and $q=4$. Since $\E|\varepsilon_1|^6<\infty$, the required fourth-moment condition is immediate. For $p=1$, Condition~1 reduces to showing that, for some $\lambda>0$,
$$
\sum_{r=0}^{2}\|K_{n-1}^{(r)}(x_{n, 0} ; \lambda)\| + \||\varepsilon_1|^{2} K_{n-1} (x_{n, 1})\| +\|\varepsilon_1 K_{n-1}^{(1)}(x_{n, 1})\|=O(1),
$$
where $K_{n-1}^{(r)}$ is $r$-th derivative of $K_{n-1}$.
We then show that each term is $O(1)$.

\textbf{Term 1.}
Let $F_n$ and $f_n$ be the CDF and density of $X_n-x_{n,0}$. Since this finite sum contains the independent component $c_0\varepsilon_n$, Assumption~\ref{ass:long-memory}\ref{ass:lm-innovation} gives $\sup_{n\ge1}\|f_n\|_\infty <\infty$.

Then the first and second derivatives of $K_n(w)=\E\big[g(\theta_\tau^*-w-(X_n-x_{n,0}))\big]$ are
\[
\begin{aligned}
    K_n'(w) &=-\E\big[g'(\theta_\tau^*-w-(X_n-x_{n,0}))\big]
    = - \tau - (1 - 2\tau) F_n (\theta_\tau^*-w),\\
    K_n''(w) &= \frac{d}{dw}\big[-\tau-(1-2\tau)F_n(\theta_\tau^*-w)\big]
    =(1-2\tau)f_n(\theta_\tau^*-w).
\end{aligned}
\]
Thus $K_n\in C^2(\mathbb R)$ for every $n$, $\sup_n \|K_n'\|_\infty + \sup_n \|K_n''\|_\infty < \infty$ and
\begin{align*}
K_{n-1}'(x_{n, 0} ; \lambda) & =\sup _{|y| \leq \lambda}|K_{n-1}'(x_{n, 0}+y)| 
\leq \sup _{w \in \mathbb{R}}|K_{n-1}'(w)| = \overline \tau = O(1), \\
K_{n-1}''(x_{n, 0} ; \lambda) &\leq \sup_{w \in\R} |K_{n-1}''(w)| = |1 - 2\tau| \|f_{n-1}\|_\infty = O(1).
\end{align*}
Since $g$ is $\overline\tau$-Lipschitz and Assumption~\ref{ass:long-memory}\ref{ass:lm-innovation} gives $\sup_n\|X_n-x_{n,0}\|_2<\infty$,
\begin{align*}
\|K_{n-1}(x_{n,0};\lambda)\|_2^2
&= \E\left[ \sup_{|y|\le\lambda} |K_{n-1}(x_{n,0}+y)|^2 \right] \\
&\le \overline\tau^2 \E\left[ \sup_{|y|\le\lambda} \bigl( |\theta_\tau^*| +|x_{n,0}+y| +\E|X_{n-1} - x_{n-1,0}| \bigr)^2 \right] \\
&\lesssim \E\left(1+|x_{n,0}|+\lambda\right)^2 \\
&\lesssim 1+\|x_{n,0}\|_2^2+\lambda^2 
= 1+\E[\varepsilon_1^2]\sum_{j=n}^{\infty}c_j^2+\lambda^2 \\
&\le 1+\E[\varepsilon_1^2]\sum_{j=0}^{\infty}c_j^2+\lambda^2 =O(1).
\end{align*}
As such, for every fixed $\lambda>0$, $\sum_{r=0}^{2}\|K_{n-1}^{(r)}(x_{n,0};\lambda)\|_2 = O(1)$.

\textbf{Term 2.}
Using $x_{n,1}=x_{n,0}+c_{n-1}\varepsilon_1$ and the independence of $x_{n,0}$ and $\varepsilon_1$,
\[
\begin{aligned}
    \|\varepsilon_1^2K_{n-1}(x_{n,1})\|_2
    &\lesssim \|\varepsilon_1^2(1+|x_{n,0}|+|c_{n-1}\varepsilon_1|)\|_2\\
    &\lesssim \|\varepsilon_1\|_4^2+\|\varepsilon_1\|_4^2\|x_{n,0}\|_2+|c_{n-1}|\|\varepsilon_1\|_6^3 =O(1).
\end{aligned}
\]

\textbf{Term 3.}
Since $|K_{n-1}' (x_{n, 1})| \leq \overline\tau$, Assumption~\ref{ass:long-memory}\ref{ass:lm-innovation} implies
\[
    \|\varepsilon_1 K_{n-1}' (x_{n, 1})\|_2  \leq \overline\tau \|\varepsilon_1\|_2 = O(1),
\]

Therefore, all terms in Condition 1 are uniformly bounded, and \citet[Theorem 1(i)]{wu2006unit} yields functional convergence of the linearly interpolated partial sums in $C([0,T])$ with the uniform topology, after rescaling time from $[0,1]$ to $[0,T]$.
Let $H=3/2-\beta$, write $W_m(s):=m^{-H}\ell(m)^{-1}\sum_{k=1}^{\lfloor ms\rfloor}g(-U_k)$, and let $\widetilde W_m$ be its linear interpolation on the mesh $k/m$. For every $\epsilon>0$,
\[
\begin{aligned}
\mathbb P\!\left(\sup_{0\le s\le T}|\widetilde W_m(s)-W_m(s)|>\epsilon\right)
&\le \mathbb P\!\left(\max_{1\le k\le\lfloor mT\rfloor+1}|g(-U_k)|>\epsilon m^H\ell(m)\right)\\
&\le \frac{(\lfloor mT\rfloor+1)\E[g(-U_0)^2]}{\epsilon^2m^{2H}\ell(m)^2}\longrightarrow0.
\end{aligned}
\]
Here we used stationarity, $g(-U_0)\in L^2$, $H>1/2$, and slow variation of $\ell$. Hence the step process has the same limit in $D([0,T])$ with the $J_1$ topology,
\[
    \Bigg\{\frac{1}{m^{3/2-\beta} \ell (m)}\sum_{k=1}^{\lfloor ms\rfloor}g(-U_k) :0\le s\le T\Bigg\}
    \Rightarrow \{-AB_H(s):0\le s\le T\},
\]
where $\mathbb{B}$ is a standard two-sided Brownian motion and $B_H$ is the rank-one Hermite process of \citet{wu2006unit},
\[
    B_H(t) := \int_{-\infty}^{t}\!\int_{0}^{t}(v-u)_+^{-\beta}\,dv\,d\mathbb{B}(u)
        = \frac{1}{1-\beta}\int_{-\infty}^{t} \Big((t-u)^{1-\beta}-(-u)_+^{1-\beta}\Big)\,d\mathbb{B}(u),
\]
a fractional Brownian motion with Hurst index $H = 3/2-\beta$, for which
\[
    \sigma_H^2 := \Var B_H(1) = \frac{1}{(1-\beta)^{2}}\int_{\mathbb{R}} \Big((1-u)_+^{1-\beta}-(-u)_+^{1-\beta}\Big)^{2}du .
\]

Taking $m=\lfloor1/\gamma\rfloor$ gives \eqref{eq:long-memory-score-fclt}.

It remains to record the partial-sum bound used below. The rank-one reduction in the proof of \citet[Theorem 1]{wu2006unit} gives
\[
    \sum_{t=1}^m g(-U_t) =-A\sum_{t=1}^mX_t+R_m,
    \qquad \|R_m\|_2=o(m^{3/2-\beta}\ell(m)).
\]
Also, Karamata's theorem gives $|\Cov(X_0,X_h)|\lesssim h^{1-2\beta}\ell(h)^2$ so that
\[
\begin{aligned}
    \Bigg\|\sum_{t=1}^mX_t\Bigg\|_2^2
    &\le m\Var(X_0)+2\sum_{h=1}^{m-1}(m-h)|\Cov(X_0,X_h)|\\
    &\lesssim m^{3-2\beta}\ell(m)^2.
\end{aligned}
\]
Combining the last two displays proves the result.
\end{proof}

\subsection{Proof of Theorem~\ref{thm:long-memory-last-iterate}}
\label{app:long-memory-last-iterate}

Recall the centered recursion \eqref{eq:centered-recursion}, $e_t=e_{t-1}-\gamma g(e_{t-1}-U_t)$ with $e_t=\theta_t-\theta_\tau^*$.

\begin{proof}
We first verify the sufficient burn-in condition. If $n\gamma\ge\nu\log(\gamma^{-1})$ with $\nu>(\beta-1/2)/\underline\tau$, then
\[
\frac{(1-\gamma\underline\tau)^n}{s_\gamma}\le\frac{e^{-n\gamma\underline\tau}}{\gamma^{\beta-1/2}\ell(1/\gamma)}\le\frac{\gamma^{\nu\underline\tau-(\beta-1/2)}}{\ell(1/\gamma)}\to0.
\]
The last step uses $\nu\underline\tau-(\beta-1/2)>0$ and the fact that $x^{-\delta}/\ell(x)\to0$ for every $\delta>0$ when $\ell$ is positive and slowly varying. This proves the sufficiency stated in Theorem~\ref{thm:long-memory-last-iterate}.

By Lemma~\ref{lem:long-memory-stationary-localization}, the stationary solution
$e_t^\circ=\theta_t^\circ(\gamma)-\theta_\tau^*$ of Proposition~\ref{prop:stationary-solution}
satisfies
\begin{equation}\label{eq:lm-localization-recalled}
    \E[|e_0^\circ|^2]\lesssim s_\gamma^2.
\end{equation}
Coupling the forward and stationary recursions with the same observations, the geometric forgetting bound \eqref{eq:GMC} therefore gives
\begin{equation}\label{eq:lm-forward-moment}
    \|e_t\|_2
    \lesssim s_\gamma+(1-\gamma\underline\tau)^t.
\end{equation}

Next, define the linearized recursion
\[
    e_t^{\lin} =(1-\gamma g'(-U_t))e_{t-1}^{\lin}-\gamma g(-U_t),
    \qquad e_0^{\lin}=e_0.
\]
The exact remainder is
\[
    R_t :=g(e_{t-1}-U_t)-g(-U_t)-g'(-U_t)e_{t-1}.
\]
The crossing bound derived in Appendix~\ref{app:nonlinear-linearization} gives
\[
    |R_t| \le |1-2\tau|\,|e_{t-1}|\,\1\big(\min(0,e_{t-1})\le U_t\le\max(0,e_{t-1})\big).
\]
The conditional density bound of Section~\ref{subsec:long-memory-model} gives
\[
    \E[|R_t|\mid\F_{t-1}] \lesssim |e_{t-1}|^2.
\]
Solving the difference between the two recursions and using the contraction bound,
\[
\begin{aligned}
    \E[|e_n-e_n^{\lin}|]
    &\le \gamma\sum_{t=1}^n(1-\gamma\underline\tau)^{n-t}\E[|R_t|]\\
    &\lesssim s_\gamma^2+(1-\gamma\underline\tau)^n
    =o(s_\gamma).
\end{aligned}
\]
Hence it remains to analyze $e_n^{\lin}$.
Set $a_t=g'(-U_t)$ and $\xi_t=g(-U_t)$. The pair is jointly stationary, $\E\xi_0=0$, and $a_t\in[\underline\tau,\overline\tau]$. The slope process is a Bernoulli shift and hence ergodic. Proposition~\ref{prop:long-memory-FCLT} verifies the remaining hypotheses of Lemma~\ref{lem:correlated-transfer} with
\eqs*{v(N)=N^H\ell(N),\qquad b_\gamma=\gamma^{1-H}\ell(1/\gamma)=s_\gamma,\qquad Z=-AB_H.}
Since $s_\gamma\to0$ and $-\log(1-\gamma\underline\tau)\sim\gamma\underline\tau$, the burn-in condition \eqref{eq:long-memory-burn-in} implies $n\gamma\to\infty$. The lemma therefore gives
\eqs*{\frac\gamma{s_\gamma}\sum_{t=1}^nK_{t,n}g(-U_t)\Rightarrow-A\int_0^\infty e^{-As}\,dB_H(s).}
The initialization term vanishes by the same burn-in condition:
\eqs*{\frac{|K_{0,n}e_0|}{s_\gamma}\le\frac{|e_0|(1-\gamma\underline\tau)^n}{s_\gamma}\to0.}
Solving the linearized recursion and combining these two limits yields
\eqs*{\frac{e_n^{\lin}}{s_\gamma}=\frac{K_{0,n}e_0}{s_\gamma}-\frac\gamma{s_\gamma}\sum_{t=1}^nK_{t,n}g(-U_t)\Rightarrow A\int_0^\infty e^{-As}\,dB_H(s).}
Combining this with
$e_n-e_n^{\lin}=o_P(s_\gamma)$ gives the same limit for $e_n/s_\gamma$.

It remains to compute the variance of the centered Gaussian limit $Z:=A\int_0^\infty e^{-As}\,dB_H(s)$. Since $\Var(B_H(1))=\sigma_H^2$, the fBm covariance and its mixed derivative are
\[
\begin{aligned}
    \Cov(B_H(s),B_H(t))&=\frac{\sigma_H^2}{2}\big(s^{2H}+t^{2H}-|s-t|^{2H}\big),\\
    \frac{\partial^2}{\partial s\,\partial t}\Cov(B_H(s),B_H(t))&=\sigma_H^2 H(2H-1)|s-t|^{2H-2},\qquad s\ne t.
\end{aligned}
\]
The covariance between two fBm increments is the integral of this mixed derivative over the corresponding pair of time intervals. Thus, taking the variance of the weighted Riemann sums defining $Z$ gives the double integral below. The singularity at $s=t$ is integrable because $2H-2>-1$, and the exponential weights ensure integrability at infinity. By symmetry, followed by the changes of variables $r=s-t$ on $s>t$ and $u=Ar$, we obtain
\[
\begin{aligned}
    \Var(Z)
    &=\sigma_H^2 H(2H-1)A^2\int_0^\infty\!\int_0^\infty e^{-A(s+t)}|s-t|^{2H-2}\,ds\,dt\\
    &=2\sigma_H^2 H(2H-1)A^2\int_0^\infty\!\int_t^\infty e^{-A(s+t)}(s-t)^{2H-2}\,ds\,dt\\
    &=2\sigma_H^2 H(2H-1)A^2\int_0^\infty\!\int_0^\infty e^{-A(r+2t)}r^{2H-2}\,dr\,dt\\
    &=2\sigma_H^2 H(2H-1)A^2\left(\int_0^\infty e^{-2At}\,dt\right)\left(\int_0^\infty e^{-Ar}r^{2H-2}\,dr\right)\\
    &=2\sigma_H^2 H(2H-1)A^2\frac{1}{2A}\frac{1}{A^{2H-1}}\int_0^\infty e^{-u}u^{2H-2}\,du\\
    &=\sigma_H^2 H(2H-1)A^{2-2H}\Gamma(2H-1)\\
    &=\sigma_H^2 H\Gamma(2H)A^{2-2H}.
\end{aligned}
\]
The last two equalities use the defining gamma integral and $(2H-1)\Gamma(2H-1)=\Gamma(2H)$. This is the variance stated in Theorem~\ref{thm:long-memory-last-iterate}, completing the proof.
\end{proof}

\subsection{Proofs for joint long-memory limits}
\label{app:joint-limit}

Use the level-specific notation in Section~\ref{subsec:joint-long-memory}, and define
\eqs*{\xi_t^{(i)}:=g_i(\theta_i^*-X_t),\qquad a_t^{(i)}:=g_i'(\theta_i^*-X_t).}
We retain $H=3/2-\beta$, $s_\gamma=\gamma^{1-H}\ell(1/\gamma)$, and the normalization of $B_H$ in \eqref{eq:fBM}.
For the proofs below, write $Y_i:=A_i\int_0^\infty e^{-A_i s}\,dB_H(s)$ for the $i$th limiting coordinate.

\subsubsection{Proof of Theorem~\ref{thm:joint-long-memory}}

\begin{proof}
We first identify the common score limit and then apply the proof of Lemma~\ref{lem:correlated-transfer} jointly to the fixed coordinates.

\paragraph{Step 1: A common score limit.}
Fix $T>0$, put $v(N):=N^H\ell(N)$, and define the normalized score partial sums
\eqs*{S_N^{(i)}(s):=\frac1{A_i v(N)}\sum_{t=1}^{\lfloor Ns\rfloor}\xi_t^{(i)},\qquad 0\le s\le T.}
Proposition~\ref{prop:long-memory-FCLT}, with $\gamma=1/N$, gives $S_N^{(i)}\Rightarrow-B_H$ in $D([0,T])$ with $J_1$ for every $i$.

To identify the dependence between these limits, fix $i,j$ and consider the average of their normalized scores,
\eqs*{K_{ij}(x):=\frac12\left(\frac{g_i(\theta_i^*-x)}{A_i}+\frac{g_j(\theta_j^*-x)}{A_j}\right).}
Smoothing and differentiation are linear, so the $C^2$ property and each moment bound in Condition 1 of \citet{wu2006unit} follow by the triangle inequality from the checks in Proposition~\ref{prop:long-memory-FCLT}. Moreover,
\eqs*{
    \E[K_{ij}(X_0)]&=\frac12\left(\frac{\E\xi_0^{(i)}}{A_i}+\frac{\E\xi_0^{(j)}}{A_j}\right)=0,\\
    \left.\frac{d}{dw}\E[K_{ij}(w+X_0)]\right|_{w=0}&=-\frac12\left(\frac{A_i}{A_i}+\frac{A_j}{A_j}\right)=-1.
}
Thus the same application of \citet[Theorem 1(i)]{wu2006unit}, with $p=1$ and $q=4$, yields
\eqs*{\frac{S_N^{(i)}+S_N^{(j)}}2=\frac1{v(N)}\sum_{t=1}^{\lfloor N\,\cdot\rfloor}K_{ij}(X_t)\Rightarrow-B_H\quad\text{in }D([0,T]),\ J_1.}

The marginal FCLTs imply tightness of $(S_N^{(i)})_{i=1}^k$ in the finite product space $D([0,T])^k$. Let $(Z_i)_{i=1}^k$ be any subsequential limit. Each $Z_i$ has the law of $-B_H$ and continuous paths. Averaging is continuous at these paths, so $(Z_i+Z_j)/2$ also has the law of $-B_H$. At every fixed $s\in[0,T]$, these three limiting variables have mean zero and variance $\sigma_H^2s^{2H}$. Hence
\eqs*{
    \Var(Z_i(s)-Z_j(s))
    &=2\Var(Z_i(s))+2\Var(Z_j(s))-4\Var\left(\frac{Z_i(s)+Z_j(s)}2\right)\\
    &=(2+2-4)\sigma_H^2s^{2H}=0.
}
Thus $Z_i(s)=Z_j(s)$ almost surely. Taking a countable dense set of times and using continuity gives $Z_i=Z_j$ on $[0,T]$ almost surely, simultaneously for the finitely many pairs. Every subsequential limit therefore has the law of the diagonal process $(-B_H,\ldots,-B_H)$. Multiplying coordinate $i$ by $A_i$ proves
\eqs{\left(\frac1{v(N)}\sum_{t=1}^{\lfloor Ns\rfloor}\xi_t^{(i)}\right)_{i=1}^k\Rightarrow(-A_iB_H(s))_{i=1}^k\quad\text{in }D([0,T])^k,\ J_1.\label{eq:joint-score-fclt}}

\paragraph{Step 2: Joint transfer to the finite iterates.}
Define $K_{r,t}^{(i)}:=\prod_{j=r+1}^t(1-\gamma a_j^{(i)})$, with $K_{t,t}^{(i)}=1$. Each slope process is bounded and ergodic, as in Appendix~\ref{app:long-memory-last-iterate}. For fixed $T>0$ and $m:=\lfloor T/\gamma\rfloor$, \eqref{eq:transfer-kernel-uniform} and finite $k$ give
\eqs*{\max_{1\le i\le k}\max_{1\le t\le m}|K_{t,m}^{(i)}-e^{-A_i(T-t\gamma)}|\pto0.}
Let $W_\gamma^{(i)}(s):=(\gamma/s_\gamma)\sum_{t=1}^{\lfloor s/\gamma\rfloor}\xi_t^{(i)}$. Since $\gamma/s_\gamma=1/v(1/\gamma)$, \eqref{eq:joint-score-fclt} and the time rescaling in Step 3 of Appendix~\ref{app:transfer} give $(W_\gamma^{(i)})_{i=1}^k\Rightarrow(-A_iB_H)_{i=1}^k$. On the fixed window, summation by parts gives
\eqs*{
    I_{\gamma,T}^{(i)}&:=\frac\gamma{s_\gamma}\sum_{t=1}^mK_{t,m}^{(i)}\xi_t^{(i)}\\
    &=W_\gamma^{(i)}(T)-\sum_{t=1}^{m-1}(K_{t+1,m}^{(i)}-K_{t,m}^{(i)})W_\gamma^{(i)}(t\gamma).
}
The positive measures defined by these weight increments converge to $A_i e^{-A_i(T-s)}\,ds$, by \eqref{eq:transfer-measure-convergence}. Their limits are deterministic, so they converge jointly with the vector score path. Applying \eqref{eq:transfer-integral-continuity} to each coordinate on one common Skorokhod representation yields
\eqs*{(I_{\gamma,T}^{(i)})_{i=1}^k\Rightarrow\left(-A_i\int_0^T e^{-A_i(T-s)}\,dB_H(s)\right)_{i=1}^k.}

Condition~\eqref{eq:joint-long-memory-burn-in} implies $n\gamma\to\infty$, hence $m\le n$ for small $\gamma$. Joint stationarity shifts the most recent $m$ terms of all coordinates together:
\eqs*{\left(\frac\gamma{s_\gamma}\sum_{t=n-m+1}^nK_{t,n}^{(i)}\xi_t^{(i)}\right)_{i=1}^k\deq(I_{\gamma,T}^{(i)})_{i=1}^k.}
For the remaining terms, \eqref{eq:transfer-tail-bound} gives
\eqs*{\max_{1\le i\le k}\left\|\frac\gamma{s_\gamma}\sum_{t=1}^{n-m}K_{t,n}^{(i)}\xi_t^{(i)}\right\|_2\le Ce^{-\gamma m\min_i\underline\tau_i}\le C'e^{-T\min_i\underline\tau_i}.}
Time reversal of the same fractional Brownian motion, as in Step 5 of Appendix~\ref{app:transfer}, gives the joint identity
\eqs*{\left(\int_0^T e^{-A_i(T-s)}\,dB_H(s)\right)_{i=1}^k\deq\left(\int_0^T e^{-A_i s}\,dB_H(s)\right)_{i=1}^k.}
Each integral on the right converges in $L^2$ as $T\to\infty$ by \eqref{eq:transfer-limit-integral-tail}. Taking first $\gamma\downarrow0$ and then $T\to\infty$, using the finite number of tail bounds, proves
\eqs*{\left(\frac\gamma{s_\gamma}\sum_{t=1}^nK_{t,n}^{(i)}\xi_t^{(i)}\right)_{i=1}^k\Rightarrow(-Y_i)_{i=1}^k.}

Write $e_t^{(i)}:=\theta_t^{(i)}-\theta_i^*$, and let $e_t^{\lin,(i)}$ be the corresponding linearization with initial value $e_0^{(i)}$. Its exact solution is
\eqs*{\frac{e_n^{\lin,(i)}}{s_\gamma}=\frac{K_{0,n}^{(i)}e_0^{(i)}}{s_\gamma}-\frac\gamma{s_\gamma}\sum_{t=1}^nK_{t,n}^{(i)}\xi_t^{(i)}.}
The initialization and nonlinear-remainder bounds in Appendix~\ref{app:long-memory-last-iterate} hold at each of the fixed levels. In particular,
\eqs*{\max_{1\le i\le k}\frac{|K_{0,n}^{(i)}e_0^{(i)}|+\E|e_n^{(i)}-e_n^{\lin,(i)}|}{s_\gamma}\le C\left(s_\gamma+\max_{1\le i\le k}\frac{(1-\gamma\underline\tau_i)^n}{s_\gamma}\right)\to0.}
Markov's inequality and a finite union bound make all nonlinear remainders negligible jointly. The preceding displays and Slutsky's theorem prove \eqref{eq:joint-long-memory-limit}.
\end{proof}

\subsubsection{Proof of Corollary~\ref{cor:joint-long-memory-covariance}}

\begin{proof}
\textbf{Step 1: Covariance calculation.}
The fractional-Brownian covariance calculation in Appendix~\ref{app:long-memory-last-iterate} gives
\eqs*{\Cov(Y_i,Y_j)=\sigma_H^2H(2H-1)A_iA_j\int_0^\infty\!\int_0^\infty e^{-A_i s-A_j t}|s-t|^{2H-2}\,ds\,dt.}
For $a,b>0$, separate $s>t$ and $t>s$ and put $u=|s-t|$:
\eqs*{
    \int_0^\infty\!\int_0^\infty e^{-as-bt}|s-t|^{2H-2}\,ds\,dt
    &=\int_0^\infty\!\int_0^\infty e^{-(a+b)t-au}u^{2H-2}\,du\,dt\\
    &\quad+\int_0^\infty\!\int_0^\infty e^{-(a+b)s-bu}u^{2H-2}\,du\,ds\\
    &=\frac1{a+b}\int_0^\infty(e^{-au}+e^{-bu})u^{2H-2}\,du\\
    &=\frac{\Gamma(2H-1)}{a+b}(a^{1-2H}+b^{1-2H}).
}
Taking $a=A_i$, $b=A_j$ and using $(2H-1)\Gamma(2H-1)=\Gamma(2H)$ proves \eqref{eq:joint-long-memory-covariance}. On the diagonal it gives $\Sigma^H_{ii}=\sigma_H^2H\Gamma(2H)A_i^{2-2H}$, agreeing with Theorem~\ref{thm:long-memory-last-iterate}.

\paragraph{Step 2: Rank of the covariance.}
The representation \eqref{eq:fBM} can be written using the same two-sided Brownian motion $\mathbb B$ as
\eqs*{B_H(s)=\int_{\mathbb R}\left(\int_0^s(r-u)_+^{-\beta}\,dr\right)d\mathbb B(u),}
where $(r-u)_+^{-\beta}$ denotes $(r-u)^{-\beta}\1(r>u)$. Integration by parts gives $Y_i=A_i^2\int_0^\infty e^{-A_i s}B_H(s)\,ds$. Since
\eqs*{A_i^2\int_0^\infty e^{-A_i s}\|B_H(s)\|_2\,ds=\sigma_H A_i^2\int_0^\infty e^{-A_i s}s^H\,ds<\infty,}
stochastic Fubini applies to this representation. Interchanging the nonnegative deterministic integrals then gives
\eqs*{
    Y_i&=\int_{\mathbb R}\varphi_i(u)\,d\mathbb B(u),\\
    \varphi_i(u)&=A_i^2\int_0^\infty e^{-A_i s}\int_0^s(r-u)_+^{-\beta}\,dr\,ds\\
    &=A_i\int_{\max(0,u)}^\infty e^{-A_i r}(r-u)^{-\beta}\,dr.
}
For $u>0$, substitution $x=r-u$ yields
\eqs*{\varphi_i(u)=A_i e^{-A_i u}\int_0^\infty e^{-A_i x}x^{-\beta}\,dx=\Gamma(1-\beta)A_i^\beta e^{-A_i u}.}
For $-1\le u\le0$, $|\varphi_i(u)|\le\Gamma(1-\beta)A_i^\beta$, and for $u<-1$, $|\varphi_i(u)|\le|u|^{-\beta}$. Together with $2\beta>1$, these bounds also verify $\varphi_i\in L^2(\mathbb R)$.

Suppose first that the population curvatures are distinct. For any $d_1,\ldots,d_k\in\mathbb R$, Brownian isometry gives
\eqs*{\Var\left(\sum_{i=1}^k d_iY_i\right)=\int_{\mathbb R}\left(\sum_{i=1}^k d_i\varphi_i(u)\right)^2du.}
If this variance is zero, the continuous function $\sum_i d_i\Gamma(1-\beta)A_i^\beta e^{-A_i u}$ vanishes for every $u>0$. Differentiating at a fixed $u_0>0$ gives
\eqs*{\sum_{i=1}^k d_i\Gamma(1-\beta)A_i^\beta e^{-A_i u_0}(-A_i)^r=0,\qquad r=0,\ldots,k-1.}
The Vandermonde matrix $((-A_i)^r)_{0\le r\le k-1,\,1\le i\le k}$ is invertible when the $A_i$ are distinct. Hence every $d_i=0$, proving positive definiteness. Equal population curvatures give the same function $\varphi_i$ and the same $Y_i$. Grouping equal population curvatures leaves one linearly independent coordinate per distinct population curvature, proving the rank assertion.
\end{proof}

\subsubsection{Comparison with batch expectiles}
\label{app:joint-batch-comparison}

We prove \eqref{eq:joint-batch-reduction} and the joint batch limit stated in Section~\ref{subsec:joint-long-memory}.

\begin{proof}
Put $\Delta_{m,i}:=\textcolor{black}{\hat\theta}_{m,i}^{\rm batch}-\theta_i^*$. The empirical score is strongly increasing with slope at least $\underline\tau_i$, so its root satisfies
\eqs*{\underline\tau_i|\Delta_{m,i}|\le\left|\frac1m\sum_{t=1}^m\xi_t^{(i)}\right|=O_P(a_m).}
Consequently $\Delta_{m,i}=O_P(a_m)=o_P(1)$, since $H<1$. Expanding the empirical equation at $\theta_i^*$ gives
\eqs*{0=\frac1m\sum_{t=1}^m\xi_t^{(i)}+\Delta_{m,i}\frac1m\sum_{t=1}^m a_t^{(i)}+\bar R_{m,i},}
where the kink bound implies
\eqs*{\frac{|\bar R_{m,i}|}{a_m}\le |1-2\tau_i|\frac{|\Delta_{m,i}|}{a_m}\frac1m\sum_{t=1}^m\1(|X_t-\theta_i^*|\le|\Delta_{m,i}|).}
To control the empirical fraction, fix $\eta,\epsilon>0$. Then
\eqs*{
    \Prob\left(\frac1m\sum_{t=1}^m\1(|X_t-\theta_i^*|\le|\Delta_{m,i}|)>\epsilon\right)
    &\le\Prob(|\Delta_{m,i}|>\eta)
    +\Prob\left(\frac1m\sum_{t=1}^m\1(|X_t-\theta_i^*|\le\eta)>\epsilon\right).
}
The first term tends to zero. By ergodicity, the empirical average in the second term converges almost surely to $\Prob(|X_0-\theta_i^*|\le\eta)$. The innovation density and $c_0\ne0$ imply that $X_0$ has no atom at $\theta_i^*$, so this probability tends to zero as $\eta\downarrow0$. Choosing $\eta$ with this probability below $\epsilon$ proves that the empirical fraction with radius $|\Delta_{m,i}|$ is $o_P(1)$. Therefore $\bar R_{m,i}=o_P(a_m)$.

Apply \eqref{eq:joint-score-fclt} after including the level $\tau=1/2$, if necessary. At that level $\theta_{1/2}^*=0$, $g_{1/2}(-X_t)=-X_t/2$, and $A=1/2$. With $v(m)=m^H\ell(m)=ma_m$, the joint limit at $s=1$ gives
\eqs*{\left(\frac1{A_i v(m)}\sum_{t=1}^m\xi_t^{(i)},-\frac1{v(m)}\sum_{t=1}^mX_t\right)\Rightarrow(-B_H(1),-B_H(1)).}
The difference of the two coordinates converges in probability to zero. Therefore,
\eqs*{\frac1{mA_i}\sum_{t=1}^m(\xi_t^{(i)}+A_iX_t)=a_m\left(\frac1{A_i v(m)}\sum_{t=1}^m\xi_t^{(i)}+\frac1{v(m)}\sum_{t=1}^mX_t\right)=o_P(a_m).}
Also, $m^{-1}\sum_{t=1}^m a_t^{(i)}\asto A_i>0$. Solving the empirical equation gives
\eqs*{
    \Delta_{m,i}
    &=-\frac{m^{-1}\sum_{t=1}^m\xi_t^{(i)}+\bar R_{m,i}}{m^{-1}\sum_{t=1}^m a_t^{(i)}}\\
    &=-\frac1{mA_i}\sum_{t=1}^m\xi_t^{(i)}+o_P(a_m)\\
    &=\bar X_m-\frac1{mA_i}\sum_{t=1}^m(\xi_t^{(i)}+A_iX_t)+o_P(a_m)\\
    &=\bar X_m+o_P(a_m).
}
This holds jointly for finitely many levels, proving \eqref{eq:joint-batch-reduction} and the joint limit.
\end{proof}

\section{Supplementary Numerical Experiments}\label{app:supplementary-numerical}

This section gives the simulation settings, calibration procedures, and additional numerical summaries for Section~\ref{sec:experiment}. We use the four observation models defined there.

\subsection{Simulation setup}\label{app:numerical-setup}

The AR(1) process starts from its stationary $\N(0,4/3)$ distribution. For GARCH$(1,1)$ and the nonlinear AR process, we discard 2,000 preliminary observations before starting SGD. The ARFIMA$(0,0.25,0)$ observations are generated as stationary Gaussian paths by Davies--Harte circulant embedding of the autocovariance \citep{DaviesHarte1987Tests}. This construction uses the covariance of the infinite moving average directly and requires neither truncation of its coefficients nor a preliminary observation burn-in.

For each model, we use $R=30{,}000$ independent replicates and constant learning rates
\[
\gamma\in\{0.1,\,0.05,\,0.01,\,10^{-3},\,10^{-4}\}.
\]
Within each replicate, all learning rates use the same observations. The marginal experiments use $\tau=0.5,0.9$, with $\tau=0.9$ displayed in Figure~\ref{fig:kolmogorov}(a)--(d). The joint experiment runs the levels $\tau=0.1,0.5,0.9$ on each common ARFIMA path, and Figure~\ref{fig:joint-contrast}(e)--(f) displays the pair $(0.5,0.9)$.

Each recursion starts at $\theta_0=\theta_\tau^*$, using the calibrated target described below when it is not available analytically. All learning rates use the same number of SGD updates,
\[
n=\left\lceil\frac{2\log(1/\gamma_{\min})}{\gamma_{\min}\underline\tau_{\min}}\right\rceil=1{,}842{,}069,
\qquad \gamma_{\min}=10^{-4},\qquad \underline\tau_{\min}:=\min_i\min(\tau_i,1-\tau_i)=0.1.
\]
This choice satisfies the sufficient burn-in conditions of Theorems~\ref{thm:short-memory-main} and~\ref{thm:long-memory-last-iterate} throughout the learning-rate grid. The preliminary observation burn-in is separate from these $n$ SGD updates.

\subsection{Calibration of targets and normalization constants}\label{app:numerical-calibration}

For AR(1) and ARFIMA, the stationary marginal is Gaussian. Let $z_\tau$ be the standard normal expectile, and let $\Phi$ and $\varphi$ denote the standard normal distribution function and density. We use
\[
\theta_\tau^*=\sigma_X z_\tau,\qquad A=\tau+(1-2\tau)\Phi(z_\tau),
\]
with
\[
\sigma_X^2=\frac{1}{1-0.5^2}=\frac43\quad\text{for AR(1)},\qquad
\sigma_X^2=\frac{\Gamma(1-2d)}{\Gamma(1-d)^2}\quad\text{for ARFIMA},\qquad d=0.25.
\]
For AR(1), the score long-run variance $\Omega_g$ is available analytically at $\tau=0.5$. At $\tau=0.9$, it is estimated by averaging batch-means estimates from six independent paths of length $N=6\times10^6$, using batch length $\lfloor\sqrt N\rfloor$.

For GARCH and the nonlinear AR process, calibration uses 256 independent paths of length 800,000, separately from the SGD experiment. Conditional on $\F_{t-1}$, write $X_t\sim\N(\mu_t,\sigma_t^2)$. Here $\mu_t=0$ for GARCH, and $(\mu_t,\sigma_t)=(0.9|X_{t-1}|,1)$ for the nonlinear AR process. With $z_t(\theta)=(\theta-\mu_t)/\sigma_t$, the conditional score has the explicit form
\[
h_t(\theta):=\E[g_\tau(\theta-X_t)\mid\F_{t-1}]
=\bigl(\tau+(1-2\tau)\Phi(z_t(\theta))\bigr)(\theta-\mu_t)+(1-2\tau)\sigma_t\varphi(z_t(\theta)).
\]
We obtain the target by setting the average of $h_t(\theta)$ over paths and times to zero, and estimate $A$ by averaging $\tau+(1-2\tau)\Phi(z_t(\theta))$ at that target. The delta-method standard error of the target is the sample standard deviation of the 256 path averages of $h_t$, evaluated at the calibrated target, divided by $\sqrt{256}A$. We estimate $\Omega_g$ by averaging pathwise batch-means estimates of the observed scores with batch length 894. For GARCH at $\tau=0.5$, we use the analytic target and score long-run variance.

Let $s(\gamma)$ denote the normalization for the marginal Gaussian approximation. For the three short-memory models,
\[
s(\gamma)^2=\gamma V,\qquad V=\frac{\Omega_g}{2A}.
\]
For ARFIMA, $\ell\equiv1/\Gamma(d)$ and $H=d+1/2$, so Theorem~\ref{thm:long-memory-last-iterate} gives
\[
s(\gamma)^2=V\gamma^{1-2d},\qquad V=\Gamma(2d)A^{1-2d}c_X,\qquad c_X=\frac{\Gamma(1-2d)}{\Gamma(d)\Gamma(1-d)}.
\]
The ARFIMA normalization therefore uses analytic constants and does not require a score long-run variance estimate. Table~\ref{tab:numerical-calibration} reports the calibration values for the two levels used in the marginal summaries.

\begin{table}[h]
\caption{Targets and normalization constants. The s.e. column gives the Monte Carlo standard error of the calibrated target, with a dash for analytically determined targets. For short memory, $V=\Omega_g/(2A)$. For ARFIMA, $V$ is the coefficient of $\gamma^{1-2d}$ in $s(\gamma)^2$, and the score long-run variance is infinite.}
\label{tab:numerical-calibration}
\begin{center}
\begin{tabular}{lrrrrrr}
\textbf{Model} & \boldmath$\tau$ & \boldmath$\theta_\tau^*$ & \textbf{s.e.} & \boldmath$A$ & \boldmath$\Omega_g$ & \boldmath$V$ \\
\hline \\
AR(1) & 0.5 & 0 & --- & 0.50000 & 1.00000 & 1.00000 \\
AR(1) & 0.9 & 0.99488 & --- & 0.25556 & 0.32527 & 0.63638 \\
GARCH$(1,1)$ & 0.5 & 0 & --- & 0.50000 & 0.25000 & 0.25000 \\
GARCH$(1,1)$ & 0.9 & 0.85814 & $4.5\times10^{-5}$ & 0.25153 & 0.17481 & 0.34750 \\
Nonlinear AR & 0.5 & 1.64726 & $2.7\times10^{-4}$ & 0.50000 & 5.13856 & 5.13856 \\
Nonlinear AR & 0.9 & 3.11902 & $4.3\times10^{-4}$ & 0.23916 & 2.93354 & 6.13301 \\
ARFIMA$(0,0.25,0)$ & 0.5 & 0 & --- & 0.50000 & $\infty$ & 0.50000 \\
ARFIMA$(0,0.25,0)$ & 0.9 & 0.93606 & --- & 0.25556 & $\infty$ & 0.35747 \\
\end{tabular}
\end{center}
\end{table}

\subsection{Gaussian approximation and mean squared error}\label{app:numerical-marginal}

At each checkpoint, we compute the Kolmogorov distance used in Section~\ref{sec:experiment} from the $R$ standardized iterates. The 93 checkpoints are obtained by rounding 100 logarithmically spaced times between 1 and $n$ and removing duplicates. The shaded region in Figure~\ref{fig:kolmogorov}(a)--(d) extends up to the 95th percentile of the same statistic for $R=30{,}000$ independent standard normal draws, approximately $0.0078$. The reference is pointwise in time and excludes uncertainty in calibrated targets and normalization constants.

We also compute the empirical mean squared error of the last iterate,
\[
\widehat{\operatorname{MSE}}_\gamma=\frac1R\sum_{r=1}^R(\theta_n^{(r)}-\theta_\tau^*)^2.
\]
Table~\ref{tab:numerical-mse} reports its ratio to $s(\gamma)^2$. The fitted slope is the ordinary least-squares slope of $\log\widehat{\operatorname{MSE}}_\gamma$ on $\log\gamma$ over all five learning rates.

\begin{table}[h]
\caption{Empirical MSE divided by $s(\gamma)^2$ at the five learning rates, and fitted log--log slopes. Each MSE uses $R=30{,}000$ replicates at $n=1{,}842{,}069$. The final column gives the exponent of the theoretical variance scale.}
\label{tab:numerical-mse}
\begin{center}
\begin{tabular}{lrrrrrrrr}
& & \multicolumn{5}{c}{\boldmath$\widehat{\operatorname{MSE}}_\gamma/s(\gamma)^2$} & \multicolumn{2}{c}{\textbf{Slope}} \\
\textbf{Model} & \boldmath$\tau$ & \boldmath$0.1$ & \boldmath$0.05$ & \boldmath$0.01$ & \boldmath$10^{-3}$ & \boldmath$10^{-4}$ & \textbf{Fitted} & \textbf{Theory} \\
\hline \\
AR(1) & 0.5 & 0.951 & 0.965 & 0.981 & 0.993 & 0.996 & 0.994 & 1.0 \\
AR(1) & 0.9 & 0.997 & 1.012 & 1.011 & 1.017 & 1.022 & 0.997 & 1.0 \\
GARCH$(1,1)$ & 0.5 & 1.020 & 1.014 & 1.013 & 1.008 & 1.000 & 1.003 & 1.0 \\
GARCH$(1,1)$ & 0.9 & 0.899 & 0.945 & 0.994 & 0.998 & 1.013 & 0.985 & 1.0 \\
Nonlinear AR & 0.5 & 0.837 & 0.907 & 0.983 & 0.998 & 1.003 & 0.977 & 1.0 \\
Nonlinear AR & 0.9 & 0.736 & 0.845 & 0.966 & 1.019 & 1.013 & 0.958 & 1.0 \\
ARFIMA$(0,0.25,0)$ & 0.5 & 1.005 & 1.003 & 1.005 & 0.998 & 1.005 & 0.500 & 0.5 \\
ARFIMA$(0,0.25,0)$ & 0.9 & 1.077 & 1.057 & 1.027 & 1.014 & 0.992 & 0.511 & 0.5 \\
\end{tabular}
\end{center}
\end{table}

At $\gamma=10^{-4}$ and $\tau=0.9$, the nonlinear AR process has an MSE ratio of $1.013$, but its Kolmogorov distance $0.0151$ remains above the Gaussian reference threshold $0.0078$. Thus the second-moment comparison and the distributional comparison measure different aspects of the approximation.

For nonlinear AR at $\tau=0.9$ and $\gamma=10^{-4}$, a target error of one reported standard error gives a standardized shift $\delta=(4.3\times10^{-4})/\sqrt{10^{-4}\cdot6.13301}\simeq0.0174$. Even for a Gaussian shape, this yields $d_K(\N(\delta,1),\N(0,1))=2\Phi(|\delta|/2)-1\simeq0.0069$. This sensitivity calculation does not estimate the realized calibration error, but shows that calibration uncertainty may contribute to the discrepancy.

\subsection{Joint errors and comparison with batch expectiles}\label{app:numerical-joint}

We focus on $(\tau_i,\tau_j)=(0.5,0.9)$ because their population curvatures differ, with $(A_i,A_j)\approx(0.5,0.25556)$. Corollary~\ref{cor:joint-long-memory-covariance} therefore predicts a nondegenerate limiting SGD contrast for this pair.

We compute the batch expectiles of Section~\ref{subsec:joint-long-memory} from the last $m=\lceil1/\gamma\rceil$ observations of each path, and also use the longer windows $m=10^5,10^6$. The latter two windows have no corresponding SGD run. The normalizations are
\[
s_\gamma=\frac{\gamma^{1/2-d}}{\Gamma(d)},\qquad a_m=\frac{m^{d-1/2}}{\Gamma(d)},\qquad d=0.25.
\]
At level $\tau_i$, the marginal normalization satisfies $s(\gamma)^2=s_\gamma^2\Sigma^H_{ii}$ and includes the level-specific Gaussian variance, whereas $s_\gamma$ is common to all levels. For the matched windows $m=1/\gamma$, we have $a_m=s_\gamma$. The SGD iterate uses the full path through time $n$, while the batch estimate uses its final $m$ observations.

Figure~\ref{fig:joint-contrast}(e) displays the first 4,000 normalized error pairs among the 30,000 replicates at $\gamma=10^{-4}$. Its 50\% and 95\% contours are those of the theoretical centered Gaussian limit, with covariance from \eqref{eq:joint-long-memory-covariance},
\[
\Sigma^H\approx\begin{pmatrix}6.5725&5.3327\\5.3327&4.6989\end{pmatrix}.
\]
For Figure~\ref{fig:joint-contrast}(f), we compute the normalized centered contrasts
\begin{align*}
C_\gamma^{\rm SGD}&=\frac{(\theta_n^{(j)}-\theta_n^{(i)})-(\theta_j^*-\theta_i^*)}{s_\gamma},\\
C_m^{\rm batch}&=\frac{({\hat\theta}_{m,j}^{\rm batch}-{\hat\theta}_{m,i}^{\rm batch})-(\theta_j^*-\theta_i^*)}{a_m}.
\end{align*}
Each plotted width is the empirical 95th percentile minus the empirical 5th percentile of the relevant contrast across all 30,000 replicates. We obtain its error bar from the 2.5th and 97.5th percentiles of the widths in 300 bootstrap resamples of the replicates. Since all learning rates and batch windows use the same paths, the points along a curve are dependent. The error bars are pointwise 95\% bootstrap intervals for the widths.

The limiting SGD contrast variance and its central 90\% interval width are
\[
v_C=\Sigma^H_{ii}+\Sigma^H_{jj}-2\Sigma^H_{ij}\approx0.6061,\qquad
2\Phi^{-1}(0.95)\sqrt{v_C}\approx2.561.
\]
At $\gamma=10^{-4}$, the empirical SGD width is $2.591$, close to the limit $2.561$, while the batch width decreases from $1.283$ at $m=10^4$ to $0.475$ at $m=10^6$. These values support the positive limiting SGD contrast variance and the cancellation of the leading batch errors described in Section~\ref{subsec:joint-long-memory}.

\end{document}